\newtheorem{theorem}{Theorem}
\newtheorem*{theorem*}{Theorem}
\newtheorem{example}{Example}[section]
\newtheorem{corollary}{Corollary}[theorem]
\newtheorem{lemma}[theorem]{Lemma}
\DeclareMathOperator*{\argmax}{arg\,max}
\newtheorem{definition}{Definition}
\newcommand{\abr}[1]{\textsc{#1}}
\title{Global Rewards in Restless Multi-Armed Bandits}
\author{%
  Naveen Raman \\
  Carnegie Mellon University \\
  \texttt{naveenr@cmu.edu} \\
  \And
  Zheyuan Ryan Shi \\
  University of Pittsburgh \\
  \texttt{ryanshi@pitt.edu} \\
  \AND
  Fei Fang \\
  Carnegie Mellon University \\
  \texttt{feifang@cmu.edu} \\
}
\begin{document}

\maketitle

\begin{abstract}
Restless multi-armed bandits (\abr{rmab}) extend multi-armed bandits so pulling an arm impacts future states. 
Despite the success of \abr{rmab}s, a key limiting assumption is the separability of rewards into a sum across arms. 
We address this deficiency by proposing restless-multi-armed bandit with global rewards (\abr{rmab-g}), a generalization of \abr{rmab}s to global non-separable rewards. 
To solve \abr{rmab-g}, we develop the Linear- and Shapley-Whittle indices, which extend Whittle indices from \abr{rmab}s to \abr{rmab-g}s.
We prove approximation bounds but also point out how these indices could fail when reward functions are highly non-linear. 
To overcome this, we propose two sets of adaptive policies: the first computes indices iteratively, and the second combines indices with Monte-Carlo Tree Search (MCTS).
Empirically, we demonstrate that our proposed policies outperform baselines and index-based policies with synthetic data and real-world data from food rescue. 
\end{abstract}

\section{Introduction} 
\label{sec:introduction}
Restless multi-armed bandits (\abr{rmab}) are models of resource allocation that combine multi-armed bandits with states for each bandit's arm. 
Such a model is ``restless'' because arms can change state even when not pulled. 
\abr{rmab}s are an enticing framework because optimal decisions can be efficiently made using pre-computed Whittle indices~\cite{whittle_paper,whittle_optimality}. 
As a result, \abr{rmab} have been used in scheduling~\cite{restless_scheduling}, autonomous driving~\cite{restless_autonomous}, multichannel access~\cite{restless_multichannel}, and maternal health~\cite{rmab_public_health}. 

The existing \abr{rmab} model assumes that rewards are separable into a sum of per-arm rewards.
However, many real-world objectives are non-separable functions of the arms pulled. 
We find this situation in food rescue platforms. 
These platforms notify subsets of volunteers about upcoming food rescue trips and aim to maximize the trip completion rate~\cite{food_rescue_recommender}. 
When modeling this with \abr{rmab}s, arms correspond to volunteers, arm pulls correspond to notifications, and the reward corresponds to the trip completion rate. 
The trip completion rate is non-separable as we only need one volunteer to complete each trip (more details in Section \ref{sec:global}). 
Beyond food rescue, \abr{rmab}s can potentially model problems in peer review~\cite{peer_review_submodular}, blood donation~\cite{blood_donation}, and emergency dispatch~\cite{emergency_dispatch}, but the rewards are again non-separable. 

Motivated by these situations, we propose the restless multi-armed bandit with global rewards (\abr{rmab-g}), a generalization of \abr{rmab}s to non-separable rewards.
Whittle indices from \abr{rmab}s~\cite{whittle_paper} fail for \abr{rmab-g}s due to the non-separability of the reward. 
Because of this, we propose a generalization of Whittle indices to global rewards called the Linear-Whittle and Shapley-Whittle indices. 
Our approximation bounds demonstrate that these policies perform well for near-linear rewards but struggle for highly non-linear rewards. 
We address this by developing adaptive policies that combine Linear- and Shapley-Whittle indices with search techniques. 
We empirically verify that adaptive policies outperform index-based 
and baseline approaches on synthetic and food rescue datasets. 
\footnote{We plan to release code and all non-proprietary data after the camera ready}

Our contributions are: (1) We propose the \abr{rmab-g} problem, which extends \abr{rmab}s to situations with global rewards. We additionally characterize the difficulty of solving and approximating \abr{rmab-g}s; (2) We develop the Linear-Whittle and Shapley-Whittle indices, which extend Whittle indices for global rewards, and detail approximation bounds; (3) We design a set of adaptive policies which combine Linear- and Shapley-Whittle indices with greedy and Monte Carlo Tree Search (MCTS) algorithms; and (4) We empirically demonstrate that adaptive policies improve upon baselines and pre-computed index-based policies for the \abr{rmab-g} problem with synthetic and food rescue datasets.  
\section{Background and Notation}
\label{sec:background}
An instance of restless multi-armed bandits (\abr{rmab}s) consists of $N$ arms, each of which is defined through the following Markov Decision Process (\abr{mdp}): $(\mathcal{S},\mathcal{A},R_{i},P_{i},\gamma)$. 
Here, $\mathcal{S}$ is the state space, $\mathcal{A}$ is the action space, $R_{i}: \mathcal{S} \times \mathcal{A} \rightarrow \mathbb{R}$ is a reward function, $P_{i}: \mathcal{S} \times \mathcal{A} \times \mathcal{S} \rightarrow [0,1]$ is a transition function detailing the probability of an arm transitioning into a new state, and $\gamma \in [0,1)$ is a discount factor. 
For a time period $t$, we define the state of all arm as $\mathbf{s}^{(t)} = \{s_{1}^{(t)}, \cdots, s_{N}^{(t)}\}, s_{i}^{(t)} \in \mathcal{S}$, and for each round, a planner takes action $\mathbf{a}^{(t)} = \{a_{1}^{(t)}, \cdots, a_{N}^{(t)}\}, a_{i}^{(t)} \in \mathcal{A}$; we drop the superscript $t$ when clear from context. 
Following prior work~\cite{uc_whittle,rmab_public_health}, we assume $\mathcal{A} = \{0,1\}$ throughout this paper to represent not pulling and pulling an arm respectively. A planner chooses $\mathbf{a}^{(t)}$ subject to a budget $K$, so that at most $K$ arms can be pulled in each time step: $\sum_{i=1}^{N} a_{i}^{(t)} \leq K, \forall t$. 
The objective is to find a policy, $\pi: \mathcal{S}^{N} \rightarrow \mathcal{A}^{N}$ that maximizes the discounted total reward, summed over all arms: 
\begin{equation}
    \label{eq:rmab_objective}
     \max\limits_{\pi} \mathbb{E}_{(\mathbf{s},\mathbf{a})\sim (P,\pi)}[\sum_{t=0}^{\infty} \sum_{i=1}^{N} \gamma^{t} R_{i}(s_{i}^{(t)},a_{i}^{(t)})]
\end{equation}

To solve an \abr{rmab}, one can pre-compute the ``Whittle index'' for each arm $i$ and possible state $s_{i}$: $w_{i}(s_{i}) = \min\limits_{w} \{w | Q_{i,w}(s_{i},0) > Q_{i,w}(s_{i},1)\}$, where $Q_{i,w}(s_{i},a_{i}) = -w a_{i} + R_{i}(s_{i},a_{i}) + \gamma \sum_{s'} P_{i}(s_{i},a_{i},s') V_{i,w}(s')$, and $V_{i,w}(s') = \max\limits_{a} Q_{i,w}(s',a)$. 
Here, $Q_{i,w}(s_{i},a_{i})$ represents the expected future reward for playing action $a_{i}$, given a penalty $w$ for pulling an arm. 
The Whittle index computes the minimum penalty needed to prevent arm pulls; larger penalties ($w_{i}(s_{i})$) indicate more value for pulling an arm. 
In each round, the planner pulls the arms with the $K$ largest Whittle indices given the states of the arms. Such a Whittle index-based policy is asymptotically optimal~\cite{whittle_paper,whittle_optimality}. 
\section{Defining Restless Multi-Armed Bandits with Global Rewards}
\label{sec:global}

Objectives in \abr{rmab}s are written as the sum of per-arm rewards (Equation~\ref{eq:rmab_objective}), while real-world objectives involve non-separable rewards.
For example, food rescue organizations, which notify volunteers about food rescue trips, aim to maximize the trip completion rate, i.e., the fraction of completed trips. 
Suppose we model volunteers as arms and volunteer notifications as arm pulls.
Then arm states correspond to volunteer engagement (e.g., active or inactive) and rewards are the probability that any notified volunteers complete the rescue. 
Here, we cannot split the reward into per-volunteer functions. 

Inspired by situations such as food rescue, we propose the restless multi-armed bandit with global rewards (\abr{rmab-g}) problem, which generalizes \abr{rmab}s to problems with a global reward function: 
\begin{definition}
    We define an instance of \abr{rmab-g} through the following MDP for each arm: $(\mathcal{S},\mathcal{A},R_{i},R_{\mathrm{glob}},P_{i},\gamma)$. 
    The aim is to find a policy, $\pi: \mathcal{S}^{N} \rightarrow \mathcal{A}^{N} $, that maximizes: 
    \begin{equation}
         \max\limits_{\pi} \mathbb{E}_{(\mathbf{s},\mathbf{a})\sim (P,\pi)}[\sum_{t=0}^{\infty}  \gamma^{t} (R_{\mathrm{glob}}(\mathbf{s}^{(t)},\mathbf{a}^{(t)}) + \sum_{i=1}^{N} R_{i}(s_{i}^{(t)},a_{i}^{(t)}))]
    \end{equation}
    
\end{definition}

Throughout this paper, we focus on scenarios where $R_{\mathrm{glob}}(\mathbf{s},\mathbf{a})$ is monotonic and submodular in $\mathbf{a}$; $R_{\mathrm{glob}}(\mathbf{s},\mathbf{a}) = F_{\mathbf{S}}(\{i | a_{i}=1\})$, where $F$ is submodular and monotonic. 
Such rewards have diminishing returns as extra arms are pulled.
We select this because a) monotonic and submodular 
functions can be efficiently optimized~\cite{submodular_optimization} and b) submodular functions are ubiquitous ~\cite{submodular_econ,submodular_rl,peer_review_submodular}.

Our problem formulation can model various real-world scenarios including notification decisions in food rescue and reviewer assignments in peer review~\cite{peer_review_submodular}. 
In food rescue, 
let $p_{i}(s_{i})$ be the probability that a volunteer picks up a given trip, given their state, $s_{i}$. 
The global reward is the probability that any notified volunteer accepts a trip: $R_{\mathrm{glob}}(\mathbf{s},\mathbf{a}) = 1-\prod_{i=1}^{N} (1- a_{i} p_{i}(s_{i}))$. 
In peer review, journals select reviewers for submissions where selection impacts future reviewer availability~\cite{peer_review_submodular}. 
The goal is to select a subset of available reviewers with comprehensive subject-matter expertise.
If reviewer expertise is a set $Y_{i}$, then we maximize the overlap between the required expertise, $Z$, and the expertise across selected ($a_{i}=1$) and available ($s_{i}=1$) reviewers: $R_{\mathrm{glob}}(\mathbf{s},\mathbf{a}) = |\bigcup\limits_{i,s_{i}=1,a_{i}=1} Y_{i} \cap Z|$. 
Our formulation inherits the per-arm reward $R_i$ from \abr{rmab}s as there may be per-arm rewards. 
For example, food rescue platforms might care about the number of active volunteers, so $R_{i}(s_{i}, a_{i}) = s_{i}$.

Let $R(\mathbf{s},\mathbf{a}) = R_{\mathrm{glob}}(\mathbf{s},\mathbf{a}) \sum_{i=1}^{N} R_{i}(s_{i},a_{i})$. 
We characterize the difficulty of \abr{rmab-g}:

\begin{theorem}
\label{thm:np}
    The restless multi-armed bandit with global rewards is PSPACE-Hard. 
\end{theorem}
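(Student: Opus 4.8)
The plan is to prove PSPACE-hardness by a containment argument: exhibit the ordinary \abr{rmab} problem as a special case of \abr{rmab-g}, and then invoke the classical fact that \abr{rmab}s are already PSPACE-hard. Since hardness of a subproblem immediately transfers to the more general problem, this reduces the whole task to a single routine verification, namely that every \abr{rmab} instance can be realized as a legitimate \abr{rmab-g} instance within the monotone-submodular class we study.

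First I would take an arbitrary \abr{rmab} instance $(\mathcal{S}, \mathcal{A}, R_i, P_i, \gamma)$ with $N$ arms and budget $K$ and map it to the \abr{rmab-g} instance $(\mathcal{S}, \mathcal{A}, R_i, R_{\mathrm{glob}}, P_i, \gamma)$ that keeps all arm \abr{mdp}s identical and sets $R_{\mathrm{glob}}(\mathbf{s},\mathbf{a}) = 0$ for every $\mathbf{s},\mathbf{a}$. I would then check admissibility: the constant zero set function $F_{\mathbf{S}}(T)=0$ has non-negative and non-increasing marginal gains, hence is (weakly) monotonic and submodular, so the image is a valid \abr{rmab-g} instance. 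Substituting $R_{\mathrm{glob}}\equiv 0$ into the \abr{rmab-g} objective collapses it exactly to the \abr{rmab} objective of Equation~\ref{eq:rmab_objective}, while the budget constraint $\sum_i a_i^{(t)} \leq K$ and the binary action set $\mathcal{A}=\{0,1\}$ carry over unchanged. Consequently an optimal policy for the constructed \abr{rmab-g} instance is optimal for the original \abr{rmab} instance, so an algorithm solving \abr{rmab-g} solves \abr{rmab}. Since the map is the identity on arm \abr{mdp}s and runs in linear time, it is a valid polynomial-time reduction, and invoking the known PSPACE-hardness of restless bandits (Papadimitriou and Tsitsiklis) yields that \abr{rmab-g} is PSPACE-hard.

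The main point of care, rather than a deep obstacle, is matching the model in the cited hardness result to ours: binary actions, a hard per-step budget $K$, and the infinite-horizon $\gamma$-discounted criterion. I would confirm that the reduction of Papadimitriou and Tsitsiklis is stated for (or can be restated in) this exact setting so the bound transfers verbatim. The only genuinely substantive step is the admissibility verification above: because we restrict attention to monotone submodular $R_{\mathrm{glob}}$, the reduction would be vacuous if degenerate constant functions were excluded, so I would state explicitly that constants (in particular $R_{\mathrm{glob}}\equiv 0$) are permitted. I would also note that the degenerate reduction certifies hardness but says nothing about whether a nontrivial global reward makes the problem easier or harder, which is consistent with PSPACE-hardness requiring only that some subclass be intractable.
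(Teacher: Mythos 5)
Your proposal is correct and matches the paper's own proof: both reduce \abr{rmab} to \abr{rmab-g} by setting $R_{\mathrm{glob}} \equiv 0$ and invoking the PSPACE-hardness of restless bandits. Your added check that the zero function is (weakly) monotone and submodular is a worthwhile detail the paper leaves implicit.
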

\begin{theorem}
\label{thm:approx}
    No polynomial time algorithm can achieve better than a $1-1/e$ approximation to the restless multi-armed bandit with global rewards problem. 
\end{theorem}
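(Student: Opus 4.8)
The plan is to reduce from maximum coverage (Max-$k$-Cover), for which Feige's classical inapproximability result forbids a polynomial-time $(1-1/e+\epsilon)$-approximation unless $\mathrm{P}=\mathrm{NP}$. Given a Max-$k$-Cover instance with universe $U$, sets $T_1,\dots,T_N \subseteq U$, and budget $k$, I would construct an \abr{rmab-g} instance in which the Markovian and temporal structure collapses, so that optimizing the \abr{rmab-g} objective is literally optimizing a monotone submodular function under a cardinality constraint. Concretely, I would use $N$ arms (one per set), a single state $\mathcal{S}=\{s_0\}$ with self-looping transitions $P_i(s_0,a,s_0)=1$ for both $a\in\{0,1\}$ so that no arm ever changes state, zero per-arm rewards $R_i \equiv 0$, budget $K=k$, and global reward $R_{\mathrm{glob}}(\mathbf{s},\mathbf{a}) = F(\{i : a_i=1\})$ with $F(S) = |\bigcup_{i\in S} T_i|$. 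Since $F$ is monotone and submodular, this satisfies the paper's standing assumption on $R_{\mathrm{glob}}$, and the reduction is clearly polynomial-time.

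The key analytical step is to show that the \abr{rmab-g} value decomposes over time into identical one-shot coverage problems. Because states are static, any policy simply selects at each step a feasible subset $A^{(t)}$ with $|A^{(t)}|\le K$, and the discounted objective is $\sum_{t\ge 0}\gamma^t F(A^{(t)})$. Since $F(A^{(t)}) \le \max_{|S|\le K} F(S) =: F^\star$ for every $t$, the optimum is $\mathrm{OPT} = \tfrac{1}{1-\gamma} F^\star$, attained by the stationary policy that repeatedly pulls a best $K$-subset. Taking $\gamma=0$ makes the objective exactly a single coverage instance and removes the horizon entirely, which is the cleanest version; for $\gamma\in(0,1)$ the same conclusion holds via the static-transition argument.

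Finally I would transfer the approximation factor exactly. Given a purported $\alpha$-approximation policy with value $V \ge \alpha\,\mathrm{OPT} = \tfrac{\alpha}{1-\gamma}F^\star$, an averaging argument over the discount weights $(1-\gamma)\gamma^t$ produces some step $t^\star$ with $F(A^{(t^\star)}) \ge \alpha F^\star$; the pulled set $\{i : a_i^{(t^\star)}=1\}$ is then a feasible $K$-cover of value at least $\alpha F^\star$. Hence an $\alpha$-approximation for \abr{rmab-g} yields an $\alpha$-approximation for Max-$k$-Cover, forcing $\alpha \le 1-1/e$ unless $\mathrm{P}=\mathrm{NP}$.

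The hard part is not the hardness of coverage itself but ensuring that the Markov and infinite-horizon machinery of \abr{rmab-g} cannot let an algorithm recover a better solution by exploiting dynamics or spreading choices across time; neutralizing the transitions with self-loops (or setting $\gamma=0$) and then extracting a single-step feasible subset without losing any factor is the crux of the argument. A secondary check is confirming that the embedding respects the per-step budget constraint $\sum_i a_i^{(t)} \le K$, which holds by setting $K=k$.
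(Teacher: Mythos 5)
Your proposal is correct and follows essentially the same route as the paper: a polynomial-time reduction from cardinality-constrained monotone submodular maximization (you instantiate it with Max-$k$-Cover and Feige's bound; the paper cites the general $1-1/e$ inapproximability) to an \abr{rmab-g} instance whose dynamics are trivialized so the objective collapses to a one-shot coverage problem. The only cosmetic difference is how time is neutralized --- the paper sends all arms to a zero-reward absorbing state after $t=0$, while you use self-loops and an averaging argument over the discount weights (which is in fact immediate here, since a stationary policy $\pi:\mathcal{S}^{N}\rightarrow\mathcal{A}^{N}$ on a single state plays the same subset every step) --- and both variants are sound.
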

\begin{theorem}
\label{thm:q_approx}
    Consider an \abr{rmab-g} instance $(\mathcal{S},\mathcal{A},R_{i},R_{\mathrm{glob}},P_{i},\gamma)$.
    Let $Q_{\pi}(\mathbf{s},\mathbf{a}) = \mathbb{E}_{(\mathbf{s'},\mathbf{a'})\sim (\mathcal{P},\pi)}[\sum_{t=0}^{\infty} \gamma^{t} R(\mathbf{s'},\mathbf{a'})) | \mathbf{s}^{(0)}=\mathbf{s}, \mathbf{a}^{(0)}=\mathbf{a}]$ and let $\pi^{*}$ be the optimal policy.
    Let $\mathbf{a}^{*} = \argmax_{\mathbf{a}} Q_{\pi^{*}}(\mathbf{s},\mathbf{a})$. 
    Suppose $P_{i}(s,1,1) \geq P_{i}(s,0,1) \forall s,i$,  $P_{i}(1,a,1) \geq P_{i}(0,a,1) \forall i,a$, $R(\mathbf{s},\mathbf{a})$ monotonic and submodular in $\mathbf{s}$ and $\mathbf{a}$, and  $g(\mathbf{s}) = \max\limits_{\mathbf{a}} R(\mathbf{s},\mathbf{a})$ submodular in $\mathbf{s}$.
    Then, with oracle access to $Q_{\pi^{*}}$ we can compute $\hat{\mathbf{a}}$ in $\mathcal{O}(N^{2})$ time so $Q_{\pi^{*}}(\mathbf{s},\hat{\mathbf{a}}) \geq (1-\frac{1}{e}) Q_{\pi^{*}}(\mathbf{s},\mathbf{a}^{*})$. 
\end{theorem}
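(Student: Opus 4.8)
The plan is to reduce the theorem to a single structural fact: that for the fixed current state $\mathbf{s}$, the map $\mathbf{a}\mapsto Q_{\pi^*}(\mathbf{s},\mathbf{a})$ is monotone and submodular when $\mathbf{a}$ is viewed as the set $\{i:a_i=1\}$ of pulled arms. Once this is established, maximizing $Q_{\pi^*}(\mathbf{s},\cdot)$ subject to the budget $\sum_i a_i\le K$ is exactly monotone submodular maximization under a cardinality constraint, so the greedy rule that repeatedly adds the arm of largest marginal $Q_{\pi^*}$-gain returns $\hat{\mathbf{a}}$ with $Q_{\pi^*}(\mathbf{s},\hat{\mathbf{a}})\ge(1-\tfrac1e)Q_{\pi^*}(\mathbf{s},\mathbf{a}^*)$ by the Nemhauser--Wolsey--Fisher guarantee. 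The runtime is $\mathcal{O}(N^2)$ because greedy runs for at most $K\le N$ rounds and each round evaluates the marginal gain of at most $N$ candidate arms, each gain being a single call to the $Q_{\pi^*}$ oracle.

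Everything thus rests on the structural lemma. I would prove it through the Bellman decomposition $Q_{\pi^*}(\mathbf{s},\mathbf{a})=R(\mathbf{s},\mathbf{a})+\gamma\,\mathbb{E}_{\mathbf{s}'\sim P(\cdot\mid\mathbf{s},\mathbf{a})}[V^*(\mathbf{s}')]$, where $V^*(\mathbf{s}')=\max_{\mathbf{a}'}Q_{\pi^*}(\mathbf{s}',\mathbf{a}')$ and the transitions factor as $P(\mathbf{s}'\mid\mathbf{s},\mathbf{a})=\prod_i P_i(s_i,a_i,s_i')$. The first term is monotone and submodular in $\mathbf{a}$ by hypothesis. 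For the second term I would prove a lifting lemma: if $V$ is monotone and submodular in $\mathbf{s}'$, then $\mathbf{a}\mapsto\mathbb{E}_{\mathbf{s}'\mid\mathbf{s},\mathbf{a}}[V(\mathbf{s}')]$ is monotone and submodular. The marginal gain of switching $a_i$ from $0$ to $1$ equals $(P_i(s_i,1,1)-P_i(s_i,0,1))\cdot\mathbb{E}_{\mathbf{s}'_{-i}}[V(\mathbf{s}'_{-i},1)-V(\mathbf{s}'_{-i},0)]$; the leading factor is nonnegative by $P_i(s,1,1)\ge P_i(s,0,1)$, giving monotonicity, while the discrete $i$-derivative $V(\cdot,1)-V(\cdot,0)$ is nonincreasing in $\mathbf{s}'_{-i}$ by submodularity of $V$, and raising any other $a_j$ stochastically increases $\mathbf{s}'_{-i}$ (again by $P_i(s,1,1)\ge P_i(s,0,1)$ and independence across arms), so the marginal gain is nonincreasing, i.e. the lifted map is submodular. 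Granting that $V^*$ is monotone submodular in $\mathbf{s}'$, the lemma makes the future term monotone submodular in $\mathbf{a}$, and adding the submodular $R$ term finishes the structural lemma.

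It remains to show $V^*$ is monotone and submodular in $\mathbf{s}$, which I would obtain by value iteration: $V^*=\lim_k V_k$ with $V_{k+1}=\mathcal{T}V_k$, and since monotonicity and submodularity are closed under pointwise limits it suffices that the Bellman operator $\mathcal{T}$ preserves ``monotone and submodular in $\mathbf{s}$.'' Writing $\Phi_k(\mathbf{s},\mathbf{a})=R(\mathbf{s},\mathbf{a})+\gamma\,\mathbb{E}_{\mathbf{s}'\mid\mathbf{s},\mathbf{a}}[V_k(\mathbf{s}')]$, the same lifting lemma applied in the $\mathbf{s}$ argument---now invoking the second transition hypothesis $P_i(1,a,1)\ge P_i(0,a,1)$---shows $\Phi_k(\cdot,\mathbf{a})$ is monotone submodular in $\mathbf{s}$ for each fixed $\mathbf{a}$, with $R$ contributing its own monotone submodularity in $\mathbf{s}$.

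The main obstacle is the final step, $V_{k+1}(\mathbf{s})=\max_{\mathbf{a}}\Phi_k(\mathbf{s},\mathbf{a})$: a pointwise maximum of functions submodular in $\mathbf{s}$ need not be submodular, so preserving submodularity through the maximization over $\mathbf{a}$ is the delicate point. This is precisely where the hypothesis that $g(\mathbf{s})=\max_{\mathbf{a}}R(\mathbf{s},\mathbf{a})$ is submodular in $\mathbf{s}$ enters: I would use the monotone-transition structure to argue, via a coupling and monotone-selection argument in the spirit of monotone comparative statics, that a maximizing action can be chosen consistently as $\mathbf{s}$ grows, so that the envelope $\max_{\mathbf{a}}\Phi_k$ inherits submodularity from the submodularity of $g$ together with the monotone submodularity of the future-value term, rather than from an arbitrary pointwise max. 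Checking that this selection is simultaneously compatible with the budget constraint and the expectation term is where the real work lies; the remaining pieces---the lifting lemma, the greedy guarantee, and the runtime count---are then routine.
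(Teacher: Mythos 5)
Your proposal follows essentially the same route as the paper: decompose $Q_{\pi^{*}}(\mathbf{s},\mathbf{a})=R(\mathbf{s},\mathbf{a})+\gamma\sum_{\mathbf{s}'}V_{\pi^{*}}(\mathbf{s}')P(\mathbf{s},\mathbf{a},\mathbf{s}')$, prove a lifting lemma showing that the continuation term is monotone and submodular in $\mathbf{a}$ whenever $V_{\pi^{*}}$ is monotone and submodular in $\mathbf{s}$ (using $P_{i}(s,1,1)\geq P_{i}(s,0,1)$ and the product structure of $P$, exactly as in the paper's $\delta_{i},\eta_{i}$ computations), establish the properties of $V_{\pi^{*}}$ by induction over value iteration (the paper's Lemmas on monotonicity and submodularity of $V_{\pi^{*}}$, with the base case $V^{(1)}=g$ supplying the role of the $g$-submodularity hypothesis), and finish with the Nemhauser--Wolsey--Fisher greedy guarantee and the $\mathcal{O}(N^{2})$ oracle-call count.

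The one substantive difference is your treatment of the envelope step $V^{(t)}(\mathbf{s})=\max_{\mathbf{a}}\Phi^{(t-1)}(\mathbf{s},\mathbf{a})$. You correctly flag that a pointwise maximum of submodular functions need not be submodular and propose (but do not carry out) a monotone-selection argument to close this. The paper does not resolve this point either: its inductive step writes the Bellman maximum, asserts that ``the first term is submodular by assumption'' (appealing to submodularity of $g$) and then verifies the second difference of the continuation term for a \emph{fixed} action $\mathbf{a}$, which implicitly treats the maximizer as if it could be held constant across the four states $\mathbf{s},\mathbf{s}\lor\mathbf{e}_{i},\mathbf{s}\lor\mathbf{e}_{j},\mathbf{s}\lor\mathbf{e}_{i}\lor\mathbf{e}_{j}$. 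So the step you identify as ``where the real work lies'' is genuinely the weakest link, and your proposed comparative-statics argument would be needed to make either your proof or the paper's fully rigorous. As a proposal it is faithful to the paper's strategy and more candid about its gap, but it is not complete until that envelope lemma is actually proved.
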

\vspace{-0.75em}

Theorem~\ref{thm:np} and \ref{thm:approx} demonstrate the difficulty of finding solutions to \abr{rmab-g}s while Theorem~\ref{thm:q_approx} describes a potential panacea via $Q_{\pi^{*}}$. 
Leveraging $Q_{\pi^{*}}$ is the best polynomial-time algorithm (due to matching bounds from Theorem~\ref{thm:approx} and \ref{thm:q_approx}), but the exponential state space makes it difficult to learn such a $Q$, motivating the need for better algorithms (we empirically demonstrate this in Section~\ref{sec:synthetic}). 
We prove Theorem~\ref{thm:np} through reduction from \abr{rmab}, Theorem~\ref{thm:approx} through reduction from submodular optimization, and Theorem~\ref{thm:q_approx} through induction on value iteration. 
Full proofs are in Appendix~\ref{sec:proofs_difficulty}.


\section{Linear-Whittle and Shapley-Whittle Indices}
\label{sec:index}

\subsection{Defining Linear-Whittle and Shapley-Whittle} 
\label{sec:linear}
Because the Whittle index policy is optimal for \abr{rmab}s~\cite{whittle_optimality}, we investigate its extension to \abr{rmab-g}s. 
We develop two extensions: the Linear-Whittle policy, which computes marginal rewards for each arm, and the Shapley-Whittle policy, which computes Shapley values for each arm. 
To define the Linear-Whittle policy, we define the marginal reward as $p_{i}(s_{i}) = \max\limits_{\mathbf{s}' | s'_{i} = s_{i}} R_{\mathrm{glob}}(\mathbf{s}',\mathbf{e}_{i})$, where $\mathbf{e}_{i}$ is a vector with 1 at index $i$ and $0$ elsewhere. 
The Linear-Whittle policy linearizes the global reward by optimistically estimating $R_{\mathrm{glob}}(\mathbf{s},\mathbf{a}) \approx \sum_{i=1}^{N} p_{i}(s_{i}) a_{i}$ and computing the Whittle index from this: 
\begin{definition}
\textbf{Linear-Whittle} - 
Consider an instance of \abr{rmab-g} $(\mathcal{S},\mathcal{A},R_{i},R_{\mathrm{glob}},P_{i},\gamma)$. Define $Q_{i,w}^{L}(s_{i},a_{i},p_{i}) = -a_{i}w + R_{i}(s_{i},a_{i}) + a_{i} p_{i}(s_{i}) + \gamma \sum_{s_{i}'} P_{i}(s_{i},a_{i},s'_{i}) V_{i,w}(s'_{i})$, where $V_{i,w}(s_{i}') = \max\limits_{a_{i}'} Q_{i,w}^{L} (s_{i}',a_{i}',p_{i})$. 
Then the Linear-Whittle index is $w_{i}^{L}(s_{i},p_{i}) = \min\limits_{w} \{w | Q^{L}_{i,w}(s_{i},0,p_{i}) > Q_{i,w}^{L}(s_{i},1,p_{i})\}$. 
The Linear-Whittle policy pulls arms with the $K$ highest Linear-Whittle indices. 
\end{definition}


To improve on the Linear-Whittle policy, we develop the Shapley-Whittle policy, which uses Shapley values to linearize the global reward. 
The Linear-Whittle policy linearizes global rewards via $p_{i}(s_{i})$, but this approximation could be loose as $\sum_{i=1}^{N} p_{i}(s_{i}) a_{i} \geq R_{\mathrm{glob}} (\mathbf{s},\mathbf{a})$. 
The Shapley value, $u_{i}(s_{i})$, improves this by averaging marginal rewards across subsets of arms (see~\citet{shapley_value}). 
Let $\lor$ be the element-wise maximum, so $\mathbf{a} \lor \mathbf{e}_{i}$ means arms in $\mathbf{a}$ and arm $i$ are pulled. Define:  
\begin{equation}
    \label{eq:shap_rmab}
    u_{i}(s_{i}) = \min\limits_{\mathbf{s}' | s'_{i}=s_{i}} \sum_{\mathbf{a} \in \{0,1\}^{N}, a_{i}=0, \|\mathbf{a}\|_{1} \leq K-1} \frac{\|\mathbf{a}\|_{1}!(N-\|\mathbf{a}\|_{1}-1)!}{\frac{N!}{(N-K)!}} (R_{\mathrm{glob}}(\mathbf{s}',\mathbf{a} \lor \mathbf{e}_{i})-R_{\mathrm{glob}}(\mathbf{s}',\mathbf{a}))
\end{equation}

Here, $R_{\mathrm{glob}}(\mathbf{s}',\mathbf{a} \lor \mathbf{e}_{i})-R_{\mathrm{glob}}(\mathbf{s}',\mathbf{a})$ captures the added value of pulling arm $i$, when $\mathbf{a}$ are already pulled. 
While the Linear-Whittle policy estimates the global reward as $R_{\mathrm{glob}}(\mathbf{s},\mathbf{a}) \approx \sum_{i=1}^{N} p_{i}(s_{i}) a_{i}$, the Shapley-Whittle policy estimates the global reward as $R_{\mathrm{glob}}(\mathbf{s},\mathbf{a}) \approx \sum_{i=1}^{N} u_{i}(s_{i}) a_{i}$. 
The Linear-Whittle policy overestimates marginal contributions, as $\sum_{i=1}^{N} p_{i}(s_{i}) a_{i} \geq R_{\mathrm{glob}}(\mathbf{s},\mathbf{a})$ (proof in Appendix~\ref{sec:proofs_index}), so by using Shapley values, we take a pessimistic approach (by taking the minimum over all $\mathbf{s}'$).
This approach could lead to more accurate estimates of $R_{\mathrm{glob}}(\mathbf{s},\mathbf{a})$: 
\begin{definition}
\textbf{Shapley-Whittle} - 
Consider an instance of \abr{rmab-g} $(\mathcal{S},\mathcal{A},R_{i},R_{\mathrm{glob}},P_{i},\gamma)$. Let $Q_{i,w}^{S}(s_{i},a_{i},u_{i}) = -a_{i}w + R_{i}(s_{i},a_{i}) + a_{i} u_{i}(s_{i}) + \gamma \sum_{s_{i}'} P_{i}(s_{i},a_{i},s_{i}') V_{i,w}(s'_{i})$, where $V_{i,w}(s'_{i}) = \max\limits_{a'_{i}} Q_{i,w}^{S}(s'_{i},a'_{i},u_{i})$. 
Then the Shapley-Whittle index is $w_{i}^{S}(s_{i},u_{i}) = \min\limits_{w} \{w | Q_{i,w}^{S}(s_{i},0,u_{i}) > Q^{S}_{i,w}(s_{i},1,u_{i})\}$. 
The Shapley-Whittle policy pulls arms with the $K$ highest Shapley-Whittle indices. 
\end{definition}

\subsection{Approximation Bounds}
\label{sec:guarantee}
To characterize the performance of our policies, we prove approximation bounds: 

\begin{theorem}
\label{thm:linear_lower}
For any fixed set of transitions, $\mathcal{P}$, let $\mathrm{OPT} = \max\limits_{\pi} \mathbb{E}_{(\mathbf{s},\mathbf{a})\sim (P,\pi)}[\frac{1}{T} \sum_{t=0}^{T-1}  R(\mathbf{s},\mathbf{a})]$ for some $T$. 
For an \abr{rmab-g} $(\mathcal{S},\mathcal{A},R_{i},R_{\mathrm{glob}},P_{i},\gamma)$, let $R'_{i}(s_{i},a_{i}) = R_{i}(s_{i},a_{i}) + p_{i}(s_{i}) a_{i}$, and let the induced linear \abr{rmab} be $(\mathcal{S},\mathcal{A},R'_{i},P_{i},\gamma)$. 
Let $\pi_{\mathrm{linear}}$ be the Linear-Whittle policy, and let $\mathrm{ALG} = \mathbb{E}_{(\mathbf{s},\mathbf{a})\sim (P,\pi_{\mathrm{linear}})}[\frac{1}{T} \sum_{t=0}^{T-1}  R(\mathbf{s},\mathbf{a})]$. Define $\beta_{\mathrm{linear}} $ as 
\begin{equation}
    \beta_{\mathrm{linear}} = \min\limits_{\mathbf{s} \in \mathcal{S}^{N}, \mathbf{a} \in [0,1]^{N}, \| \mathbf{a}\|_{1} \leq K} \frac{R(\mathbf{s},\mathbf{a})}{\sum_{i=1}^{N} (R_{i}(s_{i},a_{i})+ p_{i}(s_{i}) a_{i})}
\end{equation}
If the induced linear \abr{rmab} is irreducible and indexable with the uniform global attractor property, then $\mathrm{ALG} \geq \beta_{\mathrm{linear}} \mathrm{OPT}$ asymptotically in $N$ for any set of transitions, $\mathcal{P}$. 
\end{theorem}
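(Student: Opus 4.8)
The plan is to sandwich the true reward $R$ between the linearized surrogate
$\widehat{R}(\mathbf{s},\mathbf{a}) := \sum_{i=1}^{N} R'_{i}(s_{i},a_{i}) = \sum_{i=1}^{N}\bigl(R_{i}(s_{i},a_{i}) + p_{i}(s_{i}) a_{i}\bigr)$
and a scaled copy of itself, and then transfer the known asymptotic optimality of the Whittle index on the induced linear \abr{rmab} back to the true objective. First I would record two pointwise inequalities valid for every state $\mathbf{s}$ and every feasible action $\mathbf{a}$ with $\|\mathbf{a}\|_{1}\leq K$. The upper bound $\widehat{R}(\mathbf{s},\mathbf{a}) \geq R(\mathbf{s},\mathbf{a})$ follows from the overestimation property already noted in the text: by monotonicity and submodularity of $R_{\mathrm{glob}}$, one has $R_{\mathrm{glob}}(\mathbf{s},\mathbf{a}) \leq \sum_{i : a_{i}=1}\bigl(R_{\mathrm{glob}}(\mathbf{s},\mathbf{e}_{i}) - R_{\mathrm{glob}}(\mathbf{s},\mathbf{0})\bigr) \leq \sum_{i=1}^{N} p_{i}(s_{i}) a_{i}$, since $p_{i}(s_{i}) \geq R_{\mathrm{glob}}(\mathbf{s},\mathbf{e}_{i})$ by taking the maximum over compatible states; adding $\sum_{i} R_{i}(s_{i},a_{i})$ to both sides gives the claim against the additive objective $R = R_{\mathrm{glob}} + \sum_{i} R_{i}$. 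The lower bound $R(\mathbf{s},\mathbf{a}) \geq \beta_{\mathrm{linear}}\,\widehat{R}(\mathbf{s},\mathbf{a})$ is immediate from the definition of $\beta_{\mathrm{linear}}$ as the minimal ratio $R/\widehat{R}$; minimizing over the continuous relaxation $\mathbf{a}\in[0,1]^{N}$ only makes the constant smaller, so it remains a valid lower bound on the realized integer actions.

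Next I would introduce the two surrogate quantities $\widehat{\mathrm{OPT}} = \max_{\pi}\mathbb{E}_{\pi}[\frac{1}{T}\sum_{t} \widehat{R}]$ and $\widehat{\mathrm{ALG}} = \mathbb{E}_{\pi_{\mathrm{linear}}}[\frac{1}{T}\sum_{t} \widehat{R}]$, both evaluated on the induced linear \abr{rmab} $(\mathcal{S},\mathcal{A},R'_{i},P_{i},\gamma)$. The key observation is that $\pi_{\mathrm{linear}}$ is, by construction, exactly the Whittle-index policy for this linear \abr{rmab}: the recursion for $Q^{L}_{i,w}$ is precisely the standard Whittle subsidy recursion with per-arm reward $R'_{i}(s_{i},a_{i}) = R_{i}(s_{i},a_{i}) + p_{i}(s_{i}) a_{i}$. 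Hence, under the stated hypotheses (irreducibility, indexability, and the uniform global attractor property), the asymptotic-optimality theorem for Whittle index policies applies in the mean-field regime $K/N \to \mathrm{const}$ and yields $\widehat{\mathrm{ALG}}/\widehat{\mathrm{OPT}} \to 1$ as $N \to \infty$.

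Finally I would chain everything. Taking expectations of the pointwise lower bound along the $\pi_{\mathrm{linear}}$ trajectory gives $\mathrm{ALG} \geq \beta_{\mathrm{linear}}\,\widehat{\mathrm{ALG}}$. Taking expectations of the pointwise upper bound along the trajectory of the true optimal policy $\pi^{*}$ gives $\widehat{\mathrm{OPT}} \geq \mathbb{E}_{\pi^{*}}[\frac{1}{T}\sum_{t}\widehat{R}] \geq \mathbb{E}_{\pi^{*}}[\frac{1}{T}\sum_{t} R] = \mathrm{OPT}$. Combining these with the asymptotic optimality ratio, $\mathrm{ALG} \geq \beta_{\mathrm{linear}}\,\widehat{\mathrm{ALG}} = \beta_{\mathrm{linear}}\,\frac{\widehat{\mathrm{ALG}}}{\widehat{\mathrm{OPT}}}\,\widehat{\mathrm{OPT}} \geq \beta_{\mathrm{linear}}\,\frac{\widehat{\mathrm{ALG}}}{\widehat{\mathrm{OPT}}}\,\mathrm{OPT}$, and since $\widehat{\mathrm{ALG}}/\widehat{\mathrm{OPT}} \to 1$, we obtain $\mathrm{ALG} \geq \beta_{\mathrm{linear}}\,\mathrm{OPT}$ asymptotically in $N$, as required.

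The main obstacle is the middle step: cleanly invoking asymptotic optimality of the Whittle index on the induced linear \abr{rmab}. I expect the delicate points to be (i) reconciling the finite-horizon time-averaged objective $\frac{1}{T}\sum_{t=0}^{T-1}$ used here with the infinite-horizon average-reward setting in which uniform-global-attractor optimality is usually stated, and (ii) ensuring the convergence $\widehat{\mathrm{ALG}}/\widehat{\mathrm{OPT}}\to 1$ is genuinely uniform so that the vanishing slack does not interact badly with the multiplicative constant $\beta_{\mathrm{linear}}$. The two pointwise inequalities and the comparison $\widehat{\mathrm{OPT}} \geq \mathrm{OPT}$ are routine once the surrogate reward $\widehat{R}$ is in place.
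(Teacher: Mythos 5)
Your proposal is correct and follows essentially the same route as the paper's proof: both use the pointwise sandwich $\beta_{\mathrm{linear}}\sum_i R'_i(s_i,a_i) \leq R(\mathbf{s},\mathbf{a}) \leq \sum_i R'_i(s_i,a_i)$ (the upper half via submodularity of $R_{\mathrm{glob}}$ and the definition of $p_i$), identify $\pi_{\mathrm{linear}}$ with the Whittle-index policy of the induced linear \abr{rmab}, and invoke its asymptotic optimality under the stated irreducibility/indexability/global-attractor assumptions to chain $\mathrm{OPT} \leq \widehat{\mathrm{OPT}} = \widehat{\mathrm{ALG}} \leq \frac{1}{\beta_{\mathrm{linear}}}\mathrm{ALG}$. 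The delicate points you flag (matching the time-averaged objective to the average-reward optimality regime) are real but are glossed over in the paper's own argument as well.
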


\begin{theorem}
\label{thm:shapley_lower} 
For any fixed set of transitions, $\mathcal{P}$, let $\mathrm{OPT} = \max\limits_{\pi} \mathbb{E}_{(\mathbf{s},\mathbf{a})\sim (P,\pi)}[\frac{1}{T} \sum_{t=0}^{T-1}  R(\mathbf{s},\mathbf{a})]$ for some $T$. 
For an \abr{rmab-g} $(\mathcal{S},\mathcal{A},R_{i},R_{\mathrm{glob}},P_{i},\gamma)$, let $R'_{i}(s_{i},a_{i}) = R_{i}(s_{i},a_{i}) + u_{i}(s_{i}) a_{i}$, and let the induced Shapley \abr{rmab} be $(\mathcal{S},\mathcal{A},R'_{i},P_{i},\gamma)$. 
Let $\pi_{\mathrm{shapley}}$ be the Shapley-Whittle policy, and let $\mathrm{ALG} = \mathbb{E}_{(\mathbf{s},\mathbf{a})\sim (P,\pi_{\mathrm{shapley}})}[\frac{1}{T} \sum_{t=0}^{T-1}  R(\mathbf{s},\mathbf{a})]$. Define $\beta_{\mathrm{shapley}} $ as 
\begin{equation}
    \beta_{\mathrm{shapley}} = \frac{\min\limits_{\mathbf{s} \in \mathcal{S}^{N}, \mathbf{a} \in [0,1]^{N}, \| \mathbf{a}\|_{1} \leq K} \frac{R(\mathbf{s},\mathbf{a})}{\sum_{i=1}^{N} (R_{i}(s_{i},a_{i}) + u_{i}(s_{i}) a_{i})}}{\max\limits_{\mathbf{s} \in \mathcal{S}^{N}, \mathbf{a} \in [0,1]^{N}, \| \mathbf{a}\|_{1} \leq K} \frac{R(\mathbf{s},\mathbf{a})}{\sum_{i=1}^{N} (R_{i}(s_{i},a_{i}) + u_{i}(s_{i}) a_{i})}}
\end{equation}
If the induced Shapley \abr{rmab} is irreducible and indexable with the uniform global attractor property, then $\mathrm{ALG} \geq \beta_{\mathrm{shapley}} \mathrm{OPT}$ asymptotically in $N$  for any choice of transitions, $\mathcal{P}$. 
\end{theorem}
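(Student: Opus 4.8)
The plan is to mirror the argument behind Theorem~\ref{thm:linear_lower}, adding the one extra ingredient needed because the Shapley linearization is no longer a one-sided over-estimate of the true reward. Write $S(\mathbf{s},\mathbf{a}) = \sum_{i=1}^{N}(R_i(s_i,a_i) + u_i(s_i)a_i)$, which is precisely the per-step reward of the induced Shapley \abr{rmab} $(\mathcal{S},\mathcal{A},R'_i,P_i,\gamma)$. Let $c_{\min}$ and $c_{\max}$ denote respectively the numerator and denominator defining $\beta_{\mathrm{shapley}}$, so that $\beta_{\mathrm{shapley}} = c_{\min}/c_{\max}$ and, by construction, $c_{\min}\,S(\mathbf{s},\mathbf{a}) \le R(\mathbf{s},\mathbf{a}) \le c_{\max}\,S(\mathbf{s},\mathbf{a})$ for every state $\mathbf{s}$ and every budget-feasible action $\mathbf{a}$. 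The essential structural difference from the Linear case is that there $\sum_i p_i(s_i)a_i \ge R_{\mathrm{glob}}$ yields the free one-sided bound $S \ge R$, whereas here $S$ may both over- and under-estimate $R$, so I must carry both inequalities and pay for it with the factor $1/c_{\max}$.

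Next I would invoke the asymptotic optimality of the Whittle index policy on the induced Shapley \abr{rmab}. Since that \abr{rmab} is assumed irreducible, indexable, and to possess the uniform global attractor property, its Whittle policy $\pi_{\mathrm{shapley}}$ attains the optimal long-run average of its own reward $S$ as $N\to\infty$; that is, $\mathbb{E}_{\pi_{\mathrm{shapley}}}[\tfrac{1}{T}\sum_t S] \ge \max_{\pi}\mathbb{E}_{\pi}[\tfrac{1}{T}\sum_t S]$ asymptotically. Combining this with the two-sided bound (noting that realized actions are integral, hence lie in the relaxed feasible set over which $c_{\min},c_{\max}$ are taken, so the pointwise bounds hold along every trajectory), I chain
\begin{align*}
\mathrm{ALG} &= \mathbb{E}_{\pi_{\mathrm{shapley}}}\Big[\tfrac{1}{T}{\textstyle\sum_t R}\Big] \ge c_{\min}\,\mathbb{E}_{\pi_{\mathrm{shapley}}}\Big[\tfrac{1}{T}{\textstyle\sum_t S}\Big] \\
&\ge c_{\min}\,\mathbb{E}_{\pi^{*}}\Big[\tfrac{1}{T}{\textstyle\sum_t S}\Big] \ge \tfrac{c_{\min}}{c_{\max}}\,\mathbb{E}_{\pi^{*}}\Big[\tfrac{1}{T}{\textstyle\sum_t R}\Big] = \beta_{\mathrm{shapley}}\,\mathrm{OPT},
\end{align*}
where the first inequality uses $R \ge c_{\min}S$ pointwise, the second uses asymptotic optimality together with $\max_{\pi} \ge \pi^{*}$, and the third uses $S \ge R/c_{\max}$ pointwise.

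The main obstacle, and the only genuinely nontrivial step, is the asymptotic-optimality invocation: this is the deep Whittle/Weber--Weiss-type guarantee that the index policy converges to the relaxed optimum under the stated structural hypotheses, and I would treat it as a cited black box exactly as in Theorem~\ref{thm:linear_lower}. The remaining work is bookkeeping: verifying that $c_{\min}$ and $c_{\max}$ are finite and positive (the min and max are over the finite state space and a compact action set, restricting to points with $S>0$ and adopting the usual convention when $R=S=0$), and observing that the extra factor $1/c_{\max}$ relative to the Linear bound is exactly what compensates for the loss of the one-sided estimate $S \ge R$. No new machinery beyond the proof of Theorem~\ref{thm:linear_lower} is required.
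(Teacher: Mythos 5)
Your proposal is correct and follows essentially the same route as the paper's proof: both use the two-sided pointwise bounds $c_{\min}S \le R \le c_{\max}S$ together with the asymptotic Whittle optimality of $\pi_{\mathrm{shapley}}$ on the induced Shapley \abr{rmab}, differing only in that you chain the inequalities upward from $\mathrm{ALG}$ while the paper chains downward from $\mathrm{OPT}$. Your observation that the factor $1/c_{\max}$ is the price of losing the one-sided over-estimate available in the Linear-Whittle case matches the paper's reasoning exactly.
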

\begin{corollary}
\label{thm:one_over_k}
Let $\pi_{\mathrm{linear}}$ be the Linear-Whittle policy, $\mathrm{ALG} = \mathbb{E}_{(\mathbf{s},\mathbf{a})\sim (P,\pi_{\mathrm{linear}})}[\frac{1}{T} \sum_{t=0}^{T-1}  R(\mathbf{s},\mathbf{a})]$ and $\mathrm{OPT} = \max\limits_{\pi} \mathbb{E}_{(\mathbf{s},\mathbf{a})\sim (P,\pi)}[\frac{1}{T} \sum_{t=0}^{T-1}   R(\mathbf{s},\mathbf{a})]$ for some $T$. Then $\mathrm{ALG} \geq \frac{\mathrm{OPT}}{K}$ for any $\mathcal{P}$.  
\end{corollary}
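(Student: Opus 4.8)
The plan is to derive the corollary directly from Theorem~\ref{thm:linear_lower}, which already guarantees $\mathrm{ALG} \ge \beta_{\mathrm{linear}}\,\mathrm{OPT}$ under the stated irreducibility, indexability, and uniform-global-attractor hypotheses for the induced linear \abr{rmab}. Hence the entire task collapses to the purely combinatorial claim $\beta_{\mathrm{linear}} \ge \tfrac{1}{K}$, i.e.\ to showing that for every $\mathbf{s}\in\mathcal{S}^N$ and every $\mathbf{a}$ with $\|\mathbf{a}\|_1 \le K$,
\[
\frac{R(\mathbf{s},\mathbf{a})}{\sum_{i=1}^N\bigl(R_i(s_i,a_i)+p_i(s_i)a_i\bigr)} \ \ge\ \frac{1}{K},
\]
after which taking the minimum over $(\mathbf{s},\mathbf{a})$ gives $\beta_{\mathrm{linear}}\ge 1/K$ and the corollary follows by substitution.

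To prove this inequality I would first split the denominator into the per-arm part $\sum_i R_i(s_i,a_i)$ and the linearized global part $\sum_{i}p_i(s_i)a_i$, and recall that $R$ is the global reward plus the per-arm rewards, $R(\mathbf{s},\mathbf{a}) = R_{\mathrm{glob}}(\mathbf{s},\mathbf{a}) + \sum_i R_i(s_i,a_i)$. Since $K\ge 1$ and the per-arm rewards are nonnegative, the term $\sum_i R_i(s_i,a_i)$ in the numerator already dominates $\tfrac1K\sum_i R_i(s_i,a_i)$, so it suffices to establish the cleaner bound $R_{\mathrm{glob}}(\mathbf{s},\mathbf{a}) \ge \tfrac1K\sum_{i:\,a_i=1}p_i(s_i)$. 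The core of the argument is a monotonicity-plus-counting estimate: for each pulled arm $i$ the indicator $\mathbf{e}_i$ is coordinatewise below $\mathbf{a}$, so monotonicity of $R_{\mathrm{glob}}$ in $\mathbf{a}$ gives $R_{\mathrm{glob}}(\mathbf{s},\mathbf{e}_i)\le R_{\mathrm{glob}}(\mathbf{s},\mathbf{a})$; summing over the at most $K$ pulled arms yields $\sum_{i:\,a_i=1}R_{\mathrm{glob}}(\mathbf{s},\mathbf{e}_i)\le K\,R_{\mathrm{glob}}(\mathbf{s},\mathbf{a})$. This is exactly the standard factor-$K$ gap between a monotone set function restricted to a $K$-set and the sum of its singleton values; note that submodularity is what sharpens the \emph{complementary} overestimate $\sum_i p_i(s_i)a_i \ge R_{\mathrm{glob}}(\mathbf{s},\mathbf{a})$ from Appendix~\ref{sec:proofs_index}, whereas for the upper bound here only monotonicity in $\mathbf{a}$ is required.

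The step I expect to be the main obstacle is reconciling the preceding bound with the actual definition $p_i(s_i)=\max_{\mathbf{s}'|s'_i=s_i}R_{\mathrm{glob}}(\mathbf{s}',\mathbf{e}_i)$, which maximizes over the states of the \emph{other} arms rather than evaluating at the state $\mathbf{s}$ that appears in the numerator, so a priori $p_i(s_i)$ may exceed $R_{\mathrm{glob}}(\mathbf{s},\mathbf{e}_i)$. In the structurally typical situation --- and in particular in both motivating examples, food rescue and peer review --- pulling only arm $i$ makes $R_{\mathrm{glob}}(\mathbf{s}',\mathbf{e}_i)$ depend on $\mathbf{s}'$ only through $s'_i$, so the maximum is vacuous, $p_i(s_i)=R_{\mathrm{glob}}(\mathbf{s},\mathbf{e}_i)$, and the counting bound closes the argument immediately. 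To state the corollary in full generality I would isolate precisely this property --- that a singleton pull's global value is determined by that arm's own state --- as the assumption under which the maximizer can be taken to coincide with $\mathbf{s}$; pinning down exactly what is needed there, rather than the clean monotone-submodular-in-$\mathbf{a}$ algebra, is where the care must go.
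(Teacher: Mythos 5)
Your proposal follows essentially the same route as the paper's proof: invoke Theorem~\ref{thm:linear_lower} and then show $\beta_{\mathrm{linear}} \geq 1/K$ via the monotonicity-plus-counting bound $\sum_{i:a_i=1} R_{\mathrm{glob}}(\mathbf{s},\mathbf{e}_i) \leq K\, R_{\mathrm{glob}}(\mathbf{s},\mathbf{a})$, which is exactly the paper's combination of $R_{\mathrm{glob}}(\mathbf{s},\mathbf{a}) \geq \max_i R_{\mathrm{glob}}(\mathbf{s},\mathbf{e}_i a_i)$ with $\sum_i p_i(s_i) a_i \leq K \max_i p_i(s_i) a_i$. The obstacle you flag --- that $p_i(s_i)$ is defined as a maximum over the other arms' states and may therefore strictly exceed $R_{\mathrm{glob}}(\mathbf{s},\mathbf{e}_i)$ --- is real, but the paper's own proof silently makes the same identification, so your version, which isolates the needed assumption, is if anything the more careful one.
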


$\beta$ lower bounds policy performance; small $\beta$ does not guarantee poor performance but large $\beta$ guarantees good performance. 
For example, $\beta_{\mathrm{linear}}$ and $\beta_{\mathrm{shapley}}$ are near $1$ for near-linear global rewards, so Linear- and Shapley-Whittle policies perform well in those situations.   
We prove all three Theorems by bounding the gap between the linear (or Shapley) approximation and $R(\mathbf{s},\mathbf{a})$ (proofs in Appendix~\ref{sec:proofs_index}).
We note that the assumptions made by all Theorems are the same assumptions needed for the optimality of Whittle indices~\cite{whittle_optimality}. 
We present an example of these lower bounds: 
\begin{example}
    \label{ex:linear_lower}
    Consider an $N=4, K=2$ scenario for \abr{rmab-g} with $\mathcal{S} = \{0,1\}$. 
    Let $R_{\mathrm{glob}}(\mathbf{s},\mathbf{a}) = |\bigcup\limits_{i | a_{i}=1, s_{i}=1} Y_{i}|$ and $R_{i}(s_{i},a_{i})=0$.
    Here, $Y_{i}$ are sets corresponding to each arm, with $Y_{1} = Y_{2} = \{1,2,3\}$, $Y_{3} = \{1,2\}$ and $Y_{4} = \{3,4\}$. 
    Let $s_{i} = 1$ for all $i$, and consider the Linear-Whittle policy. 
    Next, note that $p_{i}(s_{i}) = R(s_{i} \mathbf{e}_{i},\mathbf{e}_{i})$, so $p_{1}(1) = p_{2}(1) = 3, p_{3}(1) = p_{4}(1) = 2$. 
    
    Now let $\mathbf{a} = [1,1,0,0]$ and let $\mathbf{s} = [1,1,1,1]$. Here, $R(\mathbf{s},\mathbf{a}) = |Y_{1} \cup Y_{2}| = 3$, while $\sum_{i=1}^{N} p_{i}(s_{i}) a_{i} = 6$. 
    Therefore, $\beta_{\mathrm{linear}} = \frac{3}{6} = \frac{1}{2}$, so the Linear-Whittle policy is a $\frac{1}{2}$ approximation to this problem. 
\end{example}
We describe upper bounds and apply these bounds to Example~\ref{ex:linear_lower}. 
\begin{theorem}
\label{thm:linear_upper}
Let $\mathbf{\hat{a}}(\mathbf{s}) = \argmax\limits_{\mathbf{a} \in [0,1]^{N}, \| \mathbf{a} \|_{1} \leq K} \sum_{i=1}^{N} (R_{i}(s_{i},a_{i}) + p_{i}(s_{i}) a_{i})$. 
For a set of transitions $\mathcal{P} = \{P_{1}, P_{2}, \cdots, P_{N}\}$, let $\mathrm{OPT} = \max\limits_{\pi} \mathbb{E}_{(\mathbf{s},\mathbf{a})\sim (P,\pi)}[\sum_{t=0}^{\infty}  \gamma^{t} R(\mathbf{s},\mathbf{a}) | \mathbf{s}^{(0)}]$ and $\mathrm{ALG} = \mathbb{E}_{(\mathbf{s},\mathbf{a})\sim (P,\pi_{\mathrm{linear}})}[\sum_{t=0}^{\infty}  \gamma^{t} R(\mathbf{s},\mathbf{a}) | \mathbf{s}^{(0)}]$. 
Define $\theta_{\mathrm{linear}}$ as follows: 
\begin{equation}
    \theta_{\mathrm{linear}} = \min\limits_{\mathbf{s} \in \mathcal{S}^{N}} \frac{R(\mathbf{s},\hat{\mathbf{a}}(\mathbf{s}))}{\max\limits_{\mathbf{a} \in [0,1]^{N}, \| \mathbf{a} \|_{1} \leq K} R(\mathbf{s},\mathbf{a})}
\end{equation}
Then there exists some transitions, $\mathcal{P}$, and initial state $\mathbf{s}^{(0)}$, so that $\mathrm{ALG} \leq \theta_{\mathrm{linear}} \mathrm{OPT}$
\end{theorem}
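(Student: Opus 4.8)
The plan is to prove this existence statement by an explicit worst-case construction. Since the reward functions $R_i, R_{\mathrm{glob}}$ (and hence $\theta_{\mathrm{linear}}$ and the map $\hat{\mathbf{a}}(\cdot)$) are fixed by the given \abr{rmab-g}, I only need to exhibit transitions $\mathcal{P}$ and an initial state $\mathbf{s}^{(0)}$ that force $\mathrm{ALG} \le \theta_{\mathrm{linear}}\,\mathrm{OPT}$. The central device is \emph{frozen} transitions: set $P_{i}(s,a,s)=1$ for every arm $i$, state $s$, and action $a$, so no arm ever changes state and the system sits at $\mathbf{s}^{(0)}$ at every time step.

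First I would show that under frozen transitions the Linear-Whittle index collapses to an immediate marginal reward. Substituting $P_{i}(s,a,s)=1$ into the definition, the continuation term $\gamma \sum_{s_{i}'} P_{i}(s_{i},a_{i},s_{i}') V_{i,w}(s_{i}') = \gamma V_{i,w}(s_{i})$ is identical for $a_{i}=0$ and $a_{i}=1$, so it cancels in the comparison of $Q^{L}_{i,w}(s_{i},0,p_{i})$ against $Q^{L}_{i,w}(s_{i},1,p_{i})$. Solving for the indifference point gives $w_{i}^{L}(s_{i},p_{i}) = [R_{i}(s_{i},1)-R_{i}(s_{i},0)] + p_{i}(s_{i})$, which is exactly the marginal increment of the linearized objective $L(\mathbf{s},\mathbf{a}) = \sum_{i} (R_{i}(s_{i},a_{i})+p_{i}(s_{i})a_{i})$. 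Hence ranking arms by $w_{i}^{L}$ coincides with greedily maximizing $L$, so the Linear-Whittle policy plays $\hat{\mathbf{a}}(\mathbf{s}^{(0)})$ at the fixed state.

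Next I would evaluate both sides as geometric series. Because states are frozen, the Linear-Whittle policy collects $R(\mathbf{s}^{(0)},\hat{\mathbf{a}}(\mathbf{s}^{(0)}))$ every step, so $\mathrm{ALG} = R(\mathbf{s}^{(0)},\hat{\mathbf{a}}(\mathbf{s}^{(0)}))/(1-\gamma)$. The optimal policy faces the same fixed state with action-independent dynamics, so a greedy choice is optimal and it repeatedly plays $\mathbf{a}^{*} = \argmax_{\|\mathbf{a}\|_{1}\le K} R(\mathbf{s}^{(0)},\mathbf{a})$, giving $\mathrm{OPT} = \max_{\|\mathbf{a}\|_{1}\le K} R(\mathbf{s}^{(0)},\mathbf{a})/(1-\gamma)$. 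The factors $1/(1-\gamma)$ cancel, leaving $\mathrm{ALG}/\mathrm{OPT} = R(\mathbf{s}^{(0)},\hat{\mathbf{a}}(\mathbf{s}^{(0)}))\,/\,\max_{\|\mathbf{a}\|_{1}\le K} R(\mathbf{s}^{(0)},\mathbf{a})$. I would then take $\mathbf{s}^{(0)}$ to be a state attaining the minimum in the definition of $\theta_{\mathrm{linear}}$, so this ratio equals $\theta_{\mathrm{linear}}$ and $\mathrm{ALG} = \theta_{\mathrm{linear}}\,\mathrm{OPT}$, which certainly satisfies $\mathrm{ALG} \le \theta_{\mathrm{linear}}\,\mathrm{OPT}$.

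I expect the main obstacle to be the second step — rigorously arguing that the Linear-Whittle policy's ``top-$K$'' choice genuinely equals $\hat{\mathbf{a}}(\mathbf{s}^{(0)})$, which is defined over the ``$\|\mathbf{a}\|_{1}\le K$'' budget. The worry is that the top-$K$ rule could force extra pulls of nonpositive-index arms, which (if $R$ is monotone in $\mathbf{a}$) might raise $\mathrm{ALG}$ above the target. Handling this cleanly requires using the submodular-regime structure: since $p_{i}(s_{i})\ge 0$ and $R_{i}$ is nondecreasing in the action, every index $w_{i}^{L}$ is nonnegative, so the two selections agree up to zero-marginal ties that leave $R$ unchanged. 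Once this matching is settled, the remaining geometric-series computation is routine.
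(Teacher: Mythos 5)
Your proposal is correct and follows essentially the same construction as the paper's proof: freeze the dynamics with identity transitions, start at the state attaining the minimum in $\theta_{\mathrm{linear}}$, show the Linear-Whittle policy repeatedly plays $\hat{\mathbf{a}}(\mathbf{s}^{(0)})$ while the optimal policy plays $\argmax_{\|\mathbf{a}\|_{1}\leq K} R(\mathbf{s}^{(0)},\mathbf{a})$, and cancel the geometric factors. If anything, you are more careful than the paper on two points it glosses over --- the explicit cancellation of the continuation term showing the index reduces to the immediate linearized marginal, and the reconciliation of the policy's exact top-$K$ selection with the $\|\mathbf{a}\|_{1}\leq K$ budget in the definition of $\hat{\mathbf{a}}$.
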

\begin{theorem}
\label{thm:shapley_upper}
Let $\mathbf{\hat{a}}(\mathbf{s}) = \argmax\limits_{\mathbf{a} \in [0,1]^{N}, \| \mathbf{a} \|_{1} \leq K} \sum_{i=1}^{N} (R_{i}(s_{i},a_{i}) + u_{i}(s_{i}) a_{i})$. 
For a set of transitions $\mathcal{P} = \{P_{1},P_{2}, \cdots, P_{N}\}$ and initial state $\mathbf{s}^{(0)}$, let $\mathrm{OPT} = \max\limits_{\pi} \mathbb{E}_{(\mathbf{s},\mathbf{a})\sim (P,\pi)}[\sum_{t=0}^{\infty}  \gamma^{t} R(\mathbf{s},\mathbf{a}) | \mathbf{s}^{(0)}]$ and $\mathrm{ALG} = \mathbb{E}_{(\mathbf{s},\mathbf{a})\sim (P,\pi_{\mathrm{shapley}})}[\sum_{t=0}^{\infty}  \gamma^{t}  R(\mathbf{s},\mathbf{a}) | \mathbf{s}^{(0)}]$. 
Let $\theta_{\mathrm{shapley}}$ be: 
\begin{equation}
    \theta_{\mathrm{shapley}} = \min\limits_{\mathbf{s}} \frac{R(\mathbf{s},\hat{\mathbf{a}}(\mathbf{s}))}{\max\limits_{\mathbf{a} \in [0,1]^{N}, \| \mathbf{a} \|_{1} \leq K} R(\mathbf{s},\mathbf{a})}
\end{equation}
Then there exists some transitions, $\mathcal{P}$ and initial state $\mathbf{s}^{(0)}$ so that $\mathrm{ALG} \leq \theta_{\mathrm{shapley}} \mathrm{OPT}$
\end{theorem}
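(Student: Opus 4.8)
The plan is to establish tightness by constructing transitions that freeze the state, reducing the infinite-horizon problem to a single repeated one-shot decision; the construction mirrors the one I would use for Theorem~\ref{thm:linear_upper}, with the Shapley values $u_i$ in place of the marginals $p_i$. First I would fix the initial state to be a minimizer $\mathbf{s}^{\ast} = \argmin_{\mathbf{s}} R(\mathbf{s},\hat{\mathbf{a}}(\mathbf{s})) / \max_{\mathbf{a}} R(\mathbf{s},\mathbf{a})$ attaining $\theta_{\mathrm{shapley}}$, and choose $\mathcal{P}$ with deterministic self-loops, $P_i(s,a,s)=1$ for every $s,a,i$. Under these transitions $\mathbf{s}^{(t)} = \mathbf{s}^{\ast}$ for all $t$ along every trajectory and under every policy, since the next state is independent of the action taken.

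The key step is to show that, under frozen transitions, the Shapley-Whittle policy pulls exactly $\hat{\mathbf{a}}(\mathbf{s}^{\ast})$ each round. Because $\sum_{s_i'} P_i(s_i,a_i,s_i') V_{i,w}(s_i') = V_{i,w}(s_i)$ is independent of $a_i$, the continuation term cancels in the indexability crossing condition, so the index collapses to $w_i^{S}(s_i,u_i) = [R_i(s_i,1)-R_i(s_i,0)] + u_i(s_i)$. Writing the Shapley surrogate as $\sum_i R_i(s_i,0) + \sum_i a_i\, w_i^{S}(s_i,u_i)$, maximizing it subject to $\|\mathbf{a}\|_1 \le K$ is precisely selecting the $K$ arms of largest index, i.e.\ exactly the greedy set defining $\hat{\mathbf{a}}(\mathbf{s}^{\ast})$; I would also confirm the linear program over $\mathbf{a}\in[0,1]^N$ attains its optimum at this integral vertex.

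With the action selections pinned down, I would evaluate both objectives. Since the state never changes, each round is an identical decision, so the optimal policy collects $\max_{\mathbf{a}} R(\mathbf{s}^{\ast},\mathbf{a})$ per step and the Shapley-Whittle policy collects $R(\mathbf{s}^{\ast},\hat{\mathbf{a}}(\mathbf{s}^{\ast}))$ per step; summing the geometric series gives $\mathrm{OPT} = \max_{\mathbf{a}} R(\mathbf{s}^{\ast},\mathbf{a})/(1-\gamma)$ and $\mathrm{ALG} = R(\mathbf{s}^{\ast},\hat{\mathbf{a}}(\mathbf{s}^{\ast}))/(1-\gamma)$. Taking the ratio, the $1/(1-\gamma)$ factors cancel and $\mathrm{ALG}/\mathrm{OPT} = R(\mathbf{s}^{\ast},\hat{\mathbf{a}}(\mathbf{s}^{\ast}))/\max_{\mathbf{a}}R(\mathbf{s}^{\ast},\mathbf{a}) = \theta_{\mathrm{shapley}}$ by the choice of $\mathbf{s}^{\ast}$, which yields the claimed $\mathrm{ALG} \le \theta_{\mathrm{shapley}}\,\mathrm{OPT}$ (in fact with equality). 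Note this argument is agnostic to the precise algebraic form of $R$, since $R$ enters the numerator and denominator of $\theta_{\mathrm{shapley}}$ identically.

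I expect the main obstacle to be the second step: carefully verifying that the frozen-transition index reduces to the one-shot marginal and that the policy's top-$K$ rule agrees with $\hat{\mathbf{a}}$, including tie-breaking and the case where fewer than $K$ indices are positive. Handling the possibility that the optimal one-shot action and $\hat{\mathbf{a}}(\mathbf{s}^{\ast})$ pull different numbers of arms requires ensuring the greedy selection genuinely realizes the surrogate maximizer, so that the reward actually collected by $\pi_{\mathrm{shapley}}$ equals $R(\mathbf{s}^{\ast},\hat{\mathbf{a}}(\mathbf{s}^{\ast}))$ exactly.
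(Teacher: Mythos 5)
Your proposal is correct and follows essentially the same route as the paper: fix the initial state at the minimizer of the ratio defining $\theta_{\mathrm{shapley}}$, freeze the state with identity (self-loop) transitions so the continuation term cancels and the Shapley-Whittle index reduces to the one-shot surrogate $R_i(s_i,1)-R_i(s_i,0)+u_i(s_i)$, conclude the policy plays $\hat{\mathbf{a}}(\mathbf{s}^{(0)})$ while the optimal policy plays $\argmax_{\mathbf{a}} R(\mathbf{s}^{(0)},\mathbf{a})$, and take the ratio of the resulting geometric series. Your explicit verification that the top-$K$ index rule realizes the separable surrogate maximizer is a detail the paper delegates to the analogous Linear-Whittle argument, but the construction and conclusion are identical.
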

\begin{example}
\label{ex:linear_upper}
Recall the situation from Example~\ref{ex:linear_lower}. 
Let $s_{i} = 1$ for all $i$, and consider the Linear-Whittle policy. 
Construct construct $P_{i}$, so that $P_{i}(s,a,1) = 1 \forall i,s,a$ or in other words, $s_{i}^{(t)} = 1 \forall i,t$. 
The Linear-Whittle policy will pull arms $1$ and $2$, as $p_{1}(1)+p_{2}(1) \geq p_{i}(1) + p_{j}(1) \forall i,j$. 
This results in an action $\mathbf{a} = [1,1,0,0]$, so that $R(\mathbf{s},\mathbf{a}) = 3$, while the optimal action is $\mathbf{a}^{*} = [0,0,1,1]$ with $R(\mathbf{s},\mathbf{a}^{*}) = 4$. 
Computing $\theta_{\mathrm{linear}}$ matches this upper bound of $\frac{3}{4}$ in this scenario. 
\end{example}
Our upper bounds demonstrate that, for any reward function, there exists a set of transitions where our policies perform a $\theta$-fraction worse than optimal.
$\theta_{\mathrm{linear}}$ is small
when the optimal action diverges from the action taken by the Linear-Whittle policy (analogously for Shapley-Whittle). 
Such situations occur because policies fail to account for inter-arm interactions (e.g. Example~\ref{ex:linear_upper}). 
We present lower bounds using the average reward because Whittle optimality holds in that situation. 
Otherwise, we follow prior work, and focus on discounted reward~\cite{uc_whittle} (for more discussion see~\citet{indexability_not_enough})

\section{Adaptive Approaches for RMAB-G}
\label{sec:adaptive}

\subsection{Limitations of Pre-Computed Index-Based Policies}
\label{sec:breakdown}
Despite the approximation bounds given in Appendix~\ref{sec:proofs_index}, Linear- and Shapley-Whittle policies can exhibit poor performance when upper bounds $\theta_{\mathrm{linear}}$ and $\theta_{\mathrm{shapley}}$ are small. 
We detail such a situation:\begin{lemma}    \label{ex:bad_index}
    (Informal) 
    Let $\pi \in \{\pi_{\mathrm{linear}},\pi_{\mathrm{shapley}}\}$ be either the Linear or Shapley-Whittle policies. 
    There exist problem settings where $\pi$ will achieve an approximation ratio no better than $\frac{1-\gamma}{1-\gamma^{K}}$.   
\end{lemma}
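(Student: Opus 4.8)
The plan is to exhibit a concrete family of \abr{rmab-g} instances on which both index-based policies are forced into a persistently suboptimal action sequence, and then to compute the resulting ratio of discounted rewards exactly. The intuition, borrowed from Example~\ref{ex:linear_upper}, is that the marginal quantities $p_i(s_i)$ and $u_i(s_i)$ are computed \emph{myopically} with respect to inter-arm interactions: whenever a small group of arms has large individual marginal value but strongly overlapping (submodular) contribution, both the Linear- and Shapley-Whittle indices will rank those arms highest, even though the truly optimal choice is a disjoint set of arms with smaller individual marginals but larger joint reward. I would pick a coverage-type global reward $R_{\mathrm{glob}}(\mathbf{s},\mathbf{a}) = |\bigcup_{i:\,a_i=1,s_i=1} Y_i|$ with sets engineered so that the $K$ arms of highest index together cover only a $\tfrac{1-\gamma}{1-\gamma^K}$-fraction of what the optimal $K$ arms cover.

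The key steps, in order, are as follows. First, I would fix transitions to be state-absorbing, i.e. $P_i(s,a,1)=1$ for all $i,s,a$, so that every arm stays in state $1$ forever and the per-step decision problem is identical at every timestep; this reduces the infinite discounted sum $\sum_{t=0}^{\infty}\gamma^t R(\mathbf{s},\mathbf{a})$ to $\tfrac{1}{1-\gamma}R(\mathbf{s},\mathbf{a})$ and lets me reason about a single static round. Second, I would choose the sets $Y_i$ (and a budget $K$) so that the $K$ arms maximizing $\sum_i p_i(s_i)a_i$ (equivalently $\sum_i u_i(s_i)a_i$) are a highly overlapping cluster whose union has size proportional to $1-\gamma$ worth of coverage, while the optimal budget-$K$ set is a disjoint cover achieving the full reward; the $\tfrac{1-\gamma}{1-\gamma^K}$ form suggests a design where the index policy, once per period, can only claim one ``new'' element while the optimum claims a geometric amount across the $K$ slots. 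Third, I would verify that both the Linear-Whittle and Shapley-Whittle indices genuinely select the bad cluster: because transitions are trivial, the discounted-future term $\gamma\sum_{s'} P_i V_{i,w}$ is identical across arms, so the index ordering collapses to the ordering of $R_i(s_i,a_i)+p_i(s_i)$ (resp.\ $+u_i(s_i)$), and I can make $R_i\equiv 0$ so the ranking is purely by marginal value. Finally, I would take the ratio $\mathrm{ALG}/\mathrm{OPT}$ and confirm it equals $\tfrac{1-\gamma}{1-\gamma^K}$.

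The main obstacle I anticipate is the Shapley-Whittle case, and more specifically getting the arithmetic of the bound to land exactly at $\tfrac{1-\gamma}{1-\gamma^K}$ rather than some other coverage ratio. For the Linear case the overestimation $\sum_i p_i(s_i)a_i \ge R_{\mathrm{glob}}(\mathbf{s},\mathbf{a})$ is clean and Example~\ref{ex:linear_upper} already shows the mechanism, but the Shapley values $u_i(s_i)$ average marginal contributions over all subsets of size up to $K-1$ and take a pessimistic minimum over states, so I must check that the \emph{relative} ordering of arms under $u_i$ still favors the bad cluster — the Shapley correction could in principle reward the disjoint arms. I would handle this by making the bad cluster's arms pairwise near-identical (so each retains high Shapley value despite overlap, since in most sampled subsets the other cluster members are absent) while the good arms each contribute a fixed small increment; symmetry among the cluster arms keeps their Shapley values tied and high. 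The dependence on $\gamma$ and $K$ presumably enters not through the single-round reward at all but through a staged construction — e.g.\ the optimal policy spreads its $K$ pulls across time or across $K$ distinct coverage elements that compound geometrically under discounting — so the real design work is arranging the instance so that the optimum's advantage accrues as the geometric series $1+\gamma+\cdots+\gamma^{K-1} = \tfrac{1-\gamma^K}{1-\gamma}$ while the index policy is pinned to a single term; since the statement is labeled \emph{(Informal)}, I would present the construction and the exact ratio computation and defer full indexability checks to the appendix.
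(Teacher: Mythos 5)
Your central design choice --- persistence transitions $P_i(s,a,1)=1$ so that the problem ``reduces to a single static round'' --- cannot produce the claimed ratio. If every arm sits in state $1$ forever, the index policy plays the same action $\mathbf{a}^{\mathrm{ALG}}$ every round and the optimum plays the same $\mathbf{a}^{*}$ every round, so both discounted sums are scaled by the identical factor $\frac{1}{1-\gamma}$ and the ratio collapses to the single-round coverage ratio $R(\mathbf{s},\mathbf{a}^{\mathrm{ALG}})/R(\mathbf{s},\mathbf{a}^{*})$, which is independent of $\gamma$. That is precisely the construction behind the upper bound $\theta_{\mathrm{linear}}$ in Theorem~\ref{thm:linear_upper}; it cannot yield $\frac{1-\gamma}{1-\gamma^{K}}$. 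You sense this in your final paragraph (``the dependence on $\gamma$ and $K$ presumably enters \dots through a staged construction'') but you never supply the staged construction, and that construction is the entire content of the lemma. Likewise, all of your effort on engineering the sets $Y_i$ and on whether the Shapley values preserve the bad ordering is spent on a mechanism (static coverage gaps) that is not the one driving this bound.

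The paper's proof uses the opposite transition structure: $P_i(s,1,0)=1$ and $P_i(s,0,s)=1$, so pulling an arm permanently destroys it while unpulled arms persist. Take $N=K$, reward $R(\mathbf{s},\mathbf{a})=\max_i s_i a_i$, and all arms identical. By symmetry and the top-$K$ selection rule, any index-based policy pulls all $K$ arms in the first round, collects reward $1$ once, and then earns nothing forever since every arm is stuck in state $0$. The optimal policy instead pulls one arm per round for $K$ rounds, earning $\sum_{t=0}^{K-1}\gamma^{t}=\frac{1-\gamma^{K}}{1-\gamma}$, giving the ratio $\frac{1-\gamma}{1-\gamma^{K}}$ exactly. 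The two ingredients missing from your sketch are (i) transitions under which pulling is \emph{destructive}, so that simultaneous pulls waste future value, and (ii) a max-type reward together with $N=K$ symmetry that forces the index policy to pull simultaneously; without these, no choice of coverage sets under persistence transitions can make the bound depend on $\gamma$ at all.
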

In general, pre-computed index-based policies perform poorly because of their inability to consider inter-arm interactions (proof in Appendix~\ref{sec:proofs_index}). 
This approximation ratio matches lower bounds, $\lim\limits_{\gamma \rightarrow 1} \frac{1-\gamma}{1-\gamma^{K}} = \frac{1}{K} = \beta_{\mathrm{linear}} = \beta_{\mathrm{shapley}}$, which motivates new policies which select arms adaptively. 


\subsection{Iterative Whittle Indices}
\label{sec:iterative_whittle}
Lemma~\ref{ex:bad_index} motivates the need for policies that select arms adaptively and go beyond pre-computed indices.
Based on this, we propose two new policies: Iterative Linear-Whittle and Iterative Shapley-Whittle. 
Both policies compute current time step rewards, $r_{i}$, based on the arms pulled, $X$. 

\begin{definition}
    \textbf{Iterative Linear-Whittle} - Consider an instance of \abr{rmab-g} $(\mathcal{S},\mathcal{A},R_{i},R_{\mathrm{glob}},P_{i},\gamma)$. 
    Consider a state $\mathbf{s}$, and construct a new state, $\bar{\mathbf{s}}$, with $P_{\bar{s_{i}}} = P_{i}$.
    Let $Q_{i,w}^{IL}(\bar{s}_{i},a_{i},p_{i},r^L_{i},X) = Q_{i,w}^{L}(s_{i},a_{i},p_{i}) + a_{i} (r_{i}^{L}(\mathbf{s},X) - p_{i}(s_{i}))$ and $Q_{i,w}^{IL}(s_{i},a_{i},p_{i},r^{L}_{i},X) = Q_{i,w}^{L}(s_{i},a_{i},p_{i})$ for $s_{i} \neq \bar{s}_{i}$. 
    Then the Iterative Linear-Whittle index is $w_{i}^{IL}(s_{i},p_{i},X) = \min\limits_{w} \{w | Q_{i,w}^{IL}(s_{i},0,p_{i},r^{L}_{i},X) > Q_{i,w}^{IL}(s_{i},1,p_{i},r^{L}_{i},X)\}$, where $r^{L}_{i}(\mathbf{s},X) = R_{\mathrm{glob}}(\mathbf{s},\mathbf{e}_{i} \lor \bigvee_{l=1}^{j} \mathbf{e}_{x_{l}}) - R_{\mathrm{glob}}(\mathbf{s},\bigvee_{l=1}^{j} \mathbf{e}_{x_{l}})$
\end{definition}

\begin{definition}
    \textbf{Iterative Shapley-Whittle} - Consider an instance of \abr{rmab-g} $(\mathcal{S},\mathcal{A},R_{i},R_{\mathrm{glob}},P_{i},\gamma)$, with $u_{i}(s_{i})$ defined by Equation~\ref{eq:shap_rmab}. 
    Construct a new state, $\bar{s}_{i}$, with $P_{\bar{s_{i}}} = P_{i}$.
    Let $Q_{i,w}^{IS}(\bar{s}_{i},a_{i},u_{i},r^S_{i},X) = Q_{i,w}^{S}(s_{i},a_{i},u_{i}) + a_{i}(r_{i}^{S}(s_{i},X) - u_{i}(s_{i}))$ and $Q_{i,w}^{IS}(s_{i},a_{i},u_{i},r^S_{i},X) = Q_{i,w}^{S}(s_{i},a_{i},u_{i})$ for $s_{i} \neq \bar{s}_{i}$. 
    Then the Iterative Shapley-Whittle Index is $w_{i}^{IS}(s_{i},u_{i},X) = \min\limits_{w} \{w | Q_{i,w}^{IS}(s_{i},0,u_{i},r_{i}^{S},X) > Q_{i,w}^{IS}(s_{i},1,u_{i},r_{i}^{S},X)\}$. 
    Here $r_{i}(s_{i},X)$ is computed as: 
    \begin{equation}
        \min\limits_{\mathbf{s}'|s'_{i}=s_{i}} \sum_{\substack{\mathbf{a} \in \{0,1\}^{N}, \|\mathbf{a}\|_{1} \leq K-2 \\ a_{i}=a_{X_{1}} = a_{X_{2}} = \cdots = a_{X_{j}}=0}} \frac{\|\mathbf{a}\|_{1}!(n-\|\mathbf{a}\|_{1}-1)!}{\frac{N!}{(N-K-1)!}} (R_{\mathrm{glob}}(\mathbf{s}',\mathbf{a} \lor \mathbf{e}_{i} \lor X)-R_{\mathrm{glob}}(\mathbf{s}',\mathbf{a} \lor X))
    \end{equation}
\end{definition}

By incorporating the arms pulled, $X$, both policies better estimate the present reward.  
Here, $Q_{i,w}^{IL}(\bar{s}_{i},a_{i},p_{i},r_{i}^{L},X)$ represents the expected discounted reward when receiving $r_{i}^{L}(\mathbf{s},X)$ currently and $p_{i}(s_{i})$ in the future (similarly for Shapley-Whittle). 
To model this, we construct a new state, $\bar{s}_{i}$, which mirrors $s_{i}$ except for the reward in the current timestep. 
The Iterative Linear-Whittle and Shapley-Whittle policies then iteratively select arms that maximize $w_{i}^{IL}$ and $w_{i}^{IS}$ respectively and stop after selecting $K$ arms. 

\subsection{MCTS-based Approaches}
\label{sec:mcts_whittle}
To improve upon the greedy selection done by iterative policies, we combine Monte Carlo Tree Search (\abr{mcts}) with Whittle indices.
\abr{mcts} allows us to look beyond greedy selection by searching through various arm combinations. 
We propose two policies: MCTS Linear-Whittle and MCTS Shapley-Whittle. 
For both, we search through arm combinations, $\mathbf{a}$, compute the present value ($R(\mathbf{s},\mathbf{a})$), and estimate the future value via Linear- and Shapley-Whittle indices. 
Nodes in the \abr{mcts} tree correspond to arm combinations, edges to individual arms, and leaf nodes to actions $\mathbf{a}$. 

Our implementation of \abr{mcts} matches the upper confidence tree algorithm for selection, expansion, and backpropogation~\cite{uct_mcts}, while we differ for rollout (details in Appendix~\ref{sec:policy_details}). 
During rollout, we select nodes according to an $\epsilon$-greedy algorithm~\cite{bandit_book} and leverage the Linear- and Shapley-Whittle policies for greedy selection.
We do so because $\epsilon$-greedy is a simple strategy that could potentially improve upon random rollouts.
Once at a leaf node, the reward is $r = R(\mathbf{s},\mathbf{a}) + \sum_{i=1}^{N} [Q_{i,0}^{L}(s_{i},a_{i})  - p_{i}(s_{i})a_{i} - R_{i}(s_{i},a_{i})]$ for MCTS Linear-Whittle and $r = R(\mathbf{s},\mathbf{a}) + \sum_{i=1}^{N} [Q_{i,0}^{S}(s_{i},a_{i})  - u_{i}(s_{i})a_{i} - R_{i}(s_{i},a_{i})]$ for MCTS Shapley-Whittle.
Here, $R(\mathbf{s},\mathbf{a})$ is the known current value, and $\sum_{i=1}^{N} (Q_{i,0}^{L}(s_{i},a_{i}) - p_{i}(s_{i})a_{i} - R_{i}(s_{i},a_{i}))$ is the estimated future value when accounting for the present reward $R(\mathbf{s},\mathbf{a})$. 
\section{Experiments}
\label{sec:experiments}

We evaluate our policies across both synthetic and real-world datasets. 
We compare our six policies (Section~\ref{sec:linear}, Section~\ref{sec:iterative_whittle}, and Section~\ref{sec:mcts_whittle}) to the following baselines (more details in Appendix~\ref{sec:baseline_details}):  
\begin{enumerate}[]
\itemsep0em 
    \item \textbf{Random} - We uniform randomly select $K$ arms at each timestep. 
    \item \textbf{Vanilla Whittle} - We run the vanilla Whittle index, which optimizes only for $R_{i}(s_{i}, a_{i})$.  
    \item \textbf{Greedy} - We select actions which have the highest value of $p_{i}(s_{i})$.
    \item \textbf{MCTS} - We run \abr{mcts} up to some depth, $cK$, where $c$ is a constant.  
    \item \textbf{DQN} - We train a Deep Q Network (\abr{dqn})~\cite{dqn}, and provide details in Appendix~\ref{sec:baseline_details}. 
    \item \textbf{DQN Greedy} - Inspired by Theorem~\ref{thm:q_approx}, we first learn a \abr{dqn} then greedily optimize $Q(\mathbf{s},\mathbf{a})$. 
    \item \textbf{Optimal} - We compute the optimal policy  through value iteration when $N \leq 4$. 
\end{enumerate}
\vspace{-0.5pc}

\begin{figure*}
    \centering 
    \includegraphics[width=0.85 \textwidth]{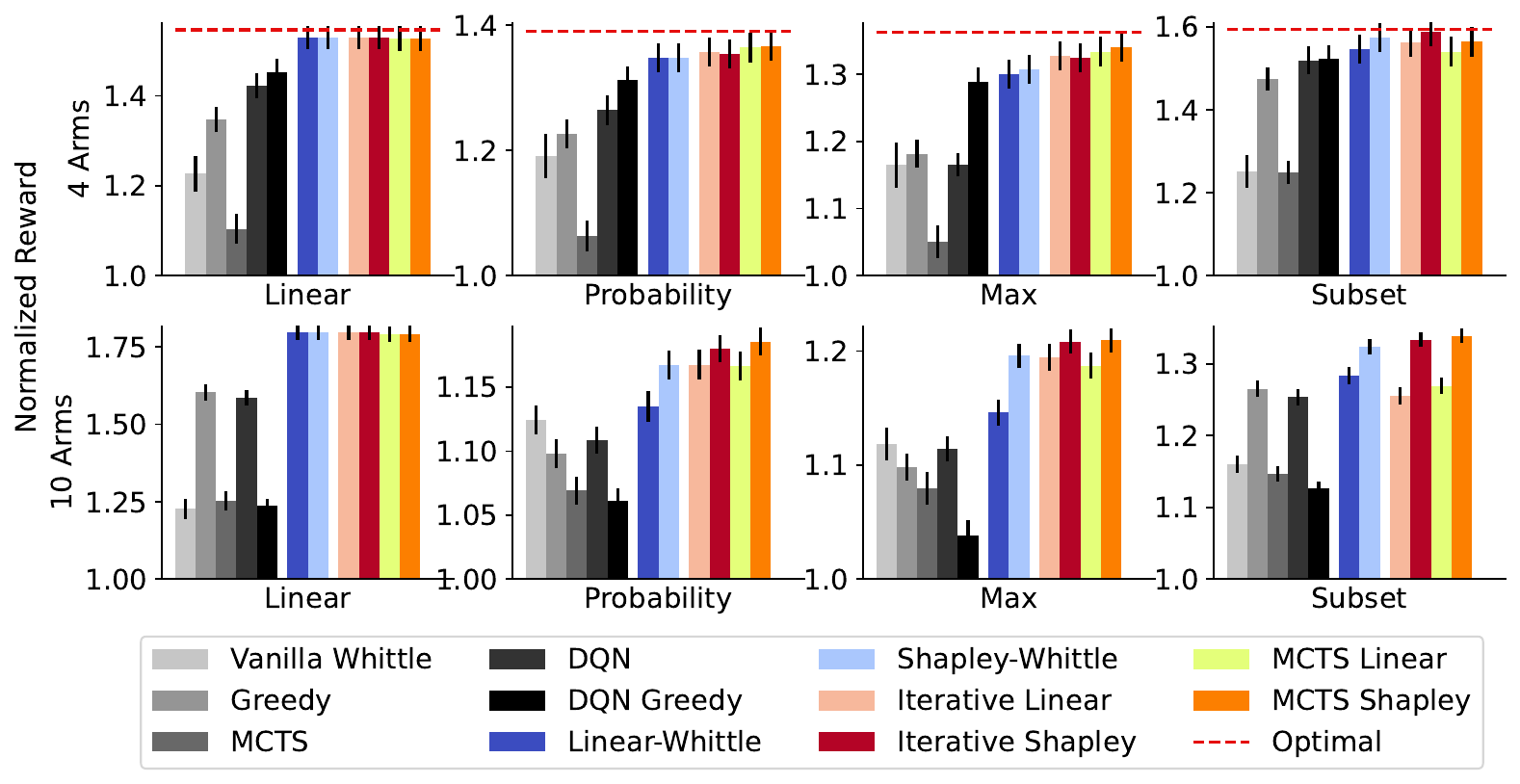}

    \caption{We compare baselines to our index-based and adaptive policies across four reward functions. All of our policies outperform baselines. Across all rewards, our best policy is within 3\% of optimal for $N=4$. Among our policies, Iterative and MCTS Shapley-Whittle consistently perform best.}
    \label{fig:main}
\end{figure*}
We run all experiments for 15 seeds and 5 trials per seed. 
We report the normalized reward, which is the accumulated discounted reward divided by that of the random policy. 
This normalization is due to differences in magnitudes within and across problem instances and we use discounted rewards due to prior work~\cite{uc_whittle}. 
We additionally compute the standard error across all runs. 

\subsection{Results with Synthetic Data}
\label{sec:synthetic}
To understand performance across synthetic scenarios, we develop synthetic problem instances that leverage 
each of the following global reward functions (details on $m_{i}$ and $Y_{i}$ choices in Appendix~\ref{sec:experimental_details}): 
\vspace{-0.7pc}
\begin{enumerate}
\itemsep-0.15em 
    \item \textbf{Linear} - Given $m_{1}, \cdots, m_{N} \in \mathbb{R}$,  then $R_{\mathrm{glob}}(\mathbf{s},\mathbf{a}) = \sum_{i=1}^{N}  m_{i} s_{i} a_{i}$. 
    \item \textbf{Probability} - Given $m_{1}, \cdots, m_{N} \in [0,1]$, then $R_{\mathrm{glob}}(\mathbf{s},\mathbf{a}) = 1-\prod_{i=1}^{N} (1-m_{i} a_{i} s_{i})$.  
    \item \textbf{Max} - Given $m_{1}, \cdots, m_{N} \in \mathbb{R}$, then $R_{\mathrm{glob}}(\mathbf{s},\mathbf{a}) = \max_{i} s_{i} a_{i} m_{i}$.
    \item \textbf{Subset} - Given sets, $Y_{1}, \cdots, Y_{N} \subset \{1,\cdots,m\}$, $m \in \mathbb{Z}$, then $R_{\mathrm{glob}}(\mathbf{s},\mathbf{a}) = | \cup_{i | s_{i}a_{i}=1} Y_{i}|$. 
\end{enumerate}
\vspace{-0.5pc}

We select these rewards, as they range from linear or near-linear (\textit{Linear} and \textit{Probability}) to highly non-linear (\textit{Max} and \textit{Subset}). 
The reward functions can also be employed to model real-world situations; the \textit{Probability} reward models food rescue and the \textit{Subset} reward models peer review (see Section~\ref{sec:global}). 
For all rewards, we let $R_{i}(\mathbf{s},\mathbf{a}) = s_{i}/N$ and $\mathcal{S} = \{0,1\}$, which matches prior work~\cite{rmab_public_health}. 
We construct synthetic transitions parameterized by $q$, so $P_{i}(0,0,1) \sim U(0,q)$, where $U$ is the uniform distribution (see Appendix~\ref{sec:experimental_details}).
We construct transition probabilities so that having $a_{i}=1$ or $s_{i}=1$ only improves transition probabilities (see~\citet{uc_whittle}).
We vary $q$ in Appendix~\ref{sec:synthetic_transitions} and find that our choice of $q$ does not impact findings. 
Unless otherwise specified, we fix $K=N/2$.  

Figure~\ref{fig:main} shows that our policies consistently outperform baselines. 
Among our policies, Iterative and MCTS Shapley-Whittle policies perform best. 
For $N=10$, Iterative and MCTS Shapley-Whittle significantly outperform baselines (paired t-test $p<0.04$) and index-based policies except for the \textit{Linear} reward ($p<0.001$). 
\abr{dqn}-based policies regress from on average $4\%$ worse for $N=4$ to on average $9\%$ worse for $N=10$ compared to our best policy, which occurs due to the exponential state space.  
In Appendix~\ref{sec:n_arms_budget}, we consider more combinations of $N$ and $K$ and find similar results. 

\begin{figure*}
    \centering 
     \includegraphics[width=0.8 \textwidth]{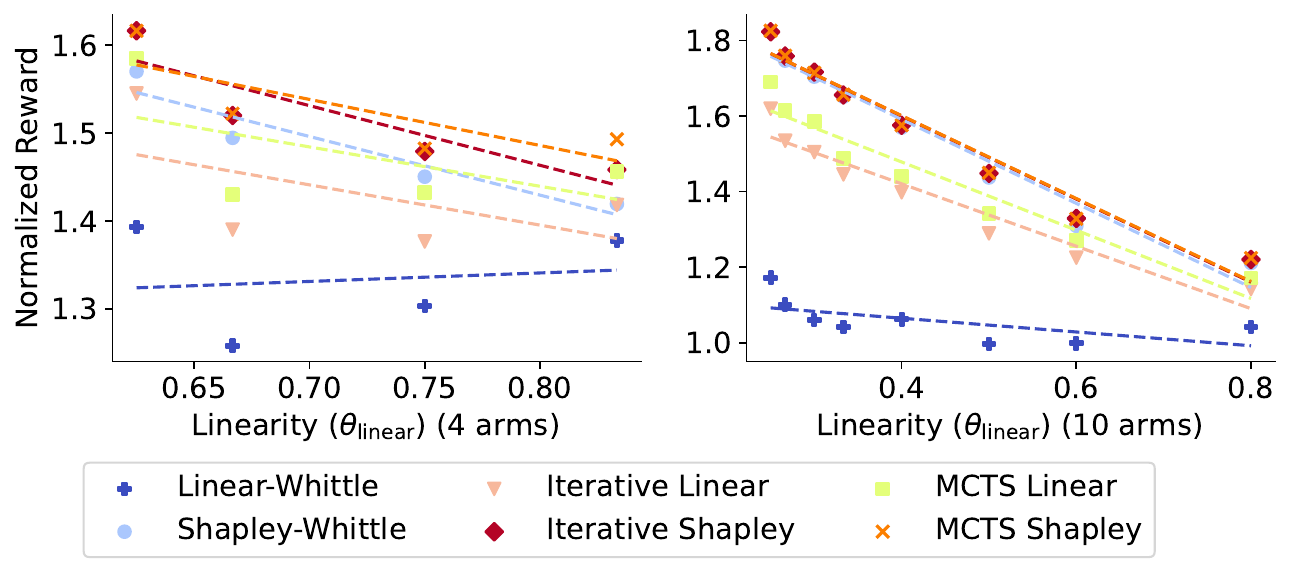}

    \caption{We plot policy performance for instances of the \textit{Subset} reward which vary in linearity. We see that the Iterative and MCTS Shapley-Whittle policies outperform alternatives for non-linear rewards (small $\theta_{\mathrm{linear}}$) while policy performances converge for linear rewards (large $\theta_{\mathrm{linear}}$). }
    \label{fig:linearity}
\end{figure*}

To understand policy performance across rewards, we analyze how reward linearity impacts performance. 
We quantify linearity via $\theta_{\mathrm{linear}}$ (see Theorem~\ref{thm:linear_upper}) and compute $\theta_{\mathrm{linear}}$ for \textit{Subset} instances varying in $Y_{i}$ (see Appendix~\ref{sec:experimental_details}). 
For non-linear reward functions (small $\theta_{\mathrm{linear}}$), Iterative and MCTS-Shapley-Whittle are best, outperforming Linear-Whittle by 56\%, while for near-linear reward functions (large $\theta_{\mathrm{linear}}$), all policies are within 18\% of each other (Figure~\ref{fig:linearity}, $N=10$). 
Appendix~\ref{sec:bad_index_results} shows that, for some rewards, adaptive policies outperform non-adaptive policies by 50\%. 

\begin{figure*}
    \centering 
   \includegraphics[width=0.9 \textwidth]{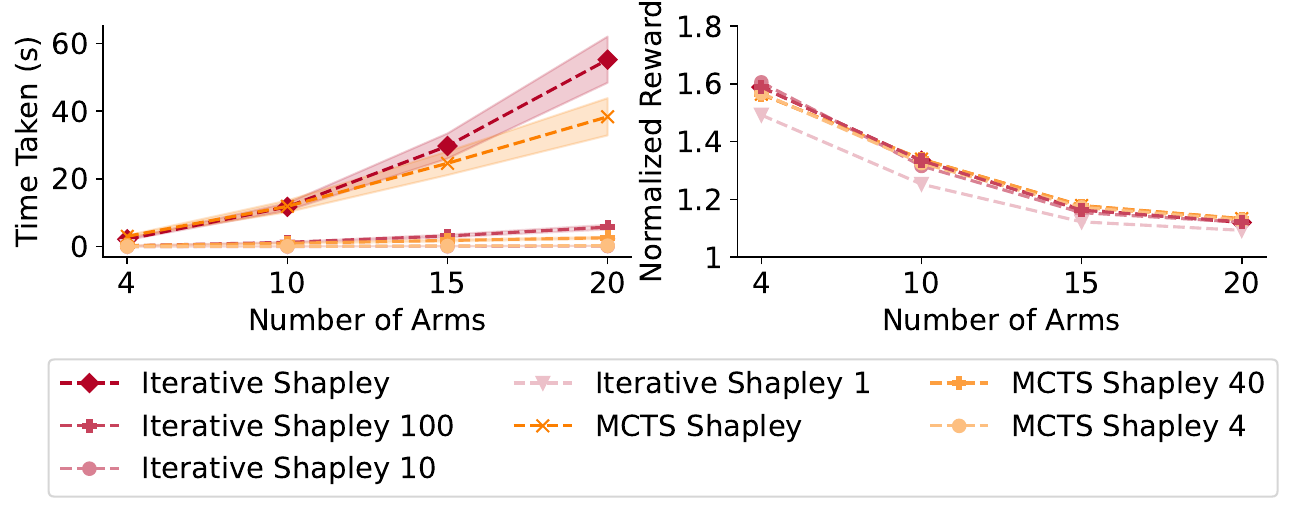}

    \caption{We compare the efficiency and performance of Iterative Shapley-Whittle methods which vary in Monte Carlo samples used for Shapley estimation and MCTS Shapley-Whittle methods which vary in MCTS iterations. While Iterative Shapley-Whittle is the slowest, decreasing the number of Monte Carlo samples can improve efficiency without impacting performance.  }
    \label{fig:time_taken}
\end{figure*}

To understand the tradeoff between efficiency and performance we plot the time taken and normalized reward for various $N$ (Figure~\ref{fig:time_taken}). 
We evaluate variants of Iterative Shapley-Whittle and MCTS Shapley-Whittle because both perform well yet are computationally expensive. 
For Iterative Shapley-Whittle, 1000 samples are used for Shapley estimation by default, so we consider variants that use 100, 10, and 1 samples. 
For MCTS Shapley-Whittle, we run 400 iterations of MCTS by default, so we consider variants that run 40 and 4 iterations. 
Iterative Shapley-Whittle policies are the slowest but can be made faster without impacting performance by reducing the number of samples.
Our results demonstrate how we can develop efficient adaptive policies which perform well.

\subsection{Results with Real-World Data in Food Rescue}
\label{sec:food_rescue}
To evaluate our policies in real-world contexts, we apply our policies to a food rescue dataset. 
We leverage data from a partnering multi-city food rescue organization and construct an \abr{rmab-g} instance using their data (details in Appendix~\ref{sec:food_rescue_details}).
Here, arms correspond to volunteers, and states correspond to engagement, with $s_{i}=1$ indicating an engaged volunteer.
Notifications are budget-limited because sending out too many notifications can lead to burned-out volunteers.

We compute transition probabilities from volunteer participation data and compute match probabilities from volunteer response rates. 
Using this, we study whether our policies maintain a high trip completion rate and volunteer engagement. 
We assume that food rescue organizations are indifferent to the tradeoff between trip completion and engagement. Because such an assumption might not hold in reality, we explore alternative tradeoffs in Appendix~\ref{sec:n_arms_budget}. 
If volunteer $i$ is both notified ($a_{i}=1$) and engaged ($s_{i}=1$), then they match to a trip with probability $m_{i}$. 
We do so because unengaged volunteers are less likely to accept food rescue trips.
Under this formulation, our global reward is the probability that any volunteer accepts a trip, which is exactly the \textit{Probability} reward. 
For engagement, we let  $R_{i}(s_{i},a_{i}) = s_{i}/N$, which captures the fraction of engaged volunteers. 
We model two food rescue scenarios, and detail scenarios with more volunteers in Appendix~\ref{sec:food_rescue_arms}: 

\textbf{Notifications} - Food rescue organizations deliver notifications to volunteers as new trips arise~\cite{food_rescue_recommender}. To model this scenario, we consider $N=100$ volunteers and $K=50$ notifications. 

\textbf{Phone Calls} - When a trip remains unclaimed after notifications, food rescue organizations call a small set of experienced volunteers. 
We model this with $N=20$, $K=10$, and using only arms corresponding to experienced volunteers with more than 100 trips completed (details in Appendix~\ref{sec:food_rescue_details}).  

In Table~\ref{tab:food}, we compare our policies against baselines and find that our policies achieve higher normalized rewards. 
All of our policies outperform all baselines by at least 5\%.
Comparing between policies, we find that Iterative and MCTS Shapley-Whittle perform best, performing slightly better than the Shapley-Whittle policy.  
Currently, food rescue organizations perform notifications greedily, so an \abr{rmab-g} framework could improve the engagement and trip completion rate. 

\begin{table*}[]
\resizebox{\columnwidth}{!}{\begin{tabular}{@{}llllll@{}}
\toprule
                & Vanilla Whittle & Greedy          & MCTS            & DQN             & Linear Whittle  \\ \midrule
Notifications & 0.975 $\pm$ 0.012 & 1.829 $\pm$ 0.016 & 1.047 $\pm$ 0.012 & 1.547 $\pm$ 0.025 & \textbf{1.932 $\pm$ 0.017} \\
Phone Calls   & 0.989 $\pm$ 0.009 & 1.228 $\pm$ 0.008 & 1.070 $\pm$ 0.016 & 1.209 $\pm$ 0.007 & 1.288 $\pm$ 0.008 \\ \bottomrule \\  \toprule \\ 
                & Shapley Whittle & Iter. Linear     & Iter. Shapley    & MCTS Linear      & MCTS Shapley \\ \midrule 
Notifications & 1.931 $\pm$ 0.016 & 1.921 $\pm$ 0.017 &  &  &  \\
Phone Calls   & 1.290 $\pm$ 0.008 & 1.287 $\pm$ 0.009 & \textbf{1.294 $\pm$ 0.009} & 1.291 $\pm$ 0.008 & 1.293 $\pm$ 0.008 \\ \bottomrule
\end{tabular}}
\caption{We evaluate policies in real-world food rescue contexts by developing situations that mirror notifications ($N=100, K=50$) and phone calls ($N=20, K=10)$. Our policies outperform baselines for both situations and achieve similar results due to the nearly linear reward function. }
\label{tab:food}
\end{table*} 
\section{Related Works and Problem Setup}
\label{sec:related}

\textbf{Multi-armed Bandits} In our work, we explore multi-armed bandits with two properties: combinatorial actions~\cite{combinatorial_bandit} and non-separable rewards. 
Combinatorial bandits can be solved through follow-the-perturbed-leader algorithms~\cite{ftpl_combinatorial}, which are necessary because the large action space causes traditional linear bandit algorithms to be inefficient ~\cite{bandit_book}. 
Prior work has tackled non-separable rewards in slate bandits~\cite{slate_non_seperable}, through an explore-then-commit framework, and adversarial combinatorial bandits~\cite{adversarial_non_linear}, from a minimax optimality perspective.
In this work, we leverage the structure of \abr{rmab-g}s to tackle both combinatorial actions and non-separable rewards. 

\textbf{Submodular Bandits} Additionally related to our work are prior investigations into submodular bandits. 
Submodular bandits combine submodular functions with bandits and have been investigated from an explore-then-commit perspective~\cite{submodular_explore_then_commit} and through upper confidence bound (UCB) algorithms~\cite{submodular_ucb}. 
Similar to prior work, we focus on optimizing submodular functions under uncertainty. 
However, prior work focuses on learning the reward structure when the submodular function is unknown, while we focus on learning optimal decisions when state transitions are stochastic.

\textbf{Restless Multi-Armed Bandits} \abr{rmab}s extend multi-armed bandits to situations where unpulled arms can transition between states~\cite{whittle_paper}. 
While finding optimal solutions to this problem is PSPACE-Hard~\cite{np_hard_whittle}, index-based solutions are efficient and asymptotically optimal~\cite{whittle_optimality}. 
As a result, \abr{rmab}s have been used across a variety of domains~\cite{restless_autonomous,restless_scheduling,restless_villages} and have been deployed in public health scenarios~\cite{rmab_public_health}. 
While prior work has relaxed assumptions by considering \abr{rmabs} that have contexts~\cite{contextual_ramb}, combinatorial arms~\cite{rl_combinatorial_rmab}, graph-based interactions~\cite{networked_bandit}, and global state~\cite{global_state}, we focus on combinatorial situations with a global reward. 

 
\section{Limitations and Conclusion}
\label{sec:conclusion}
\textbf{Limitations} - While we present approximation bounds for index-based policies, our adaptive policies lack theoretical guarantees.
Developing bounds for adaptive policies is difficult due to the difficulty in leveraging Whittle index optimality.
However, these bounds may explain the empirical performance of adaptive policies.
Our empirical analysis is limited to one real-world dataset (food rescue).
We aimed to mitigate this by exploring different scenarios within food rescue, but empirical evaluation in domains such as peer review would be valuable. 
Finally, evaluating policies when global rewards are neither submodular nor monotonic would improve our understanding of \abr{rmab-g}. 

\textbf{Conclusion} - \abr{rmab}s fail to account for situations with non-separable global rewards, so we propose the \abr{rmab-g} problem. 
We tackle \abr{rmab-g} by developing index-based, iterative, and MCTS policies. 
We prove performance bounds for index-based policies and empirically find that iterative and MCTS policies perform well on synthetic and real-world food rescue datasets. 
Our results show how \abr{rmab}s can extend to scenarios where rewards are global and non-separable. 
\section*{Acknowledgements}
We thank Gati Aher and Gaurav Ghosal for the comments they provided on an earlier draft of this paper. 
We additionally thank Milind Tambe for his comments. 
This work was supported in part by NSF grant IIS-2200410. 
Co-author Raman is also supported by an NSF Fellowship.
Co-author Shi is supported in part by a Pitt Mascaro Center for Sustainable Innovation fellowship. 
 
\newpage 
\bibliographystyle{unsrtnat}
\bibliography{references}
\newpage
\section*{Broader Impact}
In this paper, we analyze the use of restless multi-armed bandits to tackle general reward functions and apply this to real-world problems in food rescue. 
Our goal is to develop practically motivated and theoretically insightful algorithms. 
With this goal, we evaluate our algorithms on real-world data to understand whether such decision-making algorithms have real-world validity. 
Our proposed algorithms and analysis are a step towards improving the deployment of restless multi-armed bandit algorithms in real-world situations such as food rescue.

\appendix
\section{Experimental Details}
\label{sec:experimental_details}
We provide additional details on our experiments and experimental conditions. 
For all experiments in the main paper, we run experiments for 15 seeds, and for each seed, we run experiments across 5 choices of initial state (5 trials). 
For all ablations in the Appendix, we run for 9 seeds and run experiments across 5 choices of initial state.
Random seeds change the reward parameters ($m_{i}$ or $Y_{i}$) and the transitions. 
For all plots, we plot the normalized reward averaged across seeds along with the standard error. 
We compute the discounted reward with an infinite time horizon and normalize this as the improvement over the random policy. 
We do this so that we can standardize the magnitude of rewards, as such a magnitude can vary across rewards and selections of reward parameters. 
For all experiments $\gamma=0.9$. 
We estimate the infinite-horizon discounted reward by computing the discounted reward for $T=50$ timesteps; we do this because for $t > 50$, $\gamma^{t} \leq 0.005$, and so we can compute the discounted reward with only a small difference from the infinite time horizon. 

To construct synthetic transitions, we consider a parameter $q$, and let $P_{i}(0,0,1) \sim U(0,q)$ where $U$ is the uniform distribution. 
Then $P_{i}(1,0,1) \sim U(P_{i}(0,0,1),1)$ and $P_{i}(0,1,1) \sim U(P_{i}(0,0,1),1)$. 
Finally, $P_{i}(1,1,1) \sim U(\max(P_{i}(1,0,1),P_{i}(0,1,1)),1)$. 
We construct transitions in such a way so that being in $s_{i}=1$ or playing action $a_{i}=1$ can only help with these synthetic transitions, which matches from prior work~\cite{uc_whittle}.  
We vary $q$ in Appendix~\ref{sec:synthetic_transitions}, and find similar results across selections of $q$. 

We additionally detail some of the choices for parameters for each of the rewards used in Section~\ref{sec:synthetic}. 
For the \textit{Linear}, \textit{Probability}, and \textit{Max} rewards, we select $m_{1}, \cdots, m_{N} \sim U(0,1)$. 
For the \textit{Probability} distribution, such a selection is natural (as probabilities are between 0 and 1), while for the \textit{Linear} and \textit{Max} reward, we find similar results no matter the distribution of $m_{i}$ chosen. 
We consider different selections for the sizes of the subset and use these different sizes to investigate the impact of linearity on performance. 
By default, we let $m=20$ and let $Y_{i}$ be sets of size $6$. 
We select such a size so that $R_{\mathrm{glob}}(\mathbf{s},\mathbf{a})$ is non-linear in $|Y_{i}|$, as when $|Y_{i}|$ is too small the problem becomes more linear (because $Y_{i} \cap Y_{j}$ is empty). 

To understand the impact of linearity, we construct the sets $Y_{i}$ in such a way to control $\theta_{\mathrm{linear}}$. 
In particular, we consider two parameters $a$ and $b$. 
We construct $K$ (where $K$ is the budget) sets which where $Y_{i} = \{1,2, \cdots, b\}$, and we construct $N-K$ disjoint sets which are size $a < b$.
For the $N-K$ disjoint sets, they would be $\{1,2,\cdots,a\},\{a+1,a+2\cdots,2a\}, \cdots$. 
Under this construction, $\theta_{\mathrm{linear}} = \frac{b}{Ka}$. 
We then vary $b \in \{3,4,5\}$, and vary $a \in \{1,\cdots,b-1\}$, then compute the performance of each policy for each of these rewards. 
Constructing sets in this way allows us to directly measure the impact of linearity on performance, as we can control the linearity of the reward function by varying $a$ and $b$. 

We run all experiments on a TITAN Xp with 12 GB of GPU RAM running on Ubuntu 20.04 with 64 GB of RAM. 
Each policy runs in under 30 minutes, and we use 300-400 hours of computation time for all experiments. 
We develop \abr{dqn} policies using PyTorch~\cite{pytorch}. 

\section{Baseline Details}
\label{sec:baseline_details}
We provide additional details on select baselines here: 
\begin{enumerate}
    \item \textbf{DQN} - We develop a Deep Q Network (\abr{dqn}), where a state consists of the set of arms pulled and the current state of all arms. The arms correspond to a new arm to pull; we structure the \abr{dqn} to avoid the number of actions being exponential in the budget (which is often linear in the number of arms). We train the \abr{dqn} for 100 epochs and evaluate with other choices in Appendix~\ref{sec:rl}. We let the learning rate be $5*10^{-4}$, a batch size of 16, and an Adam optimizer. We use an MLP with a 128-dimension as our hidden layer, and use 2 such hidden layers. We use ReLu activation for all layers. We provide more details on training \abr{dqn}s in our supplemental codebase. 
    \item \textbf{DQN Greedy} - Inspired by Theorem~\ref{thm:q_approx}, we combine a \abr{dqn} with greedy search. We first learn a \abr{dqn} network where states directly correspond to the states of the arms, and actions correspond to subsets of arms pulled. Due to the large number of actions, such a method does not scale when the number of arms is increased (as the number of actions is exponential in the budget). Using this learned Q function, we greedily select arms one-by-one, until $K$ total arms are selected. We let the learning rate be $5*10^{-4}$, a batch size of 16, and an Adam optimizer.  
    \item \textbf{MCTS} - We run our MCTS baseline algorithm and use the upper confidence tree algorithm~\cite{uct_mcts}. We run this for 400 iterations, and recurse until a depth of $2*K$ is reached; we detail other choices for this in Appendix~\ref{sec:vanilla_mcts}. We select a depth of $2K$ so that arm selections are non-myopic. By letting the depth be $K$, the MCTS algorithm would simplify into a better version of greedy search, whereas we want to see if we can non-myopically select arms using MCTS. 
\end{enumerate}

\section{Policy Details}
\label{sec:policy_details}
We provide details for each of our policies below. 
For Shapley-based methods, we approximate the Shapley value by computing 1000 random samples of combinations. We do this because computing the exact Shapley value takes exponential time in the number of arms. 
In Section~\ref{sec:experiments}, we consider the impact of other choices on the efficiency and performance of models. 

We additionally provide the pseudo-code for our MCTS Shapley-Whittle and MCTS Linear-Whittle algorithms (Algorithm~\ref{alg:mcts}), and note that these follow from the typical upper confidence trees framework, varying only in the rollout step. 
We let $\epsilon=0.1$ for the rollout function, and let $c=5$. 

\begin{algorithm*}[h]
   \caption{Monte Carlo Tree Search for RMABs}
\begin{algorithmic}
    \STATE {\bfseries Input:} State $\mathbf{s}$, reward function $R(\mathbf{s},\mathbf{a})$, Q functions $Q^{L}_{i,w}(s_{i},a_{i},p_{i})$ or $Q^{S}_{i,w}(s_{i},a_{i},u_{i})$, and transitions $P_{1}, \cdots, P_{N}$, number of MCTS trials $M$, and hyperparameter $c$
    \STATE {\bfseries Output:} Action $\mathbf{a}$

    \STATE Initialize a tree, $T$, with leafs representing $K$ arms pulled
    \STATE Let $v_{m}$ represent the total value of a node $m$
    \STATE Let $n_{m}$ represent the number of visits to a node $m$

    \FOR{$t=1$ {\bfseries to} $M$}
        \STATE Let $m$ be the root of $T$
        \STATE Let $l$ be empty set
        \WHILE {$m$ is non-leaf}
            \STATE Add $m$ to $l$
            \IF{$m$ has an unexplored child}
                \WHILE {$m$ is non-leaf}
                    \STATE With probability $\epsilon$, let $m'$ be a random unexplored child of $m$
                    \STATE Otherwise, let $m'$ be the unexplored child of $m$, with the largest value of $w_{i}(s_{i})$, where $w_{i}(s_{i})$ is either the Linear or Shapley-Whittle index
                    \STATE Add $m'$ to $l$
                    \STATE Let $m = m'$
                \ENDWHILE 
            \ELSE
                \STATE Let $m' = \mathrm{argmax}_{m'} \frac{v_{m'}}{n_{m'}} + c \sqrt{\frac{n_{m}}{n_{m'}}}$
                \STATE Let $m = m'$
            \ENDIF
        \ENDWHILE  

        \STATE Compute action $\mathbf{a}$ from nodes played, $l$
        \STATE For MCTS Linear-Whittle, let $r = R(\mathbf{s},\mathbf{a}) + \sum_{i=1}^{N} [Q_{i,w}^{L}(s_{i},a_{i},p_{i})  - p_{i}(s_{i})a_{i} - R_{i}(s_{i},a_{i})]$
        \STATE For MCTS Shapley-Whittle, $r = R(\mathbf{s},\mathbf{a}) + \sum_{i=1}^{N} [Q_{i}^{S}(s_{i,w},a_{i},u_{i})  - u_{i}(s_{i})a_{i} - R_{i}(s_{i},a_{i})]$
        \FOR{$m \in l$}
            \STATE Update $w_{m} \leftarrow w_{m} + r$
            \STATE Update $n_{m} \leftarrow n_{m} + 1$
        \ENDFOR 
    \ENDFOR

    \RETURN The action corresponding to the leaf, $m$, with the highest $\frac{w_{m}}{n_m}$
\end{algorithmic}
\label{alg:mcts}
\end{algorithm*}

\section{Food Rescue Details} 
\label{sec:food_rescue_details}
\begin{figure*}
    \centering 
    \subfloat[\centering Transitions]{\includegraphics[width=0.45 \textwidth]{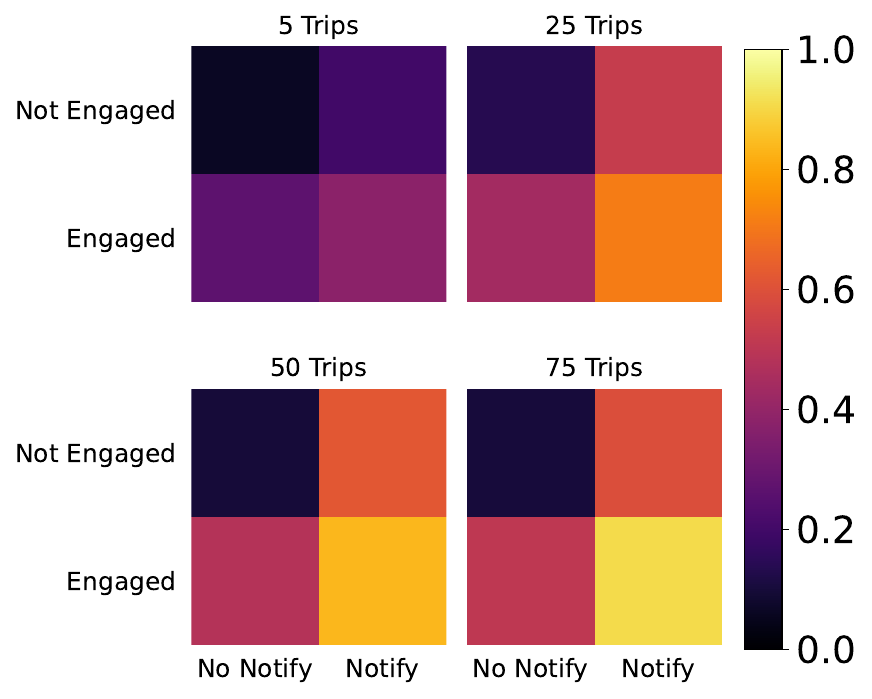}}
    \subfloat[\centering Match Probabilities] {\includegraphics[width=0.45 \textwidth]{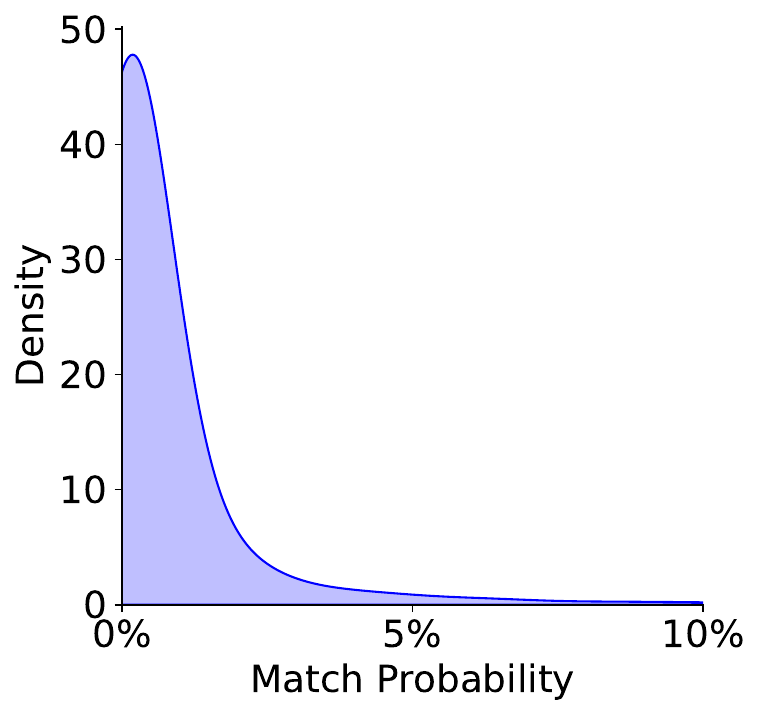}}

    \caption{We plot the (a) transition probabilities obtained by clustering volunteers with various levels of experience and (b) the distribution of match probabilities across all volunteers. We see that as volunteers gain more experience, they are more likely to become engaged when notified. However, we see that most volunteers have a low probability of responding, which motivates the need for large budgets in notification.}
    \label{fig:food_exploration}
\end{figure*}

To apply the food rescue scenario to \abr{rmab}s, we compute aggregate transition probabilities for volunteers through clustering. 
Each volunteer in the food rescue performs a certain number of trips, and we use this information to construct clusters.
We ignore volunteers who have performed 1 or 2 trips, as those volunteers have too little information to construct transition functions for. 
We construct 100 clusters of roughly equal size, where each cluster consists of volunteers who perform a certain number of trips; for example, the 1st cluster consists of all volunteers who perform 3 trips, and the 90th cluster is those who perform 107-115 trips. 
From this, we compute transition probabilities by letting the state (which represents engagement) denote whether a volunteer has completed a trip in the past two weeks. 
We then see if completing a trip in the current two weeks impacts the chance that they complete a trip in the next two weeks. 
We use this as a proxy for engagement due to the lack of ground truth information on whether a volunteer is truly engaged. 
We average this over all volunteers in a cluster and across time periods to compute transition probabilities. 
We present examples of such transition probabilities in Figure~\ref{fig:food_exploration}. 
To compute the corresponding match probabilities for each volunteer, we compute the fraction of notifications they accept, across all notifications seen.
This way, each arm corresponds to a cluster of volunteers, with a transition probability and match probability. 
We compute match probabilities on a per-volunteer basis so that multiple arms can have the same transition probability (corresponding to volunteers being in the same cluster) while differing in match probability. 
Such a situation corresponds to the real world as well; not all volunteers who complete between 107-115 trips have the same match probability. 

We analyze the distribution of match probabilities for each volunteer, across all trips that they are notified for.
We find that most volunteers accept less than 2\% of the trips they've seen, while a few volunteers answer more than 10\% (Figure~\ref{fig:food_exploration}). 
We note the difference between our two scenarios: notifications and phone calls.
For notifications, we consider all volunteers and notify a large fraction of these volunteers, while for phone calls, we consider only volunteers who have completed more than 100 trips, as they generally have a higher match probability. 
We then cluster those volunteers into 20 groups and similarly compute transition probabilities. 
We do this because food rescue organizations target more active volunteers when making phone calls due to the limited number of phone calls that can be made. 

\section{Vanilla MCTS Experiments}
\label{sec:vanilla_mcts}
\begin{figure*}
    \centering 
    \subfloat[\centering MCTS Depth]{\includegraphics[width=0.45 \textwidth]{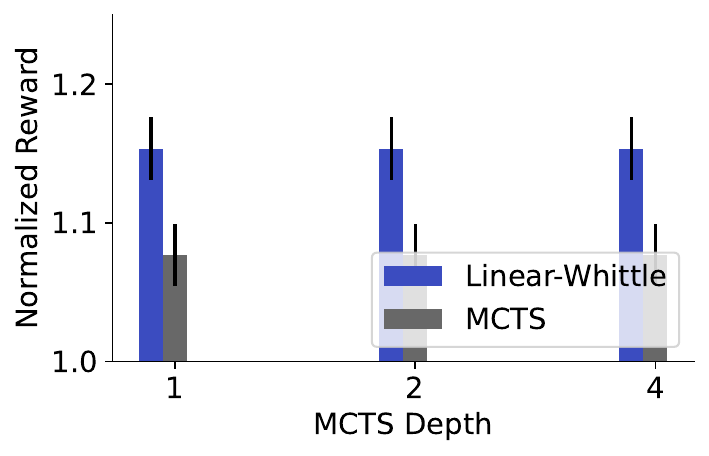}}
    \subfloat[\centering MCTS Test Iterations]{\includegraphics[width=0.45 \textwidth]{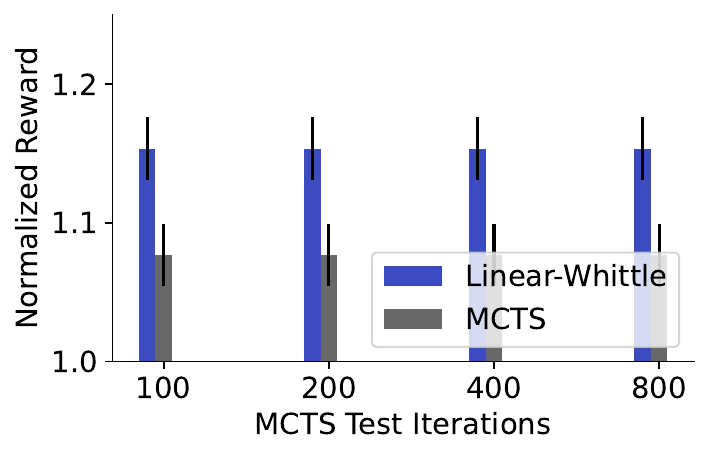}}

    \caption{We compare the performance of vanilla MCTS algorithms when varying (a) depth of exploration and (b) the number of test iterations. We find that, no matter the choice of depth or test iterations, MCTS performs worse than Linear-Whittle. Moreover, as expected, increasing test iterations improves performance, while lower depth improves MCTS performance due to the difficulty in estimating rewards from transitions. }
    \label{fig:mcts_ablation}
\end{figure*}

To understand the impact of our choice of hyperparameters for the MCTS baseline algorithm, we vary the depth of the MCTS search, along with the number of iterations. 
We vary the depth of MCTS in $\{1K,2K,4K\}$ (while setting the iterations to $400$), and we vary the number of iterations run for MCTS in $\{50,100,200,400\}$ (while searching up to depth $2K$). 
For all runs, we let $K=5$ and $N=10$ for the \textit{Max} reward. 

In Figure~\ref{fig:mcts_ablation}, we see that increasing MCTS depth only lowers performance, while increasing the number of iterations MCTS is run for increases performance. 
This is due to the stochasticity in rewards as the depth of exploration increases; there are $|\mathcal{S}|^{N}$ possible combinations of states, making exploration difficult. 
We find that for any combination of iterations and depth, the performance of MCTS lags significantly behind Linear-Whittle. 
While running MCTS for additional iterations would be helpful, this comes at a cost of time, so we run MCTS for $400$ iterations throughout our experiments to balance these.  

\section{Deep Q Networks}
\label{sec:rl}
\begin{figure*}
    \centering 
    \includegraphics[width=0.5 \textwidth]{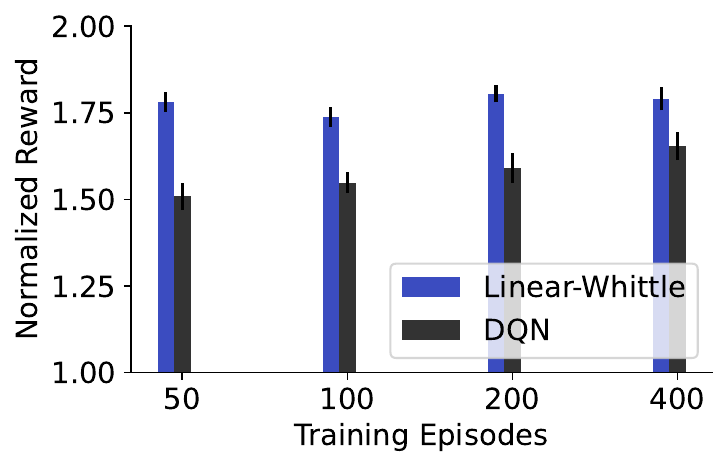}

    \caption{We compare the impact of additional training epochs on the performance of a \abr{dqn} baseline. We find that, regardless of the number of training episodes, Linear-Whittle outperforms the \abr{dqn} baseline. Additional training episodes do not result in a significantly better performance (compared with Linear-Whittle), showing that additional training episodes do not lead to a much smaller gap between Linear-Whittle and the \abr{dqn} baseline.}
    \label{fig:rl}
\end{figure*}

We compare the performance of our reinforcement learning baseline (\abr{dqn}) to the Linear-Whittle policy as we vary the number of epochs that the \abr{dqn} is trained for. 
We vary the number of train epochs in $\{50,100,200,400\}$ and plot our results in Figure~\ref{fig:rl}. 
We see that increasing the number of iterations does not provide a big advantage in performance; this is because \abr{dqn} fails to learn past $50$ epochs, and so additional training steps do not lead to much better performance. 
Moreover, we see that, no matter the selection of train iterations, \abr{dqn} performs worse than Linear-Whittle.
We believe the reason for this is the stochasticity in the dataset; because the number of actions is large, it becomes difficult to learn state values in this scenario, which leads to the poor performance of \abr{dqn}. 

\section{Synthetic Transitions}
\label{sec:synthetic_transitions}
\begin{figure*}
    \centering 
    \includegraphics[width=0.5 \textwidth]{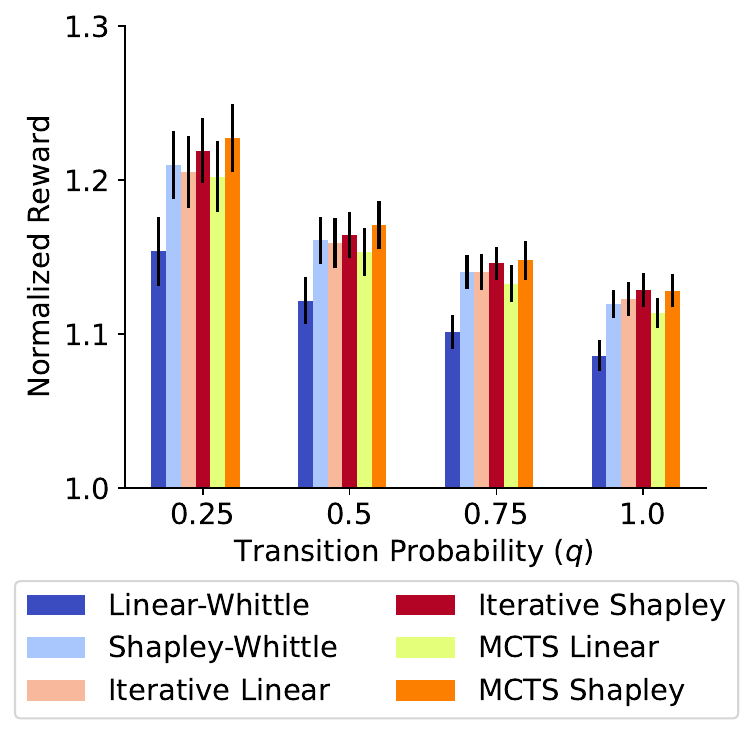}

    \caption{We assess the impact of changing $q$, which parametrizes transition probabilities. Smaller $q$ makes it less likely that arms transition to the $1$ state. We find that, regardless of the transition probability chosen, MCTS Shapley-Whittle policies improve upon all other policies.}
    \label{fig:synthetic_transitions}
\end{figure*}
Throughout our experiments, we sample synthetic transitions, with a maximum probability for $P_{i}(0,1,0)$ of $q$; we vary the value of $q$ to understand if this impacts the performance of policies. 
We vary $q \in \{0,0.25,0.5,0.75,1.0\}$, and evaluate the performance of our policies on the \textit{Max} reward. 

We find that, across all values of $q$, the MCTS Shapley-Whittle policy performs best.
We find large gaps between MCTS Shapley-Whittle and Linear-Whittle for small $q$; in these scenarios, the transitions are more stochastic, leading to greater impact from MCTS Shapley-Whittle. 
However, regardless of the choice of $q$, our conclusion remains that Iterative and MCTS Shapley-Whittle policies outperform other alternatives. 

\section{Other Constructions of Transitions}
\label{sec:bad_index_results}
\begin{figure*}
    \centering 
    \includegraphics[width=0.8 \textwidth]{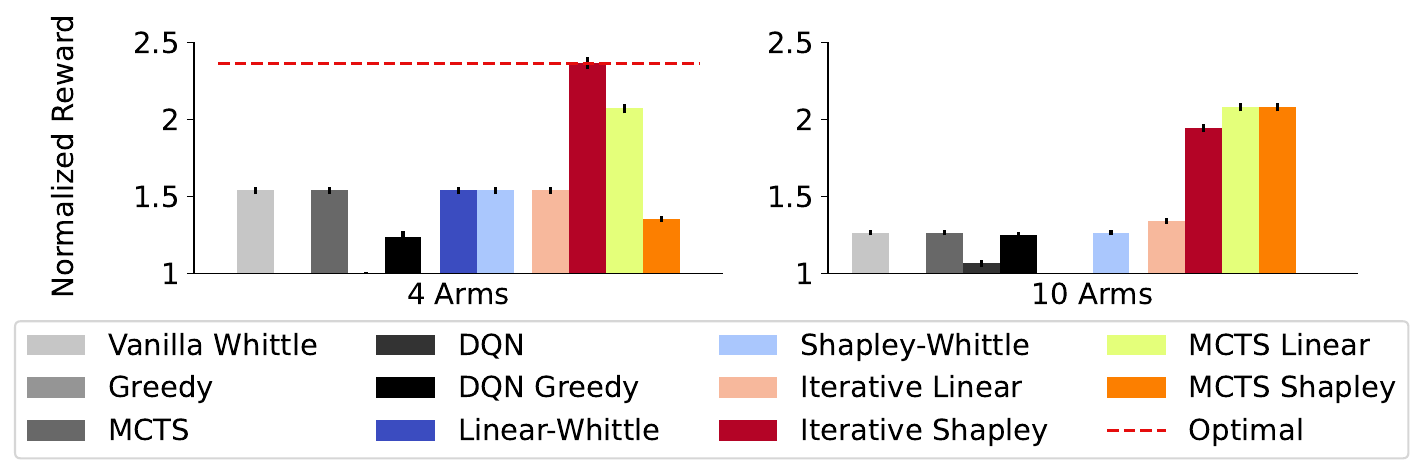}
    \caption{We construct situations where adaptive policies are needed to perform better than baselines. When $N=4$, Iterative Shapley-Whittle significantly outperforms alternatives, and when $N=10$, MCTS Shapley-Whittle outperforms alternatives. }
    \label{fig:good_iterative}
\end{figure*}

We investigate whether situations detailed in Lemma~\ref{ex:bad_index} can be tackled by adaptive algorithms. 
We construct a situation for the \textit{Max} reward where transitions are $P_{i}(s,1,0) = 1$ and $P_{i}(s,0,s) = 1$; pulling an arm always leads to $s_{i}=0$, and otherwise, arms stay in their current state. 
We design such a situation to see whether adaptive policies can perform well in situations where index-based policies are proven to perform poorly.
We let the values $m_{1} = m_{2} = 1$, then let the other choices of $m_{i} = 0$. 
We do this to encourage arms to be pulled one at a time, rather than pulling $m_{1}$ and $m_{2}$ together. 
In Figure~\ref{fig:good_iterative}, we compare the performance of policies in this situation and see that Iterative Shapley-Whittle significantly outperforms all other policies when $N=4$, and that MCTS Linear- and Shapley-Whittle outperform all other policies when $N=10$. 
When $N=10$, we see that all non-adaptive Whittle policies achieve similar performance, demonstrating the need for adaptive policies. 

\section{Other Combinations of Arms and Budget}
\label{sec:n_arms_budget}
\begin{figure*}
    \centering 
    \includegraphics[width=0.9 \textwidth]{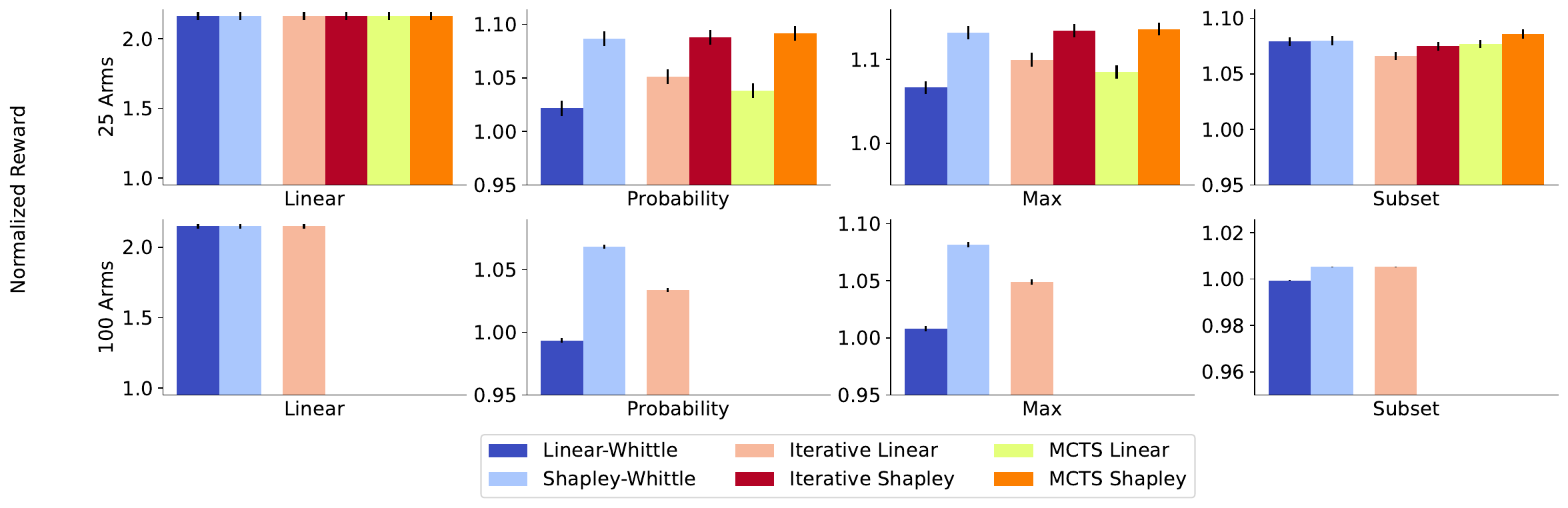}

    \caption{We repeat Figure~\ref{fig:main} for $N=25$ and $N=100$. We find that many of the same trends appear here; namely, that MCTS Shapley-Whittle outperforms all other policies, and Shapley-based policies do better in general.}
    \label{fig:large_N}
\end{figure*}

\paragraph{More arms}
We evaluate the performance of policies when $N=25$ or $N=100$ and plot this in Figure~\ref{fig:large_N}.
We run Iterative Shapley-Whittle, MCTS Linear-Whittle, and MCTS Shapley-Whittle policies for $N=25$ and refrain from running them when $N=100$ due to the time needed. 
We find that MCTS Shapley-Whittle policies perform better than alternatives, and that they perform significantly better than Linear-Whittle policies. 
The results with $N=25$ and $N=100$ confirm the trends from Section~\ref{sec:synthetic} even when the number of arms is large. 

\begin{figure*}
    \centering 
    \includegraphics[width=0.9 \textwidth]{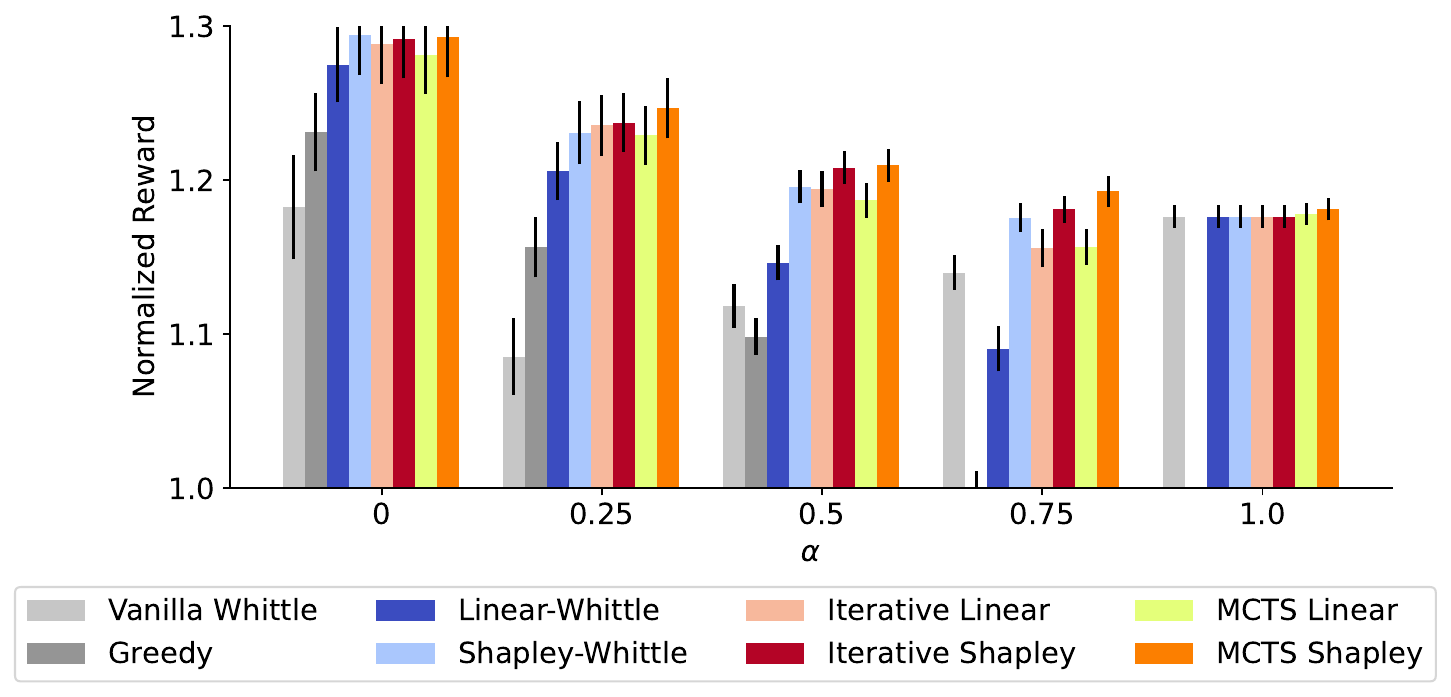}

    \caption{We vary the parameter $\alpha$, which modulates between the individual rewards $R_{i}$ and the global reward $R_{\mathrm{glob}}$. When $\alpha=1$, all policies perform similarly, while for $\alpha \in \{0.25,0.5,0.75\}$, we see that Linear-Whittle performs worse than other policies. Across values of $\alpha$, baselines perform worse than all policies, which is most notable when $\alpha=0$.}
    \label{fig:alpha}
\end{figure*}

\paragraph{Weighting Individual and Global Rewards}
Throughout our experiments, we weight the impact of the global reward, $R_{\mathrm{glob}(\mathbf{s},\mathbf{a})}$, and the individual reward, $R_{i}(s_{i},a_{i})$, equally. 
We evaluate the impact of various weighting factors so that $R(\mathbf{s},\mathbf{a}) = (1-\alpha) R_{\mathrm{glob}}(\mathbf{s},\mathbf{a}) + \alpha \sum_{i=1}^{N} R_{i}(\mathbf{s},\mathbf{a})$. 
By default, $\alpha=0.5$ throughout our experiments, so we vary $\alpha \in \{0,0.25,0.5,0.75,1.0\}$ and evaluate the reward for baselines and our policies on the maximum reward. 

In Figure~\ref{fig:alpha}, we see that, regardless of the choice of $\alpha$, our policies outperform baselines. 
We additionally see that the Shapley-Whittle and Iterative Shapley-Whittle perform well across the choices of $\alpha$, implying that the results we found with $\alpha=0.5$ are not limited to one choice of $\alpha$.

\begin{figure*}
    \centering 
    \includegraphics[width=0.5 \textwidth]{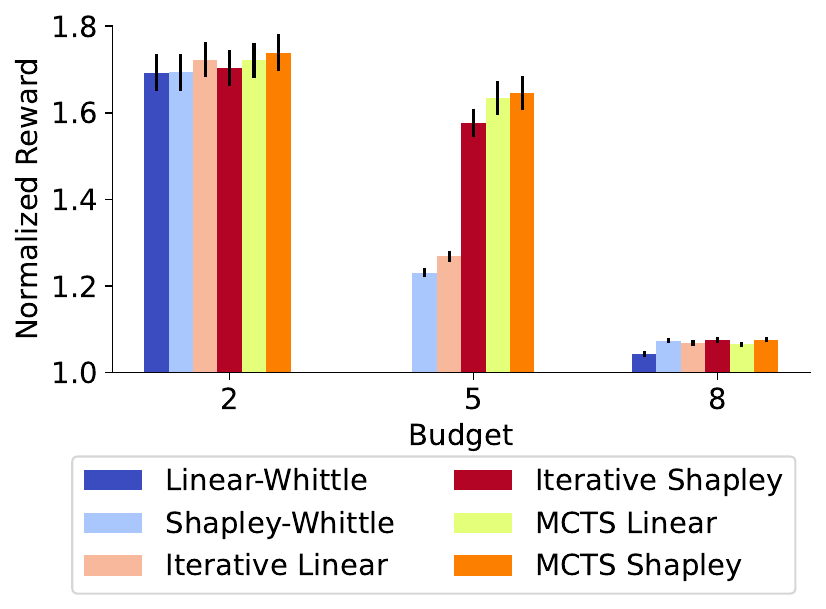}

    \caption{We vary the budget, $K$, while fixing $N=10$ for the maximum reward. We see that, for any choice of $K$, all policies have a larger reward than greedy, and that Linear-Whittle policies perform worse than other policies. Moreover, for $K=5$ and $K=8$, Iterative Shapley-Whittle policies outperform Iterative Linear-Whittle policies, while for $K=2$ policies, Iterative Linear-Whittle policies perform well, potentially due to the linearity of the problem with small $K$.}
    \label{fig:budget}
\end{figure*}
\paragraph{Varying Budget}
Throughout our experiments, we let the budget, $K=0.5N$, so we analyze the impact of varying $K$ when fixing $N=10$. 
We vary $K \in \{2,5,8\}$, and note that with $K=10$ or $K=0$, all policies perform the same (as all or no arms are selected). 
We see that, across all choices of budget, Linear-Whittle performs worse than all other policies (Figure~\ref{fig:budget}). 
Moreover, we see again that regardless of the choice of $K$, MCTS Shapley-Whittle is the best policy. 

\section{Food Rescue More Arms}
\label{sec:food_rescue_arms}
\begin{figure*}
    \centering 
    \includegraphics[width=0.8\textwidth]{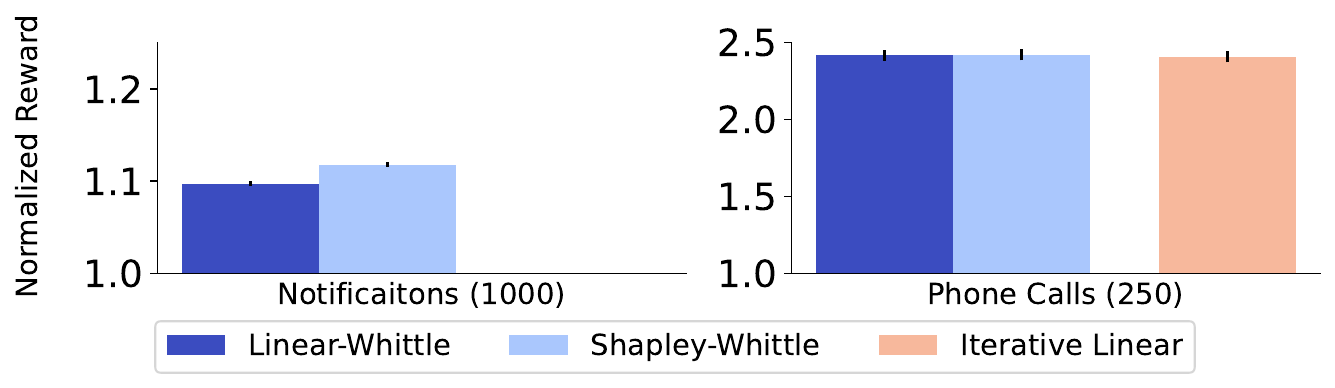}

    \caption{We compare the performance of policies on the food rescue dataset for two situations: (a) notifications, when $N=1000$ and $K=500$, and (b) phone calls, when $N=250$ and $K=50$. We find that when $N$ is large, all policies perform similarly, due to the linearity of the problem for large $N$.}
    \label{fig:more_food_rescue}
\end{figure*}
In addition to the experiments in Section~\ref{sec:food_rescue}, we evaluate the impact of increasing the number of arms in food rescue on the performance of various policies. 
We consider a situation with $N=1000$ and $K=250$, for the notification scenario, and another situation with $N=250$ and $K=5$, which corresponds to the phone calls scenario.
We plot both situations in Figure~\ref{fig:more_food_rescue}, and we see that all policies perform similarly in both of these situations. 
When the number of arms is this large, all policies will select the top arms as arms are likely to have a large match probability due to the sheer number of arms. 
We can differentiate between policies in the notification scenario, where Shapley-Whittle policies perform better than Linaer-Whittle policies, but in the phone calls settings, all policies perform similarly. 
Additionally, due to the large budget, the match probability will be large, so policies should focus on engagement in this scenario. 
In reality, the choice of whether to use pre-computed index policies or adaptive policies depends on the time available, the size of the problem, and the structure of the reward.

\section{Proofs of Problem Difficulty}
\label{sec:proofs_difficulty}

\newtheorem*{theorem1}{Theorem~\ref{thm:np}}
\begin{theorem1}
    The restless multi-armed bandit with global rewards is PSPACE-Hard. 
\end{theorem1}
\begin{proof}
    We reduce \abr{rmab} to \abr{rmab-g}. 
    Note that \abr{rmab-g} aims to maximize 
    \begin{equation}
        \sum_{t=0}^{\infty} \gamma^{t} (R_{\mathrm{glob}}(\mathbf{s},\mathbf{a}) + \sum_{i=1}^{N} R_{i}(s_{i},a_{i}))
    \end{equation}
    Consider an \abr{rmab} instance with reward $R'_{i}(s_{i},a_{i})$. 
    We can construct an instance of \abr{rmab-g} with $R_{i} = R'_{i}$ and with $R_{\mathrm{glob}}(\mathbf{s},\mathbf{a}) = 0$.
    Therefore, we have reduced \abr{rmab} to \abr{rmab-g}. 
    Because \abr{rmab} is PSPACE-Hard~\cite{np_hard_whittle}, \abr{rmab-g} is as well. 
\end{proof}

\newtheorem*{theorem21}{Theorem~\ref{thm:approx}}
\begin{theorem21}
    No polynomial time algorithm can achieve better than a $1-\frac{1}{e}$ approximation to the restless multi-armed bandit with global rewards problem. 
\end{theorem21} 
\begin{proof}
    Consider any submodular function $F(S): 2^{\Omega} \rightarrow \mathbb{R}$ and a cardinality constraint $|S| \leq K$. 
    Unless $P=NP$, maximizing $F(S)$ with a cardinality constraint can only be approximated up to a factor of $1-\frac{1}{e}$~\cite{submodular_approx_e}. 
    Next, we will show that we can reduce submodular maximization to \abr{rmab-g}. 

    For a given submodular function $F$, we construct an instance of \abr{rmab-g} with $\mathcal{S} = \{0,1\}$. 
    Let $\mathbf{s}^{(0)} = \mathbf{1}$, where $\mathbf{1}$ is the vector of $N$ 1s. 
    Finally let $P_{i}(s,a,0) = 1$, so that all arms are in state $0$ for timesteps $t>0$. 
    Let $R(\mathbf{s},\mathbf{a}) = 0$ if $\mathbf{s} \neq \mathbf{1}$, and let $R(\mathbf{s},\mathbf{a}) = F(\{i| a_{i}=1\})$ otherwise. 
    Note that only at time $t=0$ can you achieve any reward, as afterwards, $F(S) = 0$. 
    
    The optimal solution to the \abr{rmab-g} problem is the action, $\mathbf{a}$ that maximizes $F(\{i|a_{i}=1\})$ subject to the constraint, $ \sum_{i=1}^{N} a_{i} \leq K$.
    Note that this is the same as the cardinality constraint for submodular maximization, $|S| = \sum_{i=1}^{N} a_{i}$, along with the function of interest, $F(S)$.  
    Therefore, if we can efficiently solve \abr{rmab-g}, then we can efficiently solve the original submodular maximization problem in $F$. 
    Because we cannot do better than a $1-\frac{1}{e}$ approximation to the submodular maximization problem in polynomial time, we cannot do better than a $1-\frac{1}{e}$ approximation to the \abr{rmab-g} problem. 
\end{proof}

\begin{lemma}
    \label{thm:v_monotonic}
    Let $\mathcal{P}$ be a set of transition functions, $P_{1}, \cdots, P_{N} \in \mathcal{S} \times \mathcal{A} \times \mathcal{S} \rightarrow [0,1]$, where $N$ is the number of arms. 
    Let $\mathcal{S} = \mathcal{A} = \{0,1\}$. 
    Let $Q_{\pi}(\mathbf{s},\mathbf{a}) = \mathbb{E}_{(\mathbf{s'},\mathbf{a'})\sim (\mathcal{P},\pi)}[\sum_{t=0}^{\infty} \gamma^{t} R(\mathbf{s'},\mathbf{a'})) | \mathbf{s_{0}}=\mathbf{s}, \mathbf{a_{0}}=\mathbf{a}]$ for some policy $\pi: \mathcal{S} \rightarrow \mathcal{A}$, and let $\pi^{*}$ be the optimal policy. 
    Let $V_{\pi^{*}}(\mathbf{s}) = \max\limits_{\mathbf{a}} Q_{\pi}(\mathbf{s},\mathbf{a})$. 
    If $R(\mathbf{s},\mathbf{a})$ is monotonic in $\mathbf{s}$ and $P_{i}(1,a,1) \geq P_{i}(0,a,1)$ for any $a$, then $V_{\pi^{*}}(\mathbf{s})$ is monotonic in $\mathbf{s}$
\end{lemma}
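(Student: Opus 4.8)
The plan is to prove monotonicity via value iteration and induction, exploiting that under discounting the Bellman operator is a contraction whose iterates converge pointwise to $V_{\pi^{*}}$. Define $V^{(0)} \equiv 0$ and
\begin{equation}
V^{(k+1)}(\mathbf{s}) = \max\limits_{\mathbf{a}} \Big[ R(\mathbf{s},\mathbf{a}) + \gamma \sum_{\mathbf{s}'} P(\mathbf{s},\mathbf{a},\mathbf{s}') V^{(k)}(\mathbf{s}') \Big],
\end{equation}
where $P(\mathbf{s},\mathbf{a},\mathbf{s}') = \prod_{i=1}^{N} P_{i}(s_{i},a_{i},s_{i}')$ since arms transition independently. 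Because $\gamma < 1$, we have $V^{(k)} \to V_{\pi^{*}}$ pointwise, and monotonicity is preserved under pointwise limits, so it suffices to show that every $V^{(k)}$ is non-decreasing in $\mathbf{s}$ (coordinate-wise). This is the ``induction on value iteration'' strategy already announced for Theorem~\ref{thm:q_approx}.

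First I would dispatch the base case: $V^{(0)}$ is constant and hence trivially monotone. For the inductive step, assume $V^{(k)}$ is non-decreasing and take two states with $\mathbf{s} \succeq \tilde{\mathbf{s}}$ coordinate-wise. Let $\mathbf{a}^{*}$ attain the maximum defining $V^{(k+1)}(\tilde{\mathbf{s}})$. Since $\mathbf{a}^{*}$ is also a feasible action at $\mathbf{s}$, it gives a lower bound $V^{(k+1)}(\mathbf{s}) \geq R(\mathbf{s},\mathbf{a}^{*}) + \gamma \sum_{\mathbf{s}'} P(\mathbf{s},\mathbf{a}^{*},\mathbf{s}') V^{(k)}(\mathbf{s}')$, so it is enough to compare the reward and continuation terms at the fixed action $\mathbf{a}^{*}$. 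The reward term satisfies $R(\mathbf{s},\mathbf{a}^{*}) \geq R(\tilde{\mathbf{s}},\mathbf{a}^{*})$ directly by the hypothesis that $R$ is monotonic in $\mathbf{s}$, and the remaining task is the continuation inequality $\sum_{\mathbf{s}'} P(\mathbf{s},\mathbf{a}^{*},\mathbf{s}') V^{(k)}(\mathbf{s}') \geq \sum_{\mathbf{s}'} P(\tilde{\mathbf{s}},\mathbf{a}^{*},\mathbf{s}') V^{(k)}(\mathbf{s}')$. Summing the two bounds and invoking the optimality of $\mathbf{a}^{*}$ for $\tilde{\mathbf{s}}$ then yields $V^{(k+1)}(\mathbf{s}) \geq V^{(k+1)}(\tilde{\mathbf{s}})$, closing the induction.

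The hard part will be the continuation-term inequality, which is a multivariate stochastic-dominance argument. The assumption $P_{i}(1,a,1) \geq P_{i}(0,a,1)$ says that, for each arm $i$ and fixed $a_{i}$, the one-step law of $s_{i}'$ started from $s_{i}=1$ first-order stochastically dominates the law started from $s_{i}=0$. I would lift this to the product law by a monotone coupling: for each arm draw a single uniform $U_{i}$ and set $s_{i}'$ and $\tilde{s}_{i}'$ by inverse-CDF thresholding, so that $s_{i} \geq \tilde{s}_{i}$ forces $s_{i}' \geq \tilde{s}_{i}'$ almost surely; taking the arms' couplings independent gives $\mathbf{s}' \succeq \tilde{\mathbf{s}}'$ almost surely on the common space. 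Since $V^{(k)}$ is non-decreasing by the inductive hypothesis, $V^{(k)}(\mathbf{s}') \geq V^{(k)}(\tilde{\mathbf{s}}')$ pointwise there, and taking expectations recovers exactly the continuation inequality.

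One consistency check I would flag: this lemma should rely only on $P_{i}(1,a,1) \geq P_{i}(0,a,1)$ together with monotonicity of $R$ in $\mathbf{s}$, and \emph{not} on the additional action-monotonicity hypothesis $P_{i}(s,1,1) \geq P_{i}(s,0,1)$ used in Theorem~\ref{thm:q_approx}. Verifying that the coupling step never needs the latter confirms the statement as written is self-contained, and the restriction to $\mathcal{S}=\mathcal{A}=\{0,1\}$ keeps the inverse-CDF thresholding elementary.
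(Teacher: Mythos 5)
Your proposal is correct and follows the same overall strategy as the paper's proof of Lemma~\ref{thm:v_monotonic}: induction on the value-iteration sequence $V^{(k)}$, with the base case trivial and the inductive step reduced, via choosing the optimizer $\mathbf{a}^{*}$ at the smaller state, to monotonicity of the reward term plus a continuation-term inequality. Where you genuinely differ is in how that continuation inequality is established. The paper compares only single-coordinate increments $V^{(t)}(\mathbf{s}\lor\mathbf{e}_{i})-V^{(t)}(\mathbf{s})$, telescopes these to handle general comparable pairs, and proves each increment non-negative by an explicit algebraic decomposition: it splits the sum over next states according to the value of coordinate $i$, introduces $\delta_{i}=P(\mathbf{s}\lor\mathbf{e}_{i},\mathbf{a},\mathbf{s}'\lor\mathbf{e}_{i})-P(\mathbf{s},\mathbf{a},\mathbf{s}'\lor\mathbf{e}_{i})\geq 0$, and pairs up the $\pm\delta_{i}$ terms against $V^{(t-1)}(\mathbf{s}'\lor\mathbf{e}_{i})-V^{(t-1)}(\mathbf{s}'\land\mathbf{1}_{i})$. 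Your monotone-coupling argument (shared uniform per arm, inverse-CDF thresholding, independence across arms) proves the same inequality for arbitrary comparable pairs in one step, without the telescoping reduction; it is the standard first-order-stochastic-dominance lift of $P_{i}(1,a,1)\geq P_{i}(0,a,1)$, of which the paper's $\delta_{i}$ bookkeeping is the hand-computed binary-state special case. The coupling route is arguably cleaner and more obviously generalizable beyond $\mathcal{S}=\{0,1\}$; the paper's algebraic route has the virtue of setting up the exact same decomposition machinery it reuses in the submodularity lemma (Lemma~\ref{thm:v_submodular}), where a coupling argument would be harder to adapt. Your observation that the action-monotonicity assumption $P_{i}(s,1,1)\geq P_{i}(s,0,1)$ is not needed here is also consistent with the paper, which invokes it only later when proving monotonicity of $Q_{\pi^{*}}$ in $\mathbf{a}$.
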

\begin{proof}
    We construct $V_{\pi^{*}}(\mathbf{s})$ by stepping through inductive steps in value iteration, and showing that the monotonicity property is preserved throughout these steps.
    We note that performing Value iteration from any initialization results in the optimal policy, $V_{\pi^{*}}(\mathbf{s})$

    Let $Q^{(t)}(\mathbf{s},\mathbf{a})$ represent the $Q$ value for iteration $t$ of Value iteration, and let $V^{(t)}(\mathbf{s})$ be the value function for iteration $t$. 
    We initialize $V^{(0)}(\mathbf{s}) = 0$ for any $\mathbf{s}$, and note that the initialization should not impact the process. 
    Running an iteration of value iteration leads $Q^{(1)}(\mathbf{s},\mathbf{a}) = R(\mathbf{s},\mathbf{a})$, so that $V^{(1)}(\mathbf{s}) = \max\limits_{\mathbf{a}} Q^{(1)}_{\pi^{*}}(\mathbf{s},\mathbf{a})$. 
    \\ 
    \\ 
    \textbf{Base Case}: Initially, $V^{(1)}(\mathbf{s}) = \max\limits_{\mathbf{a}} R(\mathbf{s},\mathbf{a})$. Next consider any two states, $\mathbf{x}$ and $\mathbf{y}$, so that $\mathbf{x} \geq \mathbf{y}$. 
    Here $\mathbf{x} \geq \mathbf{y}$ implies that $x_{i} \geq y_{i} \forall i$. 
    Let $\mathbf{a}^{*} = \mathrm{argmax}_{\mathbf{a}} R(\mathbf{y},\mathbf{a})$. 
    Then $V^{(1)}(\mathbf{y}) = R(\mathbf{y},\mathbf{a}^{*}) \leq R(\mathbf{x},\mathbf{a}^{*})\leq V^{(1)}(\mathbf{x})$. 
    Therefore, $V^{(1)}(\mathbf{s})$ is monotonic. 
    \\
    \\
    \textbf{Inductive Hypothesis}: Assume that $V^{(t-1)}(\mathbf{s})$ is monotonic in $\mathbf{s}$. 
    \\ 
    \\ 
    Next, we will prove that $V^{(t)}(\mathbf{s} \lor \mathbf{e}_{i}) - V^{(t)}(\mathbf{s}) \geq 0$. 
    Here $\mathbf{e}_{i}$ is the vector with 1 at location $i$ and 0 everywhere else. 
    
    Before proving this, we will first show that this implies monotonicity for any two states $\mathbf{s}, \mathbf{s}'$. 
    Suppose that $V^{(t)}(\mathbf{s} \lor \mathbf{e}_{i}) - V^{(t)}(\mathbf{s}) \geq 0$. 
    Then consider some pair of states, $\mathbf{s} \geq \mathbf{s}'$. 
    Because $\mathcal{S} = \mathcal{A} = \{0,1\}$, if $\mathbf{s} \geq \mathbf{s'}$, then we can write $\mathbf{s} = \mathbf{s'} \lor \bigvee_{i \in X} \mathbf{e}_{i}$ for some set $X = \{x_{1}, x_{2}, \cdots, x_{n}\}$. 
    If $V^{(t)}(\mathbf{s}' \lor \mathbf{e}_{i}) \geq V^{(t)}(\mathbf{s}')$, then $V^{(t)}((\mathbf{s}' \lor \mathbf{e}_{x_1}) \lor \mathbf{e}_{i}) \geq V^{(t)}(\mathbf{s}' \lor \mathbf{e}_{x_1})$

    Therefore, this implies that 
    \begin{equation}
        V^{(t)}(\mathbf{s})-V^{(t)}(\mathbf{s'}) = \sum_{i=1}^{n} (V^{(t)}(\mathbf{s'} \lor \bigvee_{j=1}^{i} \mathbf{e}_{x_j}) - V^{(t)}(\mathbf{s'} \lor \bigvee_{j=1}^{i-1} \mathbf{e}_{x_j})) \geq 0
    \end{equation}
    We get this by expanding $V^{(t)}(\mathbf{s})-V^{(t)}(\mathbf{s'})$ into a telescoping series. 
    Because each term in the sum is non-negative, the whole sum is non-negative. 
    \\
    \\ 
    Next we prove our original statement: $V^{(t)}(\mathbf{s}+\mathbf{e}_{i}) \geq V^{(t)}(\mathbf{s})$. 
    We first expand $V^{(t)}(\mathbf{s}) = R(\mathbf{s},\mathbf{a}) + \gamma \sum_{\mathbf{s}'} P(\mathbf{s},\mathbf{a},\mathbf{s}') V^{(t-1)}(\mathbf{s}')$ for some $\mathbf{a}$, where $P(\mathbf{s},\mathbf{a},\mathbf{s}') = \prod_{j=1}^{N} P_{j}(s_{j},a_{j},s'_{j})$. 
    We note that $V^{(t)}(\mathbf{s} \lor \mathbf{e}_{i}) \geq R(\mathbf{s} \lor \mathbf{e}_{i},\mathbf{a}) + \gamma \sum_{\mathbf{s}'} P(\mathbf{s} \lor \mathbf{e}_{i},\mathbf{a},\mathbf{s}') V^{(t-1)}(\mathbf{s}')$
    
    Therefore: 
    \begin{equation}
        V^{(t)}(\mathbf{s} \lor \mathbf{e}_{i}) - V^{(t)}(\mathbf{s}) \geq R(\mathbf{s} \lor  \mathbf{e}_{i},\mathbf{a}) - R(\mathbf{s},\mathbf{a}) + \gamma \sum_{\mathbf{s}'} (P(\mathbf{s} \lor \mathbf{e}_{i},\mathbf{a},\mathbf{s}')-P(\mathbf{s},\mathbf{a},\mathbf{s}')) V^{(t-1)}(\mathbf{s}')
    \end{equation}
    We note that $R(\mathbf{s} \lor  \mathbf{e}_{i},\mathbf{a}) - R(\mathbf{s},\mathbf{a}) \geq 0$ due to the monotonicity of $R$ with fixed $\mathbf{a}$.
    \\
    \\ 
    Let $\mathcal{C}_{i} \subset \{0,1\}^{N}$ be the set of 0-1 vectors of length $N$ so that if $x \in \mathcal{C}_{i}$ then $x_{i} = 0$.
    Additionally, let $\delta_{i} = P(\mathbf{s} \lor \mathbf{e}_{i},\mathbf{a}, \mathbf{s}' \lor \mathbf{e}_{i}) - P(\mathbf{s},\mathbf{a}, \mathbf{s}' \lor \mathbf{e}_{i})$. 
    We note that 
    \begin{align*}
        \delta_{i} \\ 
        &= P(\mathbf{s} \lor \mathbf{e}_{i},\mathbf{a}, \mathbf{s}' \lor \mathbf{e}_{i}) - P(\mathbf{s},\mathbf{a}, \mathbf{s}' \lor \mathbf{e}_{i}) \\ 
        &= \prod_{k=1}^{N} P_{k}((\mathbf{s} \lor \mathbf{e}_{i})_{k},a_{k}, (\mathbf{s}' \lor \mathbf{e}_{i})_{k}) - \prod_{k=1}^{N} P_{k}(s_{k},a_{k}, (\mathbf{s}' \lor \mathbf{e}_{i})_{k}) \\ 
        &= (P_{i}(1,a_{i},1) - P_{i}(s_{i},a_{i},1)) \prod_{k=1, k \neq i}^{N} P_{k}(s_{k},a_{k},s'_{k})  \\ 
        &\geq 0
    \end{align*}
    Let $\mathbf{1}_{i}$ be the vector of all 1s except at $i$.
    Let $\land$ be the element-wise minimum operator. 
    Then $\mathbf{s} \land \mathbf{1}_{i}$ sets $s_{i}=0$ while leaving other indices unchanged. 
    Because transition probabilities add to one, $P(\mathbf{s} \lor \mathbf{e}_{i},\mathbf{a}, \mathbf{s}' \land \mathbf{1}_{i}) - P(\mathbf{s},\mathbf{a}, \mathbf{s}' \land \mathbf{1}_{i}) = -\delta_{i}$
    \\
    \\
    Next: 
    \begin{align*}
        V^{(t)}(\mathbf{s} \lor \mathbf{e}_{i}) - V^{(t)}(\mathbf{s}) \\ 
        &\geq R(\mathbf{s}  \lor \mathbf{e}_{i},\mathbf{a}) - R(\mathbf{s},\mathbf{a}) + \gamma \sum_{\mathbf{s}'} (P(\mathbf{s} \lor \mathbf{e}_{i},\mathbf{a},\mathbf{s}')-P(\mathbf{s},\mathbf{a},\mathbf{s}')) V^{(t-1)}(\mathbf{s}')  \\ 
        &\geq 
        \gamma \sum_{\mathbf{s}'} (P(\mathbf{s} \lor \mathbf{e}_{i},\mathbf{a},\mathbf{s}')-P(\mathbf{s},\mathbf{a},\mathbf{s}')) V^{(t-1)}(\mathbf{s}') \\ 
        &= \gamma \sum_{\mathbf{s}' \in \mathcal{C}_{i}} (P(\mathbf{s} \lor \mathbf{e}_{i},\mathbf{a},\mathbf{s}' \land \mathbf{1}_{i})-P(\mathbf{s},\mathbf{a},\mathbf{s}' \land \mathbf{1}_{i})) V^{(t-1)}(\mathbf{s}' \land \mathbf{1}_{i}) \\ 
        &+ \gamma \sum_{\mathbf{s}' \in \mathcal{C}_{i}} (P(\mathbf{s} \lor \mathbf{e}_{i},\mathbf{a},\mathbf{s}' \lor \mathbf{e}_{i})-P(\mathbf{s},\mathbf{a},\mathbf{s}' \lor \mathbf{e}_{i})) V^{(t-1)}(\mathbf{s}' \lor \mathbf{e}_{i}) \\
        &=\gamma \sum_{\mathbf{s}' \in \mathcal{C}_{i}} -\delta_{i} V^{(t-1)}(\mathbf{s}' \land \mathbf{1}_{i}) + \delta_{i} V^{(t-1)}(\mathbf{s}' \lor \mathbf{e}_{i}) \\ 
        &=\gamma \sum_{\mathbf{s}' \in \mathcal{C}_{i}} \delta_{i} (V^{(t-1)}(\mathbf{s}' \lor \mathbf{e}_{i})-V^{(t-1)}(\mathbf{s}' \land \mathbf{1}_{i})) \\ 
        &\geq 0
    \end{align*}
    The second step splits the sum into transitions for states with $\mathbf{s}'_{i}=0$ and those with $\mathbf{s}'_{i}=1$. 
    The final step applies the inductive hypothesis and noting that $\mathbf{s}' \lor \mathbf{e}_{i} \geq \mathbf{s}' \land \mathbf{1}_{i}$. 
\end{proof}

\begin{lemma}
    \label{thm:v_submodular}
    Let $\mathcal{P}$ be a set of transition functions, $P_{1}, \cdots, P_{N} \in \mathcal{S} \times \mathcal{A} \times \mathcal{S} \rightarrow [0,1]$, where $N$ is the number of arms. 
    Let $\mathcal{S} = \mathcal{A} = \{0,1\}$. 
    Let $Q_{\pi}(\mathbf{s},\mathbf{a}) = \mathbb{E}_{(\mathbf{s'},\mathbf{a'})\sim (\mathcal{P},\pi)}[\sum_{t=0}^{\infty} \gamma^{t} R(\mathbf{s'},\mathbf{a'})) | \mathbf{s_{0}}=\mathbf{s}, \mathbf{a_{0}}=\mathbf{a}]$ for some policy $\pi: \mathcal{S} \rightarrow \mathcal{A}$, and let $\pi^{*}$ be the optimal policy. 
    Let $V_{\pi^{*}}(\mathbf{s}) = \max\limits_{\mathbf{a}} Q_{\pi}(\mathbf{s},\mathbf{a})$. 
    If $R(\mathbf{s},\mathbf{a})$ is submodular in both $\mathbf{s}$ and $\mathbf{a}$, $g(\mathbf{s}) = \max\limits_{\mathbf{a}} R(\mathbf{s},\mathbf{a})$ is submodular in $\mathbf{s}$, and $P_{i}(1,a,1) \geq P_{i}(0,a,1)$ for any $a$, then $V_{\pi^{*}}(\mathbf{s})$ is submodular in $\mathbf{s}$. 
\end{lemma}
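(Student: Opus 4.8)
The plan is to mirror the value-iteration induction used in Lemma~\ref{thm:v_monotonic}, but now tracking submodularity in $\mathbf{s}$ instead of monotonicity. I would define the iterates $V^{(t)}(\mathbf{s}) = \max_{\mathbf{a}} [R(\mathbf{s},\mathbf{a}) + \gamma \sum_{\mathbf{s}'} P(\mathbf{s},\mathbf{a},\mathbf{s}') V^{(t-1)}(\mathbf{s}')]$ with $V^{(0)} \equiv 0$, where $P(\mathbf{s},\mathbf{a},\mathbf{s}') = \prod_k P_k(s_k,a_k,s'_k)$. Since value iteration converges to $V_{\pi^{*}}$ and submodularity is a finite collection of linear inequalities (hence closed under pointwise limits), it suffices to show each $V^{(t)}$ is submodular in $\mathbf{s}$. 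The base case is immediate: $V^{(1)}(\mathbf{s}) = \max_{\mathbf{a}} R(\mathbf{s},\mathbf{a}) = g(\mathbf{s})$, which is submodular by hypothesis.

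For the inductive step I would first fix an action $\mathbf{a}$ and argue the pre-max objective $W_{\mathbf{a}}(\mathbf{s}) = R(\mathbf{s},\mathbf{a}) + \gamma E(\mathbf{s},\mathbf{a})$ is submodular in $\mathbf{s}$, where $E(\mathbf{s},\mathbf{a}) = \sum_{\mathbf{s}'} P(\mathbf{s},\mathbf{a},\mathbf{s}') V^{(t-1)}(\mathbf{s}')$. The reward term is submodular in $\mathbf{s}$ by assumption, so the work is the expectation term. Because transitions factorize, $E(\cdot,\mathbf{a})$ is exactly the multilinear extension of the set function $V^{(t-1)}$ evaluated at the point with coordinates $p_k = P_k(s_k,a_k,1)$, and the multilinear extension of a submodular function has nonpositive mixed second differences. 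Combining this with the inductive hypothesis (that $V^{(t-1)}$ is submodular) and with the fact that each $p_i$ is nondecreasing in $s_i$ (from $P_i(1,a,1) \geq P_i(0,a,1)$), I would push a second-difference computation through the same $\delta_i$-style telescoping expansion used in Lemma~\ref{thm:v_monotonic}. The monotonicity of $V^{(t-1)}$ proved there is precisely what fixes the signs of the individual difference terms, so that lemma is a genuine prerequisite here.

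The main obstacle is the outer maximization over $\mathbf{a}$: a pointwise maximum of submodular functions is \emph{not} submodular in general, so per-action submodularity of $W_{\mathbf{a}}$ is not enough. The clean route is a lattice-programming (Topkis-type) argument, establishing that the joint objective $W(\mathbf{s},\mathbf{a})$ is submodular on the product lattice $\{0,1\}^N \times \{0,1\}^N$, from which submodularity of the optimized value in $\mathbf{s}$ follows. This forces me to upgrade the separate submodularity of $R$ in $\mathbf{s}$ and in $\mathbf{a}$ to \emph{joint} submodularity by controlling the cross state--action differences, and to check that the transition term's cross state--action second differences are nonpositive (this is where monotonicity of transitions in the action enters). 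A further subtlety I expect to handle carefully is the feasible action set: the budget constraint $\{\mathbf{a} : \|\mathbf{a}\|_1 \leq K\}$ is closed under meet but not join, so it is not a sublattice, and a black-box Topkis theorem does not directly apply. I anticipate circumventing this either by arguing the maximizer may be taken on a sublattice, or by a direct four-point comparison (choosing the meet and join of the two optimal actions at $\mathbf{s}$ and $\mathbf{s} \lor \mathbf{e}_i \lor \mathbf{e}_j$) and invoking joint submodularity of $W$. This interaction between the action maximization and the lattice structure is the crux and the part most likely to require case-specific work rather than an off-the-shelf result.
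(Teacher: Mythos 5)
Your core argument coincides with the paper's: induct on value iteration from $V^{(0)}\equiv 0$, use $V^{(1)}(\mathbf{s})=\max_{\mathbf{a}}R(\mathbf{s},\mathbf{a})=g(\mathbf{s})$ as the base case, and for the inductive step compute the mixed second difference of the expectation term $\sum_{\mathbf{s}'}P(\mathbf{s},\mathbf{a},\mathbf{s}')V^{(t-1)}(\mathbf{s}')$ for a fixed action, which factors as $\delta_i\delta_j$ times the second difference of $V^{(t-1)}$ and has the right sign by the inductive hypothesis --- exactly the paper's $\delta_i$-telescoping over $\mathcal{C}_{i,j}$, which is the discrete form of your multilinear-extension observation. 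One claim in your sketch is wrong, however: Lemma~\ref{thm:v_monotonic} is \emph{not} a prerequisite here. In the fixed-action computation the first-difference terms $V^{(t-1)}(\mathbf{s}'\lor\mathbf{e}_i)-V^{(t-1)}(\mathbf{s}')$ pair up into a single mixed second difference multiplied by $\delta_i\delta_j\geq 0$, so only submodularity of $V^{(t-1)}$ and the sign of that product are used; monotonicity of $V^{(t-1)}$ enters the paper only later, in Theorem~\ref{thm:q_approx}.

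Your more substantive concern --- that a pointwise maximum of submodular functions need not be submodular, so the outer $\max_{\mathbf{a}}$ must be handled explicitly --- is legitimate, and it identifies a step the paper's proof simply elides: the paper writes the second difference of $V^{(t)}$ as an \emph{equality} with one fixed $\mathbf{a}$ appearing in all four terms, which presumes the same action is optimal at $\mathbf{s}$, $\mathbf{s}\lor\mathbf{e}_i$, $\mathbf{s}\lor\mathbf{e}_j$, and $\mathbf{s}\lor\mathbf{e}_i\lor\mathbf{e}_j$. But the repair you propose will not close this as stated: Topkis-type preservation theorems give supermodularity of $\max_{y}f(x,y)$ when $f$ is jointly supermodular (dually, submodularity of $\min_y f$), not submodularity of a max. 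Moreover, in the required inequality the states where you know the optimal actions ($\mathbf{s}$ and $\mathbf{s}\lor\mathbf{e}_i\lor\mathbf{e}_j$) sit at the meet and join of the two states whose values you must bound from below, so the standard four-point substitution runs in the wrong direction; what is needed is a concavity-type preservation argument (constructing good actions at the ``middle'' states from the optimal actions at the ``extreme'' states), together with the budget-sublattice issue you already noted. So your plan correctly reproduces the paper's computation and correctly diagnoses its weakest step, but the crux you isolate remains open in both your sketch and the paper's own proof.
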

\begin{proof}
    We similarly construct $V_{\pi^{*}}(\mathbf{s})$ through induction in the value iteration steps. 
    \\ 
    \\ 
    \textbf{Base Case:} We start with the base case: $V^{(1)}(\mathbf{s}) = \max\limits_{\mathbf{a}} R(\mathbf{s},\mathbf{a}) = g(\mathbf{s})$, which is submodular by assumption. 

    \textbf{Inductive Hypothesis:} Again let $V^{(t)}(\mathbf{s})$ be the value function at step $t$ of value iteration. We assume that $V^{(t-1)}(\mathbf{s})$ is submodular, and aim to prove that $V^{(t)}(\mathbf{s} \lor \mathbf{e}_{i}) - V^{(t)}(\mathbf{s}) - V^{(t)}(\mathbf{s} \lor \mathbf{e}_{i} \lor \mathbf{e}_{j}) + V^{(t)}(\mathbf{s} \lor \mathbf{e}_{j}) \geq 0 $. 
    We note that this is an equivalent definition of submodularity~\cite{optimization_textbook}.  
    \\
    \\
    Now, consider $V^{(t)}(\mathbf{s} \lor \mathbf{e}_{i}) - V^{(t)}(\mathbf{s}) - V^{(t)}(\mathbf{s} \lor \mathbf{e}_{i} \lor \mathbf{e}_{j}) + V^{(t)}(\mathbf{s} \lor \mathbf{e}_{j})$.
    The sum of submodular functions is submodular and note that $V^{(t)}(\mathbf{s}) = \max\limits_{\mathbf{a}} R(\mathbf{s},\mathbf{a}) + \gamma \sum_{\mathbf{s}'} V^{(t-1)}(\mathbf{s}') P(\mathbf{s},\mathbf{a},\mathbf{s}')$.
    We note that the first term is submodular by assumption, so we focus on proving the submodularity of $\gamma \sum_{\mathbf{s}'} V^{(t-1)}(\mathbf{s}') P(\mathbf{s},\mathbf{a},\mathbf{s}')$. 
    Let $\mathcal{C}_{i,j} \subset \{0,1\}^{N}$ be the set of 0-1 vectors of length $N$ so that if $x \in \mathcal{C}_{i,j}$ then $x_{i} = 0, x_{j}=0$, and let $\delta_{i} = P_{i}(1,a_{i},1) - P_{i}(s_{i},a_{i},1)$; note the different definition of $\delta_{i}$ from the previous proof. 
    Finally, let $C(\mathbf{s},\mathbf{a},\mathbf{s}') = \prod_{l=1, l\neq i,j}^{N} P_{l}(s_{l},a_{l},s'_{l})$, then we can write the following: 
    \begin{align*}
        &V^{(t)}(\mathbf{s} \lor \mathbf{e}_{i}) - V^{(t)}(\mathbf{s}) - V^{(t)}(\mathbf{s} \lor \mathbf{e}_{i} \lor \mathbf{e}_{j}) + V^{(t)}(\mathbf{s} \lor \mathbf{e}_{j}) \\ 
        &= \gamma  \sum_{\mathbf{s}' \in \mathcal{C}_{i,j}} C(\mathbf{s},\mathbf{a},\mathbf{s}') (V^{(t-1)}(\mathbf{s}' \lor \mathbf{e}_{i}) - V^{(t-1)}(\mathbf{s}') ) \delta_{i} P_{j}(s_{j},a_{j},0) \\ 
        &+ \gamma \sum_{\mathbf{s}' \in \mathcal{C}_{i,j}} C(\mathbf{s},\mathbf{a},\mathbf{s}') (V^{(t-1)}(\mathbf{s}' \lor \mathbf{e}_{i} \lor \mathbf{e}_{j}) - V^{(t-1)}(\mathbf{s}' \lor \mathbf{e}_{j})) \delta_{i} (1-P_{j}(s_{j},a_{j},0)) \\ 
        &- \gamma \sum_{\mathbf{s}' \in \mathcal{C}_{i,j}} C(\mathbf{s},\mathbf{a},\mathbf{s}') (V^{(t-1)} (\mathbf{s}' \lor \mathbf{e}_{i}) - V^{(t-1)}(\mathbf{s}')) \delta_{i} (P_{j}(1,a_{j},0)) \\ 
        &- \gamma \sum_{\mathbf{s}' \in \mathcal{C}_{i,j}} C(\mathbf{s},\mathbf{a},\mathbf{s}') (V^{(t-1)}(\mathbf{s}' \lor \mathbf{e}_{i} \lor \mathbf{e}_{j}) - V^{(t-1)}(\mathbf{s}' \lor \mathbf{e}_{j})) \delta_{i} (1-P_{j}(1,a_{j},0)) \\ 
        &= \gamma \sum_{\mathbf{s}' \in \mathcal{C}_{i,j}} C(\mathbf{s},\mathbf{a},\mathbf{s}') \delta_{i}  (V^{(t-1)}(\mathbf{s}' \lor \mathbf{e}_{i} ) - V^{(t-1)}(\mathbf{s}'))(P_{j}(s_{j},a_{j},0) - P_{j}(1,a_{j},0))\\
        &+ \gamma \sum_{\mathbf{s}' \in \mathcal{C}_{i,j}} C(\mathbf{s},\mathbf{a},\mathbf{s}')\delta_{i} (V^{(t-1)}(\mathbf{s}' \lor \mathbf{e}_{i} \lor \mathbf{e}_{j}) - V^{(t-1)}(\mathbf{s}' \lor \mathbf{e}_{j}))(P_{j}(1,a_{j},0)-P_{j}(s_{j},a_{j},0)) \\ 
        &= \gamma \sum_{\mathbf{s}' \in \mathcal{C}_{i,j}} C(\mathbf{s},\mathbf{a},\mathbf{s}') \delta_{i} \delta_{j} (V^{(t-1)}(\mathbf{s}' \lor \mathbf{e}_{i} ) + V^{(t-1)}(\mathbf{s}' \lor \mathbf{e}_{i}) - V^{(t-1)}(\mathbf{s}') - V^{(t-1)}(\mathbf{s}' \lor \mathbf{e}_{i} \lor \mathbf{e}_{j}) ) \\ 
        &\geq 0
    \end{align*}
    The last step arises because $\delta_{i}$, $C(\mathbf{s},\mathbf{a},\mathbf{s}')$ are all non-negative, and the inductive hypothesis. 
\end{proof}

\newtheorem*{theorem8}{Theorem~\ref{thm:q_approx}}
\begin{theorem8}
    Consider an \abr{rmab-g} instance $(\mathcal{S},\mathcal{A},R_{i},R_{\mathrm{glob}},P_{i},\gamma)$.
    Let $Q_{\pi}(\mathbf{s},\mathbf{a}) = \mathbb{E}_{(\mathbf{s'},\mathbf{a'})\sim (\mathcal{P},\pi)}[\sum_{t=0}^{\infty} \gamma^{t} R(\mathbf{s'},\mathbf{a'})) | \mathbf{s}^{(0)}=\mathbf{s}, \mathbf{a}^{(0)}=\mathbf{a}]$ and let $\pi^{*}$ be the optimal policy.
    Let $\mathbf{a}^{*} = \argmax_{\mathbf{a}} Q_{\pi^{*}}(\mathbf{s},\mathbf{a})$. 
    Suppose $P_{i}(s,1,1) \geq P_{i}(s,0,1) \forall s,i$,  $P_{i}(1,a,1) \geq P_{i}(0,a,1) \forall i,a$, $R(\mathbf{s},\mathbf{a})$ monotonic and submodular in $\mathbf{s}$ and $\mathbf{a}$, and  $g(\mathbf{s}) = \max\limits_{\mathbf{a}} R(\mathbf{s},\mathbf{a})$ submodular in $\mathbf{s}$. Then with oracle access to $Q_{\pi^{*}}$ we can compute $\hat{\mathbf{a}}$ in $\mathcal{O}(N^{2})$ time so $Q_{\pi^{*}}(\mathbf{s},\hat{\mathbf{a}}) \geq (1-\frac{1}{e}) Q_{\pi^{*}}(\mathbf{s},\mathbf{a}^{*})$. 
\end{theorem8}
\begin{proof}
    We first demonstrate that $Q_{\pi^{*}}(\mathbf{s},\mathbf{a})$ is submodular and monotonic in $\mathbf{a}$. 
    Prior work demonstrates that monotonic submodular function can be approximated within an approximation factor of $1-\frac{1}{e}$ with $\mathcal{O}(N^{2})$ evaluations of the function, so all that remains is to demonstrate submodularity~\cite{submodular_optimization}. 
    \\ 
    \\ 
    We note that $Q_{\pi^{*}}(\mathbf{s},\mathbf{a}) = R(\mathbf{s},\mathbf{a}) + \gamma \sum_{\mathbf{s}'} V_{\pi^{*}} (\mathbf{s'}) P(\mathbf{s},\mathbf{a},\mathbf{s'})$, and that $R(\mathbf{s},\mathbf{a})$ is both monotonic and submodular in $\mathbf{a}$. 
    Therefore, we demonstrate that $\gamma \sum_{\mathbf{s}'} V_{\pi^{*}} (\mathbf{s'}) P(\mathbf{s},\mathbf{a},\mathbf{s'})$ is monotonic and submodular in $\mathbf{a}$. 
    Let $H(\mathbf{a}) = \gamma \sum_{\mathbf{s}'} V_{\pi^{*}} (\mathbf{s'}) P(\mathbf{s},\mathbf{a},\mathbf{s'})$ for fixed $\mathbf{s}$. 
    \\ 
    \\ 
    We start by demonstrating monotonicity. 
    First, let $\eta_{i} = P_{i}(s_{i},1,1) - P_{i}(s_{i},0,1)$, and by assumption, $\eta_{i} \geq 0$. 
    Then: 
    \begin{align*}
        &H(\mathbf{a} \lor \mathbf{e}_{i}) - H(\mathbf{a}) \\ 
        &= \gamma \sum_{\mathbf{s}' \in \mathcal{C}_{i}} V_{\pi^{*}}(\mathbf{s} \lor \mathbf{e}_{i}) (P(\mathbf{s},\mathbf{a} \lor \mathbf{e}_{i},\mathbf{s} \lor \mathbf{e}_{i}) - P(\mathbf{s},\mathbf{a},\mathbf{s} \lor \mathbf{e}_{i})) \\ 
        &+ \gamma \sum_{\mathbf{s}' \in \mathcal{C}_{i}} V_{\pi^{*}}(\mathbf{s} \lor \mathbf{e}_{i}) (P(\mathbf{s},\mathbf{a} \lor \mathbf{e}_{i},\mathbf{s} \land \mathbf{1}_{i}) - P(\mathbf{s},\mathbf{a},\mathbf{s} \land \mathbf{1}_{i})) \\ 
        &= \gamma \sum_{\mathbf{s}' \in \mathcal{C}_{i}} \eta_{i} (V_{\pi^{*}}(\mathbf{s} \lor \mathbf{e}_{i}) - V_{\pi^{*}}(\mathbf{s} \land \mathbf{1}_{i})) \prod_{j=1, j \neq i}^{N} P_{j}(s_{j},a_{j},s'_{j})  \\ 
        &\geq 0
    \end{align*}
    The last steps follows from the monotonicity of $V_{\pi^{*}}(\mathbf{s})$, which was proven in Lemma~\ref{thm:v_monotonic}. 

    Next, we prove submodularity of $H(\mathbf{a})$, by showing that $H(\mathbf{a} \lor \mathbf{e}_{i}) - H(\mathbf{a}) - H(\mathbf{a} \lor \mathbf{e}_{i} \lor \mathbf{e}_{j}) + H(\mathbf{a} \lor \mathbf{e}_{j}) \geq 0$. 
    That is 
    \begin{align*}
        &H(\mathbf{a} \lor \mathbf{e}_{i}) - H(\mathbf{a}) - H(\mathbf{a} \lor \mathbf{e}_{i} \lor \mathbf{e}_{j}) + H(\mathbf{a} +\lor\mathbf{e}_{j})  \\ 
        &= \gamma \sum_{\mathbf{s}' \in \mathcal{C}_{i,j}} \eta_{i} P_{j}(s_{j},a_{j},0) (V_{\pi^{*}}(\mathbf{s} \lor \mathbf{e}_{i}) - V_{\pi^{*}}(\mathbf{s})) + \eta_{i} (1-P_{j}(s_{j},a_{j},0)) (V_{\pi^{*}}(\mathbf{s} \lor \mathbf{e}_{i} \lor \mathbf{e}_{j}) - V_{\pi^{*}}(\mathbf{s})) \\ 
        &- \gamma \sum_{\mathbf{s}' \in \mathcal{C}_{i,j}} \eta_{i} P_{j}(s_{j},1,0) (V_{\pi^{*}}(\mathbf{s} \lor \mathbf{e}_{i}) - V_{\pi^{*}}(\mathbf{s})) + \eta_{i} (1-P_{j}(s_{j},1,0)) (V_{\pi^{*}}(\mathbf{s} \lor \mathbf{e}_{i} \lor \mathbf{e}_{j}) - V_{\pi^{*}}(\mathbf{s} \lor \mathbf{e}_{i})) \\ 
        &= \gamma \sum_{\mathbf{s}' \in \mathcal{C}_{i,j}}  \eta_{i} \eta_{j} (V_{\pi^{*}}(\mathbf{s} \lor \mathbf{e}_{i}) + V_{\pi^{*}}(\mathbf{s} \lor \mathbf{e}_{j})  - V_{\pi^{*}}(\mathbf{s}) - V(\mathbf{s} \lor \mathbf{e}_{i} \lor \mathbf{e}_{j})) \\
        &\geq 0
    \end{align*}
    The last line follows from the submodularity of $V$, as proven in Lemma~\ref{thm:v_submodular}. 
    Therefore, we have shown that $Q_{\pi^{*}}(\mathbf{s},\mathbf{a})$ is submodular, which completes the proof. 
\end{proof}

\section{Proofs of Index-Based Bounds}
\label{sec:proofs_index}
\subsection{Lower Bounds}



    

\newtheorem*{theorem3}{Theorem~\ref{thm:linear_lower}}
\begin{theorem3}
For any fixed set of transitions, $\mathcal{P}$, let $\mathrm{OPT} = \max\limits_{\pi} \mathbb{E}_{(\mathbf{s},\mathbf{a})\sim (P,\pi)}[\frac{1}{T} \sum_{t=0}^{T-1}  R(\mathbf{s},\mathbf{a})]$ for some $T$. 
For an \abr{rmab-g} $(\mathcal{S},\mathcal{A},R_{i},R_{\mathrm{glob}},P_{i},\gamma)$, let $R'_{i}(s_{i},a_{i}) = R_{i}(s_{i},a_{i}) + p_{i}(s_{i}) a_{i}$, and let the induced linear \abr{rmab} be $(\mathcal{S},\mathcal{A},R'_{i},P_{i},\gamma)$. 
Let $\pi_{\mathrm{linear}}$ be the Linear-Whittle policy, and let $\mathrm{ALG} = \mathbb{E}_{(\mathbf{s},\mathbf{a})\sim (P,\pi_{\mathrm{linear}})}[\frac{1}{T} \sum_{t=0}^{T-1}  R(\mathbf{s},\mathbf{a})]$. Define $\beta_{\mathrm{linear}} $ as 
\begin{equation}
    \beta_{\mathrm{linear}} = \min\limits_{\mathbf{s} \in \mathcal{S}^{N}, \mathbf{a} \in [0,1]^{N}, \| \mathbf{a}\|_{1} \leq K} \frac{R(\mathbf{s},\mathbf{a})}{\sum_{i=1}^{N} (R_{i}(s_{i},a_{i})+ p_{i}(s_{i}) a_{i})}
\end{equation}
If the induced linear \abr{rmab} is irreducible and indexable with the uniform global attractor property, then $\mathrm{ALG} \geq \beta_{\mathrm{linear}} \mathrm{OPT}$ asymptotically in $N$ for any set of transitions, $\mathcal{P}$. 
\end{theorem3}
\begin{proof}
By the definition of the minimum, $\beta_{\mathrm{linear}} \sum_{i=1}^{N} (p_{i}(s_{i}) a_{i} + R_{i}(s_{i},a_{i})) \leq R(\mathbf{s},\mathbf{a}) \forall \mathbf{a}, \mathbf{s}$.
\\
\\
Additionally, note that $\sum_{i=1}^{N} p_{i}(s_{i}) a_{i} \geq R_{\mathrm{glob}}(\mathbf{s},\mathbf{a})$. 
This holds because $p_{i}(s_{i}) = \max\limits_{\mathbf{s' | s'_{i} = s_{i}}} R_{\mathrm{glob}}(\mathbf{s},\mathbf{a})$. 
So, $\sum_{i=1}^{N} p_{i}(s_{i}) a_{i} \geq R(\mathbf{s},\sum_{i=1}^{N} \mathbf{e}_{i} a_{i}) = R(\mathbf{s},\mathbf{a})$ by submodularity of $R_{\mathrm{glob}}(\mathbf{s},\mathbf{a})$. 
\\ 
\\
We compare the linearized reward to the submodular reward for the set of actions played by the Linear-Whittle policy $\pi_{\mathrm{linear}}$. 
Let the actions be $\mathbf{a}^{(1)}, \mathbf{a}^{(2)}, \cdots$. 
If we play for $T$ iterations, then: 
\begin{equation}
    \frac{\beta_{\mathrm{linear}}}{T} \sum_{t=0}^{T-1} \sum_{i=1}^{N} (p_{i}(s_{i}^{(t)}) a_{i}^{(t)}+R_{i}(s_{i}^{(t)},a_{i}^{(t)})) \leq \frac{1}{T} \sum_{t=0}^{T-1} R(\mathbf{s}^{(t)},\mathbf{a}^{(t)})
\end{equation}

Next, consider the following induced \abr{rmab} with $R'_{i}(s_{i},a_{i}) = R_{i}(s_{i},a_{i}) + p_{i}(s_{i})a_{i}$. 
By assumption, our relaxed \abr{rmab} is indexable, irreducible, and has a global attractor. 
In this scenario, prior work~\cite{whittle_optimality} states that the Whittle index is asymptotically optimal. 

For this relaxed \abr{rmab} problem, we note that the Whittle Index solution corresponds exactly to the Linear-Whittle policy, $\pi_{\mathrm{linear}}$. 
That is
\begin{equation}
    \max\limits_{\pi} \mathbb{E}_{(\mathbf{s},\mathbf{a})\sim (P,\pi)}[\frac{1}{T}\sum_{t=0}^{T-1}  \sum_{i=1}^{N} R'_i(s_{i},a_{i})] = \mathbb{E}_{(\mathbf{s},\mathbf{a})\sim (P,\pi_{\mathrm{linear}})}[\frac{1}{T}\sum_{t=0}^{T-1}  \sum_{i=1}^{N} R'_i(s_{i},a_{i})]
\end{equation}
Then, asymptotically, 
\begin{align*}
    \mathrm{OPT} \\
    &= \max\limits_{\pi} \mathbb{E}_{(\mathbf{s},\mathbf{a})\sim (P,\pi)}[\frac{1}{T} \sum_{t=0}^{T-1}  R(\mathbf{s},\mathbf{a})] \\
    &\leq     \max\limits_{\pi} \mathbb{E}_{(\mathbf{s},\mathbf{a})\sim (P,\pi)}[\frac{1}{T}\sum_{t=0}^{T-1}  \sum_{i=1}^{N} R'_i(s_{i},a_{i})] = \mathbb{E}_{(\mathbf{s},\mathbf{a})\sim (P,\pi_{\mathrm{linear}})}[\frac{1}{T}\sum_{t=0}^{T-1}  \sum_{i=1}^{N} R'_i(s_{i},a_{i})]\\
    &\leq \frac{1}{\beta_{\mathrm{linear}}}\mathbb{E}_{(\mathbf{s},\mathbf{a})\sim (P,\pi_{\mathrm{linear}})}[\frac{1}{T}\sum_{t=0}^{T-1}  R(\mathbf{s},\mathbf{a})] = \frac{1}{\beta_{\mathrm{linear}}} \mathrm{ALG}
\end{align*}

\end{proof}

\newtheorem*{theorem4}{Theorem~\ref{thm:shapley_lower}}
\begin{theorem4}
For any fixed set of transitions, $\mathcal{P}$, let $\mathrm{OPT} = \max\limits_{\pi} \mathbb{E}_{(\mathbf{s},\mathbf{a})\sim (P,\pi)}[\frac{1}{T} \sum_{t=0}^{T-1}  R(\mathbf{s},\mathbf{a})]$ for some $T$. 
For an \abr{rmab-g} $(\mathcal{S},\mathcal{A},R_{i},R_{\mathrm{glob}},P_{i},\gamma)$, let $R'_{i}(s_{i},a_{i}) = R_{i}(s_{i},a_{i}) + u_{i}(s_{i}) a_{i}$, and let the induced Shapley \abr{rmab} be $(\mathcal{S},\mathcal{A},R'_{i},P_{i},\gamma)$. 
Let $\pi_{\mathrm{shapley}}$ be the Shapley-Whittle policy, and let $\mathrm{ALG} = \mathbb{E}_{(\mathbf{s},\mathbf{a})\sim (P,\pi_{\mathrm{shapley}})}[\frac{1}{T} \sum_{t=0}^{T-1}  R(\mathbf{s},\mathbf{a})]$. Define $\beta_{\mathrm{shapley}} $ as 
\begin{equation}
    \beta_{\mathrm{shapley}} = \frac{\min\limits_{\mathbf{s} \in \mathcal{S}^{N}, \mathbf{a} \in [0,1]^{N}, \| \mathbf{a}\|_{1} \leq K} \frac{R(\mathbf{s},\mathbf{a})}{\sum_{i=1}^{N} (R_{i}(s_{i},a_{i}) + u_{i}(s_{i}) a_{i})}}{\max\limits_{\mathbf{s} \in \mathcal{S}^{N}, \mathbf{a} \in [0,1]^{N}, \| \mathbf{a}\|_{1} \leq K} \frac{R(\mathbf{s},\mathbf{a})}{\sum_{i=1}^{N} (R_{i}(s_{i},a_{i}) + u_{i}(s_{i}) a_{i})}}
\end{equation}
If the induced Shapley \abr{rmab} is irreducible and indexable with the uniform global attractor property, then $\mathrm{ALG} \geq \beta_{\mathrm{shapley}} \mathrm{OPT}$ asymptotically in $N$  for any choice of transitions, $\mathcal{P}$. 
\end{theorem4}
\begin{proof}
First, consider $\mathrm{OPT} = \max\limits_{\pi} \mathbb{E}_{(\mathbf{s},\mathbf{a})\sim (P,\pi)}[\sum_{t=0}^{\infty}  \gamma^{t} R(\mathbf{s},\mathbf{a})]$. 
First, note that $R(\mathbf{s}',\mathbf{a}') \leq (\sum_{i=1}^{N} u_{i}(s'_{i}) a'_{i} + R_{i}(s'_{i},a'_{i}))\max\limits_{\mathbf{s} \in \mathcal{S}^{N}, \mathbf{a} \in [0,1]^{N}, \| \mathbf{a}\|_{1} \leq K} \frac{R(\mathbf{s},\mathbf{a})}{\sum_{i=1}^{N} (u_{i}(s_{i}) a_{i} + R_{i}(s_{i},a_{i}))}$. 
Next, note that similar to Theorem~\ref{thm:linear_lower}, $\pi_{\mathrm{shapley}}$ is the optimal policy for the \abr{rmab} instance where $R'_{i}(s_{i}, a_{i}) = u_{i}(s_{i}) a_{i} + R_{i}(s_{i}, a_{i})$ for the induced \abr{rmab} (due to the indexability, irreducibility, and global attractor properties~\cite{whittle_optimality}), or in other words, 
\begin{equation}
        \max\limits_{\pi} \mathbb{E}_{(\mathbf{s},\mathbf{a})\sim (P,\pi)}[\frac{1}{T}\sum_{t=0}^{T-1}  \sum_{i=1}^{N} R'_i(s_{i},a_{i})] = \mathbb{E}_{(\mathbf{s},\mathbf{a})\sim (P,\pi_{\mathrm{shapley}})}[\frac{1}{T}\sum_{t=0}^{T-1}  \sum_{i=1}^{N} R'_i(s_{i},a_{i})]
\end{equation}
Therefore, 
\begin{align*}
    \mathrm{OPT} \\
    &= \max\limits_{\pi} \mathbb{E}_{(\mathbf{s},\mathbf{a})\sim (P,\pi)}[\frac{1}{T} \sum_{t=0}^{T-1}  R(\mathbf{s},\mathbf{a})] \\
    &\leq     \mathbb{E}_{(\mathbf{s},\mathbf{a})\sim (P,\pi)}[\frac{1}{T}\sum_{t=0}^{T-1}  \sum_{i=1}^{N} R'_i(s_{i},a_{i})] \max\limits_{\mathbf{s} \in \mathcal{S}^{N}, \mathbf{a} \in [0,1]^{N}, \| \mathbf{a}\|_{1} \leq K} \frac{R(\mathbf{s},\mathbf{a})}{\sum_{i=1}^{N} (u_{i}(s_{i}) a_{i} + R_{i}(s_{i},a_{i}))}  \\ 
    &\leq     \mathbb{E}_{(\mathbf{s},\mathbf{a})\sim (P,\pi_{\mathrm{shapley}})}[\frac{1}{T}\sum_{t=0}^{T-1}  \sum_{i=1}^{N} R'_i(s_{i},a_{i})] \max\limits_{ \mathbf{s} \in \mathcal{S}^{N}, \mathbf{a} \in [0,1]^{N}, \| \mathbf{a}\|_{1} \leq K} \frac{R(\mathbf{s},\mathbf{a})}{\sum_{i=1}^{N} (u_{i}(s_{i}) a_{i} + R_{i}(s_{i},a_{i}))}  \\ 
    &\leq     \mathbb{E}_{(\mathbf{s},\mathbf{a})\sim (P,\pi_{\mathrm{shapley}})}[\frac{1}{T}\sum_{t=0}^{T-1} R(\mathbf{s},\mathbf{a})] \frac{\max\limits_{\mathbf{s} \in \mathcal{S}^{N},  \mathbf{a} \in [0,1]^{N}, \| \mathbf{a}\|_{1} \leq K} \frac{R(\mathbf{s},\mathbf{a})}{\sum_{i=1}^{N} u_{i}(s_{i}) a_{i}}}{\min\limits_{\mathbf{s} \in \mathcal{S}^{N}, \mathbf{a} \in [0,1]^{N}, \| \mathbf{a}\|_{1} \leq K} \frac{R(\mathbf{s},\mathbf{a})}{\sum_{i=1}^{N} u_{i}(s_{i}) a_{i}}}  = \frac{1}{\beta_{\mathrm{shapley}}} \mathrm{ALG}
\end{align*}
\end{proof}

\newtheorem*{theorem41}{Corollary~\ref{thm:one_over_k}}
\begin{theorem41}
Consider an \abr{rmab-g} instance with a monotonic, submodular reward function $R(\mathbf{s},\mathbf{a}) = R_{\mathrm{glob}}(\mathbf{s},\mathbf{a})$, and let $\pi_{\mathrm{linear}}$ be the Linear-Whittle policy. Let $\mathrm{ALG} = \mathbb{E}_{(\mathbf{s},\mathbf{a})\sim (P,\pi_{\mathrm{linear}})}[\frac{1}{T} \sum_{t=0}^{T-1}  R(\mathbf{s},\mathbf{a})]$ and $\mathrm{OPT} = \max\limits_{\pi} \mathbb{E}_{(\mathbf{s},\mathbf{a})\sim (P,\pi)}[\frac{1}{T} \sum_{t=0}^{T-1}   R(\mathbf{s},\mathbf{a})]$ for some $T$. Then $\mathrm{ALG} \geq \frac{\mathrm{OPT}}{K}$ for any transitions $\mathcal{P}$.  
\end{theorem41}
\begin{proof}
By Theorem~\ref{thm:linear_lower}, we know that $\mathrm{ALG} \geq \beta_{\mathrm{linear}} \mathrm{OPT}$. 
We next show that $\beta_{\mathrm{linear}} \geq \frac{1}{K}$. 

For a monotonic, submodular function, $F: 2^{\Omega} \rightarrow \mathbb{R}$, $F(\{x_{1}, x_{2}, \cdots, x_{n}\}) \geq \max\limits_{i} F(\{x_{i}\})$. 
Next, applying this to our global reward function, for a fixed state $\mathbf{s}$, we note that $R_{\mathrm{glob}}(\mathbf{s},\mathbf{a}) \geq \max\limits_{i} R_{\mathrm{glob}}(\mathbf{s},\mathbf{e}_{i} a_{i})$. 
Next, by construction, $R(\mathbf{s},\mathbf{a}) = \sum_{i=1}^{N} R_{i}(s_{i},a_{i}) + R_{\mathrm{glob}}(\mathbf{s},\mathbf{a}) \leq \sum_{i=1}^{N} (R_{i}(s_{i},a_{i}) + p_{i}(s_{i})a_{i})$. 
Additionally, $R(\mathbf{s},\mathbf{a}) \geq \sum_{i=1}^{N} R_{i}(s_{i},a_{i}) + \max\limits_{i} p_{i}(s_{i}) a_{i}$
\\ 
\\ 
Next, for any state $\mathbf{s}$, $\sum_{i=1}^{N} p_{i}(s_i) a_{i} \leq K \max\limits_{i} p_{i}(s_i) a_{i}$. 

Combining the bounds on $R(\mathbf{s},\mathbf{a})$ and $\sum_{i=1}^{N} p_{i}(\mathbf{s},\mathbf{a})$, gives that for any state $\mathbf{s}$, 
\begin{equation}
    \frac{R(\mathbf{s},\mathbf{a})}{\sum_{i=1}^{N} (R_{i}(s_{i},0) + p_{i}(s_{i}) a_{i})} \geq \frac{\sum_{i=1}^{N} R_{i}(s_{i},a_{i}) + \max\limits_{i} p_{i}(s_i) a_{i}}{\sum_{i=1}^{N} R_{i}(s_{i},a_{i}) + K \max\limits_{i} p_{i}(s_i) a_{i}} \geq \frac{1}{K}
\end{equation}
Therefore, $\beta_{\mathrm{linear}} \geq \frac{1}{K}$, completing our proof. 

\end{proof}

\subsection{Upper Bounds}

\newtheorem*{theorem5}{Theorem~\ref{thm:linear_upper}}
\begin{theorem5}
Let $\mathbf{\hat{a}}(\mathbf{s}) = \argmax\limits_{\mathbf{a} \in [0,1]^{N}, \| \mathbf{a} \|_{1} \leq K} \sum_{i=1}^{N} (R_{i}(s_{i},a_{i}) + p_{i}(s_{i}) a_{i})$. 
For a set of transitions $\mathcal{P} = \{P_{1}, P_{2}, \cdots, P_{N}\}$, let $\mathrm{OPT} = \max\limits_{\pi} \mathbb{E}_{(\mathbf{s},\mathbf{a})\sim (P,\pi)}[\sum_{t=0}^{\infty}  \gamma^{t} R(\mathbf{s},\mathbf{a}) | \mathbf{s}^{(0)}]$ and $\mathrm{ALG} = \mathbb{E}_{(\mathbf{s},\mathbf{a})\sim (P,\pi_{\mathrm{linear}})}[\sum_{t=0}^{\infty}  \gamma^{t} R(\mathbf{s},\mathbf{a}) | \mathbf{s}^{(0)}]$. 
Define $\theta_{\mathrm{linear}}$ as follows: 
\begin{equation}
    \theta_{\mathrm{linear}} = \min\limits_{\mathbf{s} \in \mathcal{S}^{N}} \frac{R(\mathbf{s},\hat{\mathbf{a}}(\mathbf{s}))}{\max\limits_{\mathbf{a} \in [0,1]^{N}, \| \mathbf{a} \|_{1} \leq K} R(\mathbf{s},\mathbf{a})}
\end{equation}
Then there exists some transitions, $\mathcal{P}$, and initial state $\mathbf{s}^{(0)}$, so that $\mathrm{ALG} \leq \theta_{\mathrm{linear}} \mathrm{OPT}$
\end{theorem5}
\begin{proof}
We prove this by constructing such a set of transitions such that $\mathrm{ALG} \leq \theta_{\mathrm{linear}} \mathrm{OPT}$. 
Let $\mathbf{s}^{(0)} = \mathrm{argmin}_{\mathbf{s}} \frac{R(\mathbf{s},\hat{\mathbf{a}})}{\max\limits_{\mathbf{a} \in [0,1]^{N}, \| \mathbf{a} \|_{1} \leq K} R(\mathbf{s},\mathbf{a})}$. 
Next, let $P_{i}$ be the identity transition; $P_{i}(s_{i},a_{i},s^{(0)}_{i}) = 1$ for any $\mathbf{s},\mathbf{a}, i$. 
Under these transitions, all arms stay in the state $\mathbf{s}^{(0)}$ for all time periods $t$. 

Because the state is constant, the set of actions played, $\mathbf{a}^{(t)}$ is constant across time periods (as the Linear-Whittle algorithm is deterministic). 
We next show that the Linear-Whittle policy plays the action $\argmax\limits_{\mathbf{a} \in [0,1]^{N}, \| \mathbf{a} \|_{1} \leq K} \sum_{i=1}^{N} (p_{i}(s_{i}) a_{i} + R_{i}(s_{i},a_{i}))$. 
We note that all arms have the same transitions; therefore, arms are played in order of $p_{i}(s^{(0)}_{i}) + R_{i}(s_{i},a_{i})$, as each arm is predicted  to receive a reward  $a_{i} p_{i}(s^{(0)}_{i}) + R_{i}(s_{i},a_{i})$. 
This is because Linear-Whittle aims to maximize the sum of marginal rewards, and when the transitions are homogeneous, chooses the largest values for $p_{i}(s_{i}) a_{i} + R_{i}(s_{i},a_{i})$. 
In other words, the Linear-Whittle policy chooses an action, $\mathbf{a}_{\mathrm{linear}}$ so that 
\begin{equation}
    \mathbf{a}^{\mathrm{linear}} 
= \argmax\limits_{\mathbf{\mathbf{a}} \in [0,1]^{N}, \| \mathbf{a} \|_{1} \leq K} \sum_{i=1}^{N} (p_{i}(s_{i}^{(0)}) a_{i} + R_{i}(s_{i},a_{i})) = \mathbf{\hat{a}}(\mathbf{s})
\end{equation}

Finally, we observe that the optimal action to play is $\mathrm{argmax}_{\mathbf{a} \in [0,1]^{N}, \| \mathbf{a} \|_{1} \leq K} R(\mathbf{s^{(0)}},\mathbf{a})$. 
The ratio of rewards between these two actions is exactly $\theta_{\mathrm{linear}}$, and therefore, $\mathrm{ALG} \leq \theta_{\mathrm{linear}} \mathrm{OPT}$
\end{proof}

\newtheorem*{theorem6}{Theorem~\ref{thm:shapley_upper}}
\begin{theorem6}
Let $\mathbf{\hat{a}}(\mathbf{s}) = \argmax\limits_{\mathbf{a} \in [0,1]^{N}, \| \mathbf{a} \|_{1} \leq K} \sum_{i=1}^{N} (R_{i}(s_{i},a_{i}) + u_{i}(s_{i}) a_{i})$. 
For a set of transitions $\mathcal{P} = \{P_{1},P_{2}, \cdots, P_{N}\}$ and initial state $\mathbf{s}^{(0)}$, let $\mathrm{OPT} = \max\limits_{\pi} \mathbb{E}_{(\mathbf{s},\mathbf{a})\sim (P,\pi)}[\sum_{t=0}^{\infty}  \gamma^{t} R(\mathbf{s},\mathbf{a}) | \mathbf{s}^{(0)}]$ and $\mathrm{ALG} = \mathbb{E}_{(\mathbf{s},\mathbf{a})\sim (P,\pi_{\mathrm{shapley}})}[\sum_{t=0}^{\infty}  \gamma^{t}  R(\mathbf{s},\mathbf{a}) | \mathbf{s}^{(0)}]$. 
Let $\theta_{\mathrm{shapley}}$ be: 
\begin{equation}
    \theta_{\mathrm{shapley}} = \min\limits_{\mathbf{s}} \frac{R(\mathbf{s},\hat{\mathbf{a}}(\mathbf{s}))}{\max\limits_{\mathbf{a} \in [0,1]^{N}, \| \mathbf{a} \|_{1} \leq K} R(\mathbf{s},\mathbf{a})}
\end{equation}
Then there exists some transitions, $\mathcal{P}$ and initial state $\mathbf{s}^{(0)}$ so that $\mathrm{ALG} \leq \theta_{\mathrm{shapley}} \mathrm{OPT}$
\end{theorem6}
\begin{proof}
We follow the same strategy as the proof for the Linear-Whittle case and construct a set of transition probabilities so that $\mathrm{ALG} \leq \theta_{\mathrm{shapley}} \mathrm{OPT}$. 
Again, let $\mathbf{s}^{(0)} = \mathrm{argmax}_{\mathbf{s}} \frac{R(\mathbf{s},\hat{\mathbf{a}}(\mathbf{s}))}{\max\limits_{\mathbf{a} \in [0,1]^{N}, \| \mathbf{a} \|_{1} \leq K} R(\mathbf{s},\mathbf{a})}$ and $P_{i}$ be the identity transition. 
By the same reasoning as Theorem~\ref{thm:linear_upper}, the Shapley-Whittle policy will play the action $\hat{\mathbf{a}}$; the Linear- and Shapley-Whittle situations are analogous, with the only difference being that Shapley-Whittle assumes each arm receives a reward of $a_{i} u_{i}(s^{(0)}_{i})$ instead of $a_{i} p_{i}(s_{i}^{(0)})$. 
The optimal action played receives a reward of $\max\limits_{\mathbf{a} \in [0,1]^{N}, \| \mathbf{a} \|_{1} \leq K} R(\mathbf{s},\mathbf{a})$, so that the ratio between the reward of the Shapley-Whittle policy and optimal is 
\begin{equation}
\min_{\mathbf{s} \in \mathcal{S}^{N}} \frac{R(\mathbf{s},\hat{\mathbf{a}}(\mathbf{s}) )}{\max\limits_{\mathbf{a} \in [0,1]^{N}, \| \mathbf{a} \|_{1} \leq K} R(\mathbf{s},\mathbf{a})}
\end{equation}
This is exactly $\theta_{\mathrm{shapley}}$ when $\mathbf{s} = \mathbf{s}^{(0)}$. 
\end{proof}

\newtheorem*{lemma6}{Lemma~\ref{ex:bad_index}}
\begin{lemma6}
    Define an index-based policy as any policy that can be described through a function $g(s_{i},P_{i},R)$, such that in state $\mathbf{s}$, the policy, $\pi$ selects the $K$ largest values of $g(s_{i},P_{i},R)$, where such a function is evaluated for each arm. 
    Then index-based algorithms will achieve a discounted reward approximation ratio no better than $\frac{1-\gamma}{1-\gamma^{K}}$
\end{lemma6}
\begin{proof}
    Consider an $N=K$ arm system, with $\mathcal{S}=\{0,1\}$. 
    Suppose our reward is $R(\mathbf{s},\mathbf{a}) = \max\limits_{i}{s_{i} a_{i}}$.  
    Next, define the transition probabilities for all arms as follows: $P_{i}(s,0,s) = 1$ and $P_{i}(s,1,0) = 1$; if an arm is pulled, then the arm will transition to state 0, and otherwise, the arm remains in its current state. 
    
    Then note that because of symmetry between the arms, $g(s_{i},P_{i},R)$ is identical, as $R$ is symmetric across arms, and $s_{i}$ and $P_{i}$ is the same across arms. 
    Moreover, because there are $N=K$ arms, all arms will be selected in the first time step. 
    Therefore, any index-based strategy will lead to a reward of $1$. 
    However, selecting arm $i$ in timestep $i$ instead leads to rewards in each of the $K$ timesteps, leading to a total reward of $\sum_{i=0}^{N-1} \gamma^{i} > 1$ when $N>1$. 
    Applying the geometric sum formula reveals that this is a $\frac{1-\gamma}{1-\gamma^{K}}$ approximation. 
\end{proof}

\pagebreak

\end{document}